\documentclass[10pt,a4paper,twocolumn]{article}
\usepackage[margin=19mm,columnsep=7mm]{geometry}
\usepackage[T1]{fontenc}
\usepackage[utf8]{inputenc}
\usepackage{lmodern}
\usepackage{seqsplit}
\usepackage{xurl,placeins}
\usepackage[numbers,sort&compress]{natbib}
\usepackage{amsmath,amssymb,amsthm,graphicx,booktabs,tabularx,array}
\usepackage{algorithm,algpseudocode}
\usepackage{dblfloatfix}
\usepackage{caption}
\usepackage[hidelinks,hyperfootnotes=false]{hyperref}
\hypersetup{pdfauthor={Saad Alqithami},pdftitle={CÆNL: Collapse-Based Active Neural Learning with Objective-Aligned Representation Control},pdfsubject={Representation regularization and control in active learning}}
\usepackage{tikz,pgfplots}
\usetikzlibrary{arrows.meta,positioning,calc}
\pgfplotsset{compat=1.18}
\algrenewcommand{\algorithmicrequire}{\textbf{Input:}}
\algrenewcommand{\algorithmicensure}{\textbf{Output:}}
\algrenewcommand{\algorithmicindent}{1.3em}
\algrenewcommand{\alglinenumber}[1]{\scriptsize\color{black!55}#1}
\algrenewcommand{\algorithmiccomment}[1]{\hfill\(\triangleright\)~\textit{#1}}
\makeatletter
\renewcommand{\fnum@algorithm}{\textbf{Algorithm~\thealgorithm}}
\makeatother

\definecolor{caenlaccent}{HTML}{176B73}
\usepackage{microtype}
\usepackage{flushend}

\providecommand{\doi}[1]{%
  \href{https://doi.org/#1}{doi:\nolinkurl{#1}}}

\newtheorem{proposition}{Proposition}
\newtheorem{lemma}[proposition]{Lemma}

\newcommand{\model}{C\AE{}NL}
\newcommand{\E}{\mathbb{E}}
\newcommand{\R}{\mathbb{R}}
\newcommand{\tr}{\operatorname{tr}}
\newcommand{\Clip}{\operatorname{clip}}
\newcommand{\CE}{\mathcal{L}_{\mathrm{CE}}}
\newcommand{\AD}{\mathrm{AD}}
\newcommand{\epss}{\epsilon_{\mathrm{s}}}
\newcommand{\epsn}{\epsilon_{\mathrm{n}}}
\newcommand{\epsp}{\epsilon_{\mathrm{spec}}}

\newcolumntype{Y}{>{\raggedright\arraybackslash}X}
\title{C\AE{}NL: Collapse-Based Active Neural Learning\\
with Objective-Aligned Representation Control}
\author{Saad Alqithami\\[2pt]
\small \href{mailto:alqithami@gmail.com}{alqithami@gmail.com}}
\date{13 October 2026}
\begin{document}
\twocolumn[\begin{@twocolumnfalse}
\maketitle
\begin{abstract}
Active learning changes the representations used to acquire subsequent
labels. We study this interaction through Collapse-Based Active Neural
Learning (C\AE{}NL), which separates acquisition, Dynamic Collapse
Regularization (DCR), and its coefficient policy. DCR penalizes a
normalized within-class to between-class scatter residual above a frozen
reference, with a separate class-separation safeguard. Fixed, scheduled,
deterministic-feedback, and learned-feedback policies apply this objective.
The analysis establishes scatter invariances, conditional local descent,
and coefficient bounds, and shows why clean geometry alone cannot
guarantee input robustness. Six-seed classification comparisons are
complemented by a 104-run study with separate control selection and
ten-seed evaluations on each of two backbones. At 60\% annotations on
ImageNet-100, entropy acquisition with DCR and MACC-Lite reaches
$73.420\pm0.512\%$ accuracy on ResNet-50 and $73.086\pm0.397\%$
on ResNet-18 (mean $\pm$ seed SD), exceeding entropy alone by 1.742 and
1.834 percentage points. Both gains survive Holm correction under
paired-$t$ and exact sign-flip tests, with unchanged feedback settings
across backbones. Independently tuned constants attain slightly higher
mean accuracy; the paired comparisons establish neither adaptive
superiority nor equivalence. Separate six-seed spectral studies yield a
small fixed-regularization C4 loss reduction, worse CIFAR-10 diffusion
FID, and inconclusive AudioCaps captioning effects. Earlier classification
studies find near-zero AutoAttack accuracy at $L_\infty$ radius $8/255$
and no added benefit from learned control. The results support the
evaluated combined classification treatment and identify the task
dependence of geometry-informed regularization.

\end{abstract}

\begin{center}\small
\textbf{Keywords:} neural collapse; active learning; representation geometry;
feedback control; regularization; discriminative learning
\end{center}
\vspace{0.5em}
\end{@twocolumnfalse}]

\section{Introduction}\label{sec:intro}

Active learning seeks to make effective use of a limited annotation
budget by selecting examples from an unlabeled
pool~\cite{settles2009active}. In deep learning, this selection depends
on a representation that is itself updated after new labels are acquired.
Uncertainty, coverage, and gradient diversity provide complementary
criteria for choosing examples~\cite{sener2018active,ash2020badge}, while
the training objective determines how those examples reshape the
representation. These two decisions interact: changing the learned
features can alter subsequent predictions and acquisitions, even when
the acquisition rule remains fixed. This interaction motivates studying
representation regularization as an explicit part of the active-learning
process.

Neural collapse offers a geometric motivation for such regularization.
During the terminal phase of classifier training, features from the
same class become concentrated, and class means and classifier weights
approach a structured configuration~\cite{papyan2020prevalence}.
Relative within-class compactness and between-class separation therefore
provide measurable properties of a developing representation. In an
active-learning setting, however, the labeled sample and its class
support change across rounds. A geometry measurement must be interpreted
with respect to that changing sample, and a desired geometric structure
must be connected to an objective that can influence training. The
relevant question is whether using this information improves prediction
at a given annotation budget.

Turning geometry into a feedback signal introduces two design questions.
First, the observed statistic should have an explicit relationship to
the loss term whose influence the controller changes. Otherwise, a
coefficient update may modulate an objective that does not directly
penalize the measured deviation. Second, the benefit of the representation
loss must be distinguished from the benefit of adapting its coefficient.
Adaptive loss weighting has established precedents, including gradient
balancing in multitask learning~\cite{chen2018gradnorm}; here the feedback
concerns a representation statistic within a task. A comparison with
unregularized training assesses the complete treatment, whereas tuned
constant and preset-schedule controls test whether observing geometry
adds predictive value to coefficient selection. These considerations
define our central question: \emph{under a common acquisition and
optimization protocol, how does geometry-driven regularization affect
prediction, and how do fixed, scheduled, and feedback-controlled
strengths compare?}

We investigate this question through Collapse-Based Active Neural
Learning (\model{}), a framework with separately specified acquisition,
representation-loss, and coefficient-control components. Dynamic
Collapse Regularization (DCR) uses a normalized within-class to
between-class scatter statistic for classification. A one-sided penalty
acts on deviations above targets calibrated at initialization, and a
separate safeguard discourages loss of between-class separation. The
same statistic enters the loss and the feedback state, implementing
\emph{objective--feedback alignment}. The Meta-Adaptive Collapse
Controller (MACC) provides deterministic bounded feedback in MACC-Lite
and learned coefficient actions in Full MACC. Acquisition can use
predicted-class geometric deviation or an established criterion such
as entropy. This modular formulation allows the acquisition rule and
the training treatment to be compared separately. Here, ``collapse-based''
refers to regulating relative class dispersion; the scatter statistic
does not by itself characterize every property of neural collapse.

The contributions are as follows:
\begin{enumerate}
\item \textbf{Objective-aligned representation control.}
We formulate the normalized scatter objective, its fixed targets and
separation safeguard, and the deterministic and learned coefficient
updates within a budgeted active-learning loop. The formulation makes
the measured quantity, the modulated loss, and the acquired-label
information available to each component explicit.
\item \textbf{Supporting geometric and control analysis.}
We characterize scatter decomposition and invariance, the local effect
of an isolated regularizer step, and bounds on coefficient updates.
An explicit full-update condition and a constructive robustness
counterexample identify the additional assumptions needed to connect
feature geometry to task behavior.
\item \textbf{Controlled evaluation across methods and tasks.}
We compare acquisition rules and regularization controls in six-seed
ImageNet-100 studies, followed by separate control selection and
ten-seed confirmation and backbone transfer. Further six-seed studies
evaluate spectral regularization for language modeling, image
generation, and audio captioning, with task-specific outcomes and
computational measurements.
\end{enumerate}

The principal classification finding is a benefit of the combined
aligned-DCR and MACC-Lite treatment over entropy acquisition alone.
At 60\% annotations, the paired gains are 1.742 percentage points on
ResNet-50 and 1.834 points on ResNet-18, and both survive the specified
multiplicity correction. The feedback settings transfer unchanged
between these backbones. The independently tuned constant has slightly
higher mean accuracy in both cases, and neither paired comparison
establishes superiority or equivalence. These comparisons distinguish the benefit of the complete
classification treatment from the contribution of coefficient adaptation.
The earlier acquisition and controller comparisons provide complementary evidence;
their designs and statistical families are distinguished in
Section~\ref{sec:study_sequence}.

The other tasks assess a separate spectral objective rather than the
supervised class-scatter loss. Fixed spectral regularization produces a
small C4 language-modeling loss reduction, all evaluated spectral
conditions worsen CIFAR-10 diffusion FID relative to unregularized
fine-tuning, and the AudioCaps comparisons establish no captioning
benefit. Classification robustness is also evaluated separately, with
near-zero accuracy under the specified AutoAttack threat model. These
results make task performance the criterion for judging a geometric
intervention and delineate the scope of the framework beyond its
classification gains.

Section~\ref{sec:background} positions the approach in the literature.
Sections~\ref{sec:geometry}--\ref{sec:theory} develop the formulation,
controllers, and supporting analysis.
Sections~\ref{sec:experiments} and \ref{sec:results} present the protocols
and results, followed by interpretation and limitations in
Section~\ref{sec:discussion} and conclusions in
Section~\ref{sec:conclusion}.

\section{Background and related work}\label{sec:background}

\model{} sits at the intersection of representation geometry, label
acquisition, and adaptive optimization. These areas address different
decisions: what structure features should acquire, which examples should
be annotated, and how strongly an auxiliary objective should influence
training. Reviewing these decisions separately helps identify the role
of collapse-based feedback within an active-learning system.

\subsection{Neural collapse and its theoretical scope}
\label{sec:related_collapse}

Papyan et al.~\cite{papyan2020prevalence} identified a coordinated
terminal-training geometry: within-class variability contracts, centered
class means approach a simplex equiangular tight frame (ETF), classifier
weights align with those means, and classification approaches a
nearest-centroid rule. These properties jointly define the phenomenon. A scalar scatter ratio
can measure relative class compactness, but cannot by itself verify ETF
angles, classifier alignment, or agreement with nearest-centroid
decisions. It is also distinct from the pseudoinverse-normalized NC1
statistic. We therefore use collapse as motivation for a measurable
regularization target, rather than equating one favorable statistic
with the full phenomenon.

Theoretical work explains when such geometry is compatible with an
optimization objective. Zhu et al.~\cite{zhu2021geometric} analyze
regularized cross-entropy in an unconstrained-feature model, where
features are optimization variables rather than outputs constrained by
a complete network. Under their assumptions, global minimizers have
simplex-ETF structure and other critical points are strict saddles.
This establishes a precise link between an objective and its optimal
geometry, while leaving the feature-learning dynamics of a general
finite network outside that model.

Subsequent results show why architecture and regularization must be
specified. S\'{u}ken\'{i}k et al.~\cite{sukenik2024lowrank} find regimes
of the deep unconstrained-feature model in which lower-rank solutions
are preferred to deep neural collapse. Their later
work~\cite{sukenik2025resnets} establishes approximate collapse at global
optima of particular regularized ResNet and transformer models with
LayerNorm, with the approximation improving as depth increases.
These results concern different models and assumptions. Hong and
Ling~\cite{hong2026shallow} further analyze shallow ReLU networks with
explicit data dependence: the conditions for collapse and its
generalization consequences depend on architecture and signal-to-noise
structure. Collapse can coexist with poor generalization in their
low-signal regime.

Recent work also broadens the geometry beyond a balanced simplex.
Wang et al.~\cite{wang2026alignment} study feature--classifier space
misalignment in long-tailed learning and introduce explicit alignment
strategies. Alcala et al.~\cite{alcala2026orthoplex}, in a 2026 preprint,
characterize alternative configurations in the orthoplex regime,
where feature dimension $d$ and class count $C$ satisfy $d+2\le C\le2d$. Together, these
studies motivate measuring class separation as well as contraction.
They also distinguish geometric alignment of features and classifiers
from the objective--feedback alignment investigated here: consistency
between the statistic penalized by a loss and the statistic observed
by its coefficient controller.

\subsection{Discriminative and spectral representation objectives}
\label{sec:related_regularization}

The use of within-class and between-class moments predates neural
collapse. Fisher's discriminant criterion~\cite{fisher1936} seeks
directions that separate class means relative to within-class variation.
DeepLDA~\cite{dorfer2016deeplda} brings a discriminant objective into
learned representations. Center loss~\cite{wen2016center} instead
penalizes feature distances to learned class centers alongside softmax
training. Supervised contrastive learning~\cite{khosla2020supervised}
uses class labels to bring same-class embeddings together while
separating different-class embeddings. These approaches establish
compactness and separation as useful, longstanding representation
objectives; the use of class moments alone is not the contribution
of \model{}.

The distinctions concern the quantity optimized and how it enters
training. Center loss controls distances to learned prototypes, DeepLDA
uses a discriminant criterion, and supervised contrastive learning
operates through relations among embeddings. Our classification
objective uses a normalized log ratio of equal-class-weighted scatter,
penalizes only deviations above a frozen reference target, and includes
a separate safeguard against loss of between-class separation.
Its controller observes that same ratio. This construction permits
direct comparisons between a constant coefficient, a predetermined
schedule, and feedback without changing the underlying auxiliary
objective. The present experiments evaluate those controls, rather
than ranking all discriminative representation losses.

Class-conditional contraction should also be distinguished from
degenerate representation collapse. In self-supervised learning,
VICReg~\cite{bardes2022vicreg} preserves per-dimension variability and
reduces covariance redundancy while encouraging agreement between
views. A useful representation can therefore require preserving
variation, not simply suppressing it. This distinction is particularly
relevant to our spectral extensions: reducing spectral complexity
without class labels need not preserve the distinctions required for
language prediction, generation, or captioning. Those extensions use
a separate objective and require their own task evaluations.

\subsection{Active learning through uncertainty and coverage}
\label{sec:related_acquisition}

Pool-based active learning allocates annotations using the current
model and the remaining unlabeled data~\cite{settles2009active}.
Predictive entropy is an accessible uncertainty score, but uncertainty
about an output and uncertainty about model parameters need not be
the same. Gal et al.~\cite{gal2017bayesian} integrate approximate
Bayesian neural-network uncertainty into image acquisition, including
Bayesian active learning by disagreement (BALD), which selects points
for their information about model parameters.
Such criteria motivate careful interpretation of any geometric score:
distance from a class center is not automatically an estimate of
epistemic uncertainty.

Batch selection introduces a complementary coverage problem because
individually uncertain points may be redundant. Core-set
selection~\cite{sener2018active} addresses coverage in embedding space.
BADGE~\cite{ash2020badge} combines uncertainty and diversity by selecting
from hallucinated last-layer gradient embeddings. Noise
Stability~\cite{li2024noise} estimates uncertainty from output changes
under parameter perturbations and relates the resulting selection to
large, diverse gradients. These mechanisms use different information
and incur different acquisition costs; an equal annotation count does
not make their computational workloads identical.

Uncertainty Herding~\cite{bae2025uncertainty} formulates uncertainty
coverage to connect the low- and high-label-budget regimes. Its greedy
selection balances uncertainty with coverage in a fixed feature space,
addressing the budget dependence of purely uncertainty- or
representation-driven selection. Our numerical comparison implements
its margin-weighted coverage mechanism using shared-initial task-model
features under the common candidate-pass and warm-start conventions
specified in Section~\ref{sec:uherding_setup}.

Recent acquisition methods make this balance adaptive. Jung et
al.~\cite{jung2026aggregation} study combinations of acquisition
functions and their accuracy--computation trade-offs, including
feedback-based aggregation. Cui et al.~\cite{cui2026trustset} use
labeled-data feedback to construct a TrustSet and learn a batch
acquisition policy. In \model{}, the optional Mahalanobis score instead
measures deviation from the predicted class using acquired-label
statistics. It is a roundwise selection criterion. Our principal
control study holds entropy acquisition fixed so that its comparisons
address the training treatment; the acquisition alternatives are
examined separately.

\subsection{Adaptive weighting and representation feedback}
\label{sec:related_control}

Adapting the influence of loss terms is an established optimization
idea. Kendall et al.~\cite{kendall2018uncertainty} learn task weights
through a likelihood formulation based on homoscedastic uncertainty.
GradNorm~\cite{chen2018gradnorm} adjusts weights using gradient
magnitudes and relative training rates to balance multitask learning.
Both demonstrate that the choice of feedback signal is central to an
adaptive weighting rule. Their task-balancing objectives differ from
controlling a representation statistic within a single classification
task, but they provide relevant precedents for learning or updating
loss weights during training.

MACC-Lite uses a bounded deterministic update driven by deviations of
the monitored scatter statistic from its frozen target. Full MACC
learns coefficient actions through a policy-gradient
mechanism~\cite{williams1992}; its acquisition-policy head is disabled
in the evaluated conditions. Thus the feedback acts on regularization
strengths, whereas aggregation and TrustSet act on the selection of
examples. Holding this distinction explicit avoids attributing a
training improvement to a better query policy.

This literature motivates a specific empirical question: once a
representation objective is fixed, does observing its state improve
the choice of its coefficient? A constant, a predetermined schedule,
and feedback can produce different training trajectories even when
they share the same loss. Comparing only against unregularized
training cannot distinguish the usefulness of the loss from the
usefulness of adaptation. Our separately tuned controls and subsequent
confirmation therefore test that distinction directly. Table~
\ref{tab:related_positioning} summarizes the relevant methodological
relationships and identifies the comparisons actually executed.

\begin{table*}[tbp]
\caption{Positioning relative to representative acquisition, representation,
and loss-weighting methods. Numerical entries are ImageNet-100 clean accuracy
at 60\% annotations in the earlier six-seed studies (mean $\pm$ seed SD),
not numbers pooled from different published benchmarks. A dash denotes a
conceptual comparison that was not evaluated here.}
\label{tab:related_positioning}
\centering\small
\begin{tabularx}{\textwidth}{@{}>{\raggedright\arraybackslash}p{.18\textwidth}YY>{\raggedright\arraybackslash}p{.14\textwidth}@{}}
\toprule
Method & Principal strength or focus & Distinction or limitation for this study & Matched top-1 (\%) \\
\midrule
Center loss / DeepLDA~\cite{wen2016center,dorfer2016deeplda} &
Direct discriminative representation objectives &
No matched coefficient-control experiment here & --- \\
Supervised contrastive learning~\cite{khosla2020supervised} &
Label-based attraction and separation of embeddings &
Different objective; not a tested alternative loss here & --- \\
VICReg~\cite{bardes2022vicreg} &
Preserves variance while reducing redundancy &
Self-supervised objective; distinct from class-conditional contraction & --- \\
GradNorm / uncertainty weighting~\cite{chen2018gradnorm,kendall2018uncertainty} &
Adaptive balancing of multiple task losses &
Different feedback signals and optimization goals; not executed here & --- \\
BADGE~\cite{ash2020badge} & Uncertainty and gradient diversity &
Acquisition criterion; does not modulate the scatter objective & $71.110\pm0.467$ \\
Noise Stability~\cite{li2024noise} & Perturbation-based acquisition signal &
Higher measured acquisition cost in this implementation & $71.023\pm0.249$ \\
Uncertainty Herding~\cite{bae2025uncertainty} & Uncertainty-weighted feature coverage &
Local fixed-initial-R50 adaptation; published benchmark recipe not reproduced & $71.747\pm0.626$ \\
Aggregation / TrustSet~\cite{jung2026aggregation,cui2026trustset} &
Adaptive acquisition using feedback & Acquisition actions differ from the regularization-only Full MACC studied here & --- \\
Aligned DCR + Lite (entropy) & Shared statistic in loss and coefficient feedback &
Specified-control comparison; later tuned-control results in Section~\ref{sec:feedback_results} & $73.360\pm0.208$ \\
\bottomrule
\end{tabularx}
\end{table*}

\subsection{Geometry, robustness, and predictive evaluation}
\label{sec:related_evaluation}

Representation geometry, predictive confidence, and resistance to input
perturbations describe different properties. Su et
al.~\cite{su2024robustcollapse} study neural collapse in both standard
and adversarially trained networks, showing that small adversarial
perturbations can disrupt the clean simplex structure of nonrobust
models. Their analysis also identifies aligned clean and perturbed
geometry in robust models. These findings make robustness a separate
empirical question, rather than a consequence of clean-feature
compactness. AutoAttack~\cite{croce2020autoattack} provides an ensemble
evaluation designed to reduce dependence on a single attack; its
interpretation still requires the input norm, radius, and preprocessing
to be stated.

Calibration likewise concerns the relationship between reported
confidence and correctness~\cite{guo2017calibration}. An auxiliary
representation loss can change both accuracy and confidence, so we
report calibration separately from the scatter statistic and clean
accuracy. This evaluation perspective connects the literature to our
experimental design: geometry motivates the intervention, while
matched task outcomes assess its usefulness. The paper's focus is the
specified coupling of representation loss and coefficient feedback,
with acquisition, robustness, and cross-task effects evaluated as
distinct questions.

\section{Problem formulation and representation measurements}
\label{sec:geometry}

The classification formulation distinguishes three roles for representation
geometry: a differentiable training statistic, an aggregate feedback signal,
and a sample-level acquisition score. The first two use the same normalized
scatter measurement; acquisition uses a separate raw-feature distance.
The spectral formulation for the other tasks is given in
Section~\ref{sec:spectral}.

\subsection{Sets, budgets, and information access}

Let $\mathcal D=(x_i)_{i\in\mathcal I}$ be a fixed training pool with
index set $\mathcal I=\{1,\ldots,N\}$ and class labels
$y_i\in\{1,\ldots,C\}$. All partitions below are sets of indices.
After acquisition round $r$, $\mathcal A_r\subseteq\mathcal I$ contains
the annotated indices and $\mathcal U_r=\mathcal I\setminus\mathcal A_r$
the remaining pool. A fixed feedback set $\mathcal H\subset\mathcal A_0$
is included in the annotation budget and excluded from model-gradient
updates and acquisition statistics. Thus,
$\mathcal T_r=\mathcal A_r\setminus\mathcal H$ supplies gradient-training
examples. Feedback labels are available to the control and tuning
procedures specified in Sections~\ref{sec:framework}
and~\ref{sec:feedback_design}.

For total annotation budget $B$, round $r+1$ selects distinct indices
$Q_{r+1}\subseteq\mathcal U_r$ with
$|Q_{r+1}|=b_{r+1}\le\min\{|\mathcal U_r|,B-|\mathcal A_r|\}$.
After their labels are obtained,
\[
 \mathcal A_{r+1}=\mathcal A_r\cup Q_{r+1},\qquad
 \mathcal U_{r+1}=\mathcal U_r\setminus Q_{r+1}.
\]
Consequently, $|\mathcal A_r|=|\mathcal A_0|+\sum_{j=1}^r b_j\le B$,
including feedback labels. The initial stratified annotation protocol is
specified in Section~\ref{sec:data_accounting}; subsequent query decisions
do not use the hidden labels of $\mathcal U_r$.

The model produces logits $f_\theta(x)\in\R^C$ and pooled features
$z_l(x)\in\R^{d_l}$ at monitored layers $l\in\mathcal L$.
Training, feedback, acquisition, and evaluation transforms are specified
separately. Official evaluation labels do not enter the acquisition or
online controller. Partition disjointness is by index; content duplicates
and their sensitivity analysis are reported in
Section~\ref{sec:duplicate_sensitivity}.
Performance is compared at fixed annotation budgets. Using 60\% of the
pool does not alone establish a 40\% annotation saving at equal accuracy.

\subsection{Class-balanced normalized scatter}

Consider a labeled measurement set $\mathcal S$ with finite features.
Let $\mathcal C_{\mathcal S}=\{y_i:i\in\mathcal S\}$ be its represented
classes, $K=|\mathcal C_{\mathcal S}|$, and
$\mathcal S_c=\{i\in\mathcal S:y_i=c\}$ with $n_c=|\mathcal S_c|$.
Thus $K$ may be smaller than the total class count $C$.
For each monitored layer, define
\begin{equation}
 h_{i,l}=\frac{z_l(x_i)}
 {\max\{\|z_l(x_i)\|_2,\epsn\}},
 \qquad \epsn=10^{-6}.
 \label{eq:normalized_features}
\end{equation}
Let $\mu_{c,l}=n_c^{-1}\sum_{i\in\mathcal S_c}h_{i,l}$ and
$\bar\mu_l=K^{-1}\sum_{c\in\mathcal C_{\mathcal S}}\mu_{c,l}$.
The equal-class-weighted population moments and stabilized log ratio are
\begin{align}
 W_l &=\frac1K\sum_{c\in\mathcal C_{\mathcal S}}\frac1{n_c}
       \sum_{i\in\mathcal S_c}\|h_{i,l}-\mu_{c,l}\|_2^2,\\
 B_l &=\frac1K\sum_{c\in\mathcal C_{\mathcal S}}\|\mu_{c,l}-\bar\mu_l\|_2^2,\\
 q_l &=\log(W_l+\epss)-\log(B_l+\epss).
 \label{eq:trace_ratio}
\end{align}
Here $\epss=10^{-6}$ and
$\kappa_l=\exp(q_l)=(W_l+\epss)/(B_l+\epss)$.
The denominator $n_c$ defines a population moment, rather than an
unbiased sample-variance estimate. Each class receives weight $1/K$
regardless of its example count.

The implementation requires $K\ge2$ and $n_c\ge2$ for every represented
class. Training further requires balanced batches: $K=16$ classes with
eight distinct examples per class. Assessment permits unequal counts
and computes the moments once over the complete assessment set, using
all represented classes. In contrast, feedback uses moving averages of
training-batch $q_l$ and $B_l$. A mean of batch log ratios is generally
different from the log ratio computed over their pooled examples;
class support and input transforms also differ across these measurements.

A smaller $q_l$ indicates lower stabilized within-class dispersion
relative to between-class dispersion. It does not determine every
pairwise class distance, the full covariance spectrum, or the classifier
boundary. In particular, identical features give $W_l=B_l=0$ and
$q_l=0$; numerical finiteness does not certify useful class separation.
Section~\ref{sec:framework} therefore specifies a separate separation
penalty and its limitations. Normalization gives $\|h_{i,l}\|_2\le1$,
with equality above the norm floor. The moments are invariant to a
common orthogonal transformation and, when the floor is inactive before
and after scaling, to common positive feature rescaling
(Proposition~\ref{prop:scatter}).

\subsection{Sample-level geometry for acquisition}

At an acquisition checkpoint, use \emph{raw} pooled features from
$\mathcal T_r$, with the same deterministic transform used for pool
scoring. For each global class $c\in\{1,\ldots,C\}$, let
$\mathcal T_{r,c}=\{i\in\mathcal T_r:y_i=c\}$ and
$m_{r,c}=|\mathcal T_{r,c}|$. The reference statistics are
\[
 \widehat\mu_{c,l}=\frac1{m_{r,c}}
                   \sum_{i\in\mathcal T_{r,c}}z_l(x_i),
\]
\[
 \widehat v_{c,l,j}=\frac1{m_{r,c}}
 \sum_{i\in\mathcal T_{r,c}}
 (z_{l,j}(x_i)-\widehat\mu_{c,l,j})^2.
\]
Round and checkpoint dependence are suppressed in this notation.
The implementation requires $m_{r,c}\ge2$ for every global class;
there is no fallback estimate for an unsupported class.
With predicted class $\widehat c(x)=\arg\max_c f_\theta(x)_c$, define
\begin{align}
 u_l(x)&=\sum_{j=1}^{d_l}
 \frac{(z_{l,j}(x)-\widehat\mu_{\widehat c(x),l,j})^2}
 {\widehat v_{\widehat c(x),l,j}+\epsilon_a},\\
 s(x)&=\sum_{l\in\mathcal L}\alpha_l u_l(x),
 \qquad \alpha_l\ge0,\quad\sum_l\alpha_l=1.
 \label{eq:acquisition}
\end{align}
Here $\epsilon_a=10^{-5}$ is added once to each variance, and layer3
and layer4 have weights $0.5$ each. Thus $u_l$ is a squared diagonal
Mahalanobis distance; it uses neither a square root nor a full covariance
inverse. Acquired true labels define the reference classes, while
predictions assign unlabeled candidates to a reference class.

The checkpoint and reference statistics remain fixed during a round's
query selection and are recomputed after subsequent training. The largest
scores are selected within repeated candidate passes, with ties resolved
by candidate order, until the round quota is filled
(Section~\ref{sec:workflow}). This procedure does not select the global
top-scoring subset of the entire pool. A high score indicates deviation
from the predicted-class reference; confident outliers can also score
highly. The raw-feature diagonal score does not inherit the general
orthogonal or exact scale invariance of the normalized scatter statistic.
Its usefulness for acquisition is therefore evaluated separately from
the training and feedback effects.

\section{Objective-aligned DCR and regularization control}
\label{sec:framework}

\begin{samepage}
Building on Section~\ref{sec:geometry}, the classification method uses
the normalized scatter statistic to define both a representation loss
and the feedback that adjusts its strength. The construction has two
time scales: coefficients act within training phases, while label
acquisition occurs between phases. We first specify the shared objective
and frozen reference targets, then the deterministic and learned
controllers, and finally their integration into active learning.
The spectral task variants use the separate formulation in
Section~\ref{sec:spectral}.
\par
\end{samepage}

\subsection{Frozen targets and classification objective}
\label{sec:aligned_objective}

Each seed begins with a shared cross-entropy-only checkpoint trained
on $\mathcal T_0$. Before the first acquisition, 16 fixed class-balanced
anchor batches from $\mathcal T_0$ are evaluated using deterministic
center crops. Let $\overline q_{l,0}$ and $\overline B_{l,0}$ be their
mean log ratio and mean between-class dispersion. The targets are
\begin{equation}
 q_l^\star=\overline q_{l,0}+\log(0.9),
 \qquad B_l^{\min}=0.5\overline B_{l,0}.
 \label{eq:targets}
\end{equation}
Thus $\exp(q_l^\star)$ is 90\% of the geometric mean of the anchor
ratios. Calibration requires $\overline B_{l,0}>10\epss$, giving
$B_l^{\min}>5\epss>0$. These seed-specific targets are shared across
methods and remain frozen as the annotated set grows. They define a
reference intervention, rather than an optimal geometry or a threshold
for the differently aggregated 100-class assessment statistic.

For a current gradient-training batch, $\CE$ is its mean cross-entropy
and $q_l,B_l$ are the instantaneous moments defined in
Section~\ref{sec:geometry}. Define the one-sided Huber penalty
\begin{equation}
 H_\delta(u)=
 \begin{cases}
 0,&u\le0,\\
 u^2/(2\delta),&0<u<\delta,\\
 u-\delta/2,&u\ge\delta.
 \end{cases}
 \label{eq:huber}
\end{equation}
The aligned Dynamic Collapse Regularization (DCR) objective is
\begin{align}
 R_l&=H_\delta(q_l-q_l^\star),\\
 F_l&=H_\delta\!\left(
       \log B_l^{\min}-\log(B_l+\epss)\right),\\
 \mathcal L_{\AD}
 &=\CE+\sum_{l\in\mathcal L}\lambda_l^{\mathrm{eff}}R_l
       +\beta_B\sum_{l\in\mathcal L}F_l.
 \label{eq:aligned_loss}
\end{align}
We use $\delta=0.25$, $\epss=\epsn=10^{-6}$, and $\beta_B=0.01$.
The ratio penalty vanishes when $q_l\le q_l^\star$; it therefore
penalizes excess dispersion without forcing equality to the target.
The separation penalty vanishes when $B_l+\epss\ge B_l^{\min}$.
It discourages the degenerate geometry identified in Section~\ref{sec:geometry},
but does not exclude every degenerate stationary point or separate
every class pair.

The nonnegative applied coefficients $\lambda_l^{\mathrm{eff}}$ distinguish
fixed, scheduled, and feedback-controlled treatments. All regularized
classification treatments retain the same separation penalty, including
when Full MACC disables a ratio term. The unregularized reference omits
both penalties. Coefficients and targets are held constant during each
model-gradient update; gradients pass through the current batch moments.
The dense class-incidence calculation costs $O(nKd_l)$ per layer for
a batch of $n$ examples, with $K=16$, without forming a full covariance
matrix.

For feedback, moving moments are initialized at the calibration means.
After each optimizer step $k$, the observations from that step's forward
pass update
\[
 \widetilde m_l^{(k)}=0.95\widetilde m_l^{(k-1)}+0.05m_l^{(k)},
 \qquad m\in\{q,B\}.
\]
At a control boundary $t$, denote the latest values by
$\widehat q_{l,t}$ and $\widehat B_{l,t}$. This aligns the loss and
feedback at the level of the measured statistic, while retaining the
distinction between instantaneous and smoothed measurements made in
Section~\ref{sec:geometry}. Cross-entropy, augmentation, momentum,
and interactions among layers still affect the resulting trajectory.

\subsection{Deterministic coefficient control: MACC-Lite}
\label{sec:aligned_lite}

The Meta-Adaptive Collapse Controller (MACC) adjusts regularization
strength from observed training state. MACC-Lite uses a deterministic
rule. With residual $e_{l,t}=\widehat q_{l,t}-q_l^\star$ and
$[e]_+=\max\{e,0\}$, its bounded leaky update is
\begin{align}
 u_{l,t}&=
 \lambda_{\max}\tanh\!\left(\frac{[e_{l,t}]_+}{\delta}\right),\\
 \lambda_{l,t+1}&=(1-\rho)\lambda_{l,t}+\rho u_{l,t}.
 \label{eq:lite}
\end{align}
Here $\lambda_{\max}=0.10$, $\rho=0.20$, and the coefficient state
is initialized at $\lambda_{l,0}=0.03$ before the first phase-start
action. Lite applies $\lambda_l^{\mathrm{eff}}=\lambda_l$ without
activation flags. Positive residuals set a positive desired strength;
when the residual is nonpositive, the coefficient decays toward zero.
The convex update preserves its stated bounds
(Proposition~\ref{prop:bounded}), without establishing
closed-loop stability or target convergence.

Holdout cross-entropy is recorded, but does not enter the Lite action.
The original fixed-strength comparison uses $\lambda_l=0.03$.
The later study independently selects a constant and a predetermined
schedule while keeping the Lite rule unchanged
(Section~\ref{sec:feedback_design}). These controls distinguish the
effect of the representation objective from the additional value of
responding to its observed state.

\subsection{Learned coefficient control: Full MACC}
\label{sec:aligned_full}

Full MACC learns which coefficient increments and activation flags to
apply. Its policy has two 128-unit hidden layers with GELU activations.
At boundary $t$, the state for the next action contains each layer's
smoothed log ratio, target deviation, temporal-change feature, current
coefficient, and activation flag, followed by training progress and
feedback cross-entropy on $\mathcal H$. The two-layer classification
configuration therefore has 12 state entries. Progress is the fraction
of planned control windows completed across the acquisition phases.

With $\widehat\kappa_{l,t}=\exp(\widehat q_{l,t})$ and
$q^{\mathrm{prev}}_{l,t}$ the reference from the preceding completed
window, the temporal feature is
\begin{equation}
 d_{l,t}=\frac{
 |\widehat q_{l,t}-q^{\mathrm{prev}}_{l,t}|\,\widehat\kappa_{l,t}}
 {\max\{\widehat\kappa_{l,t},10^{-12}\}}.
 \label{eq:full_temporal_state}
\end{equation}
Away from the numerical floor, this is the absolute log-ratio change,
without additional standardization. The reference starts at the
calibration mean and is refreshed after each completed window.

Each action chooses a layer-wise increment from
$\{-0.01,-0.005,0,0.005,0.01\}$ and a binary flag $g_l$.
The stored coefficient is incremented and projected to $[0,0.10]$;
the applied coefficient is $\lambda_l^{\mathrm{eff}}=g_l\lambda_l$.
Stored coefficients begin at $0.03$ and flags at one, before the first
action. The selected coefficients remain fixed during the following
window of at most 100 optimizer steps.

Index completed windows by $t$. Let $\ell_t^H$ be feedback
cross-entropy at the end of window $t$, and
$s_H=\max\{0.1,\ell^H_{\mathrm{phase},0}\}$ the scale fixed at the
start of its acquisition phase. Define the geometric potential from
the smoothed moments as
\[
 \begin{aligned}
 \Psi_t={}&\sum_{l\in\mathcal L}H_\delta(\widehat q_{l,t}-q_l^\star)\\
 &+\beta_B\sum_{l\in\mathcal L}H_\delta\!\left(
 \log B_l^{\min}-\log(\widehat B_{l,t}+\epss)\right).
 \end{aligned}
\]
For the action applied during that window, the reward is
\begin{align}
 r_t={}&
 \Clip\!\left(
 \frac{\ell_{t-1}^H-\ell_t^H}{s_H},-1,1\right)
 \nonumber\\*
 &+0.1\Clip(\Psi_{t-1}-\Psi_t,-1,1)
 \nonumber\\*
 &-0.01\frac1{|\mathcal L|}
 \sum_{l\in\mathcal L}\frac{\lambda_{l,t}^{\mathrm{eff}}}{0.10}.
 \label{eq:reward}
\end{align}
The three terms reward feedback-loss reduction and geometric progress,
and penalize applied strength. The ratio terms in $\Psi_t$ are not
multiplied by the controlled coefficients, so switching a ratio term
off cannot alone produce a geometric reward. At the first window of
each phase, the preceding loss and potential are that phase's starting
values. Official evaluation labels do not enter this reward.

Uniform exploration decreases linearly from probability $0.10$ toward
$0.02$ over the planned actions. Otherwise, actions are sampled from
the policy's categorical distributions. Only policy-sampled actions
receive REINFORCE updates~\cite{williams1992}; uniform exploratory
actions still contribute to the moving reward baseline. The advantage
uses the previous baseline, which is updated afterward with observation
weight $0.05$. The first reward initializes this baseline and has zero
centered advantage, although the entropy term can still update the
policy. Policy optimization uses Adam with learning rate
$3\times10^{-4}$ and entropy coefficient $0.001$.

Algorithm~\ref{alg:macc_policy} specifies the order of reward attribution,
policy updating, and action selection. The terminal action receives its
reward before the phase ends. The acquisition-policy head is disabled:
query counts and candidate-pass settings remain those in
Section~\ref{sec:workflow}. Full MACC consequently learns regularization
control online within each run; it is not meta-trained across tasks.

\begin{algorithm*}[tbp]
\small

\caption{\textbf{Full-MACC update at the end of a control window}}
\label{alg:macc_policy}
\begin{algorithmic}[1]
\Require Controller state $C$ with policy $\pi_\phi$ and pending pair $(S_t,A_t)$;
model $f_\theta$ after at most 100 SGD steps under $A_t$;
fixed holdout $\mathcal H$; reward baseline $b_t$; terminal flag $d$.
\Ensure Updated controller state $C$, baseline $b_{t+1}$, and pending pair
$P\in\{(S_{t+1},A_{t+1}),\varnothing\}$.
\Statex \textit{Evaluate the completed window}
\State $\ell_t^H\gets\Call{CrossEntropy}{f_\theta,\mathcal H}$;
       $\Psi_t\gets\Call{GeometricPotential}{C}$
\State $r_t\gets\Call{Reward}{\ell_{t-1}^H,\ell_t^H,\Psi_{t-1},\Psi_t,s_H,
       \lambda_t^{\mathrm{eff}}}$ \Comment{Equation~\eqref{eq:reward}; phase-fixed $s_H$}
\Statex \textit{Credit the action before selecting its successor}
\If{$b_t$ is uninitialized}
    \State $b_t\gets r_t$ \Comment{The first centered advantage is zero}
\EndIf
\State $a_t\gets r_t-b_t$
\State $b_{t+1}\gets0.95b_t+0.05r_t$
\If{$A_t$ was sampled from $\pi_\phi$}
    \State $\phi\gets\Call{ReinforceUpdate}{\phi,S_t,A_t,a_t,\eta_{\mathrm{ent}}=0.001}$
        \Comment{Regularization heads and entropy term only}
\EndIf
\State $P\gets\varnothing$ \Comment{Uniform exploratory actions receive no policy-gradient update}
\If{$\neg d$}
    \State $S_{t+1}\gets\Call{BuildState}{C,\ell_t^H}$
    \State $A_{t+1}\gets\Call{SampleAction}{\pi_\phi,S_{t+1}}$
        \Comment{Policy sampling or scheduled uniform exploration}
    \State $C\gets\Call{ApplyBoundedAction}{C,A_{t+1}}$
        \Comment{Project strengths to $[0,0.10]$ and apply layer flags}
    \State $P\gets(S_{t+1},A_{t+1})$
\EndIf
\State $C\gets\Call{RefreshReferences}{C,\ell_t^H,\Psi_t,\widehat q_t}$
\State \Return $(C,b_{t+1},P)$ \Comment{Terminal reward is retained; no unused next action is sampled}
\end{algorithmic}
\end{algorithm*}

\subsection{Round-based active learning}
\label{sec:workflow}

Figure~\ref{fig:detailed_workflow} connects the data partitions and
measurements of Section~\ref{sec:geometry} to the loss and controllers
above. Algorithm~\ref{alg:caenl_loop} gives the classification execution
order. Each method starts from the same seed-specific model and optimizer
checkpoint. A round first acquires labels using that checkpoint, then
trains on the enlarged $\mathcal T_r$.

During acquisition, the checkpoint is fixed and required pool predictions
and features are cached. Candidate passes contain at most 10,000 examples,
sampled without replacement within each pass. For a pass of size $n$,
the quota is $\min\{b_{\mathrm{rem}},\lceil0.10n\rceil\}$, where
$b_{\mathrm{rem}}$ is the remaining round quota. Selected indices are
removed; unselected candidates may recur in later passes. The Mahalanobis
conditions rank Equation~\eqref{eq:acquisition} within each pass,
breaking ties by candidate order. Random, entropy, BADGE, and Noise
Stability share the annotation and candidate-pass budgets, with
method-specific selection rules and computational costs.

After the labels are acquired, model parameters and optimizer momentum
are retained. Controller state and moving geometry also carry across
phases, while the learning-rate schedule restarts. Each phase measures
its starting holdout loss and geometric potential and selects an initial
action before gradient training begins. Subsequent control boundaries
occur every 100 steps and at the final step. Lite selects a new strength
at nonterminal boundaries; Full MACC credits the completed action and
then selects its successor. At the terminal boundary, Full MACC credits
the last action without sampling an unused successor. Reinitializing
the next phase's loss and potential references prevents changes caused
by acquisition and the learning-rate restart from being attributed to
the preceding phase's final action.

\begin{algorithm*}[tbp]
\small

\caption{\textbf{\model{} active learning in the original classification study}}
\label{alg:caenl_loop}
\begin{algorithmic}[1]
\Require Indexed pool $\mathcal D$ with indices $\mathcal I$; initial annotations $\mathcal A_0$; fixed feedback
set $\mathcal H\subset\mathcal A_0$; official evaluation set $\mathcal V$;
methods $\mathcal M$; layers $\mathcal L$; per-round budget $b$; $R=5$ acquisition rounds.
\Ensure Scheduled checkpoints and clean/robust evaluation results for every
method, using the same seed-specific initialization.
\Statex \textit{Shared initialization and calibration}
\State $\mathcal T_0\gets\mathcal A_0\setminus\mathcal H$;
       $\mathcal U_0\gets\mathcal I\setminus\mathcal A_0$
\State $(\theta_0,\Omega_0)\gets\Call{TrainCrossEntropy}{\mathcal T_0}$
    \Comment{$\Omega$ denotes optimizer state, including momentum}
\State $\Gamma\gets\Call{CalibrateTargets}{f_{\theta_0},\mathcal T_0,\mathcal L}$
    \Comment{Freeze all $q_l^\star$ and $B_l^{\min}$}
\For{each method $m\in\mathcal M$}
    \State $(\theta,\Omega)\gets\Call{Restore}{\theta_0,\Omega_0}$;
        $C\gets\Call{InitializeControl}{m,\Gamma}$
    \State $\mathcal A\gets\mathcal A_0$; $\mathcal U\gets\mathcal U_0$
    \For{$r=1,\ldots,R$}
        \Statex \hspace{2.6em}\textit{Acquire labels using the current model}
        \State $F\gets\Call{InferPool}{f_\theta,\mathcal U}$;
            $G\gets\Call{FitRequiredStatistics}{f_\theta,\mathcal A\setminus\mathcal H,m}$
        \State $Q\gets\Call{CappedAcquisition}{m,F,G,b}$
            \Comment{$|Q|=b$ distinct examples; hidden labels are inaccessible}
        \State $\Call{QueryLabels}{Q}$;
            $\mathcal A\gets\mathcal A\cup Q$;
            $\mathcal U\gets\mathcal U\setminus Q$
        \State $\mathcal T_r\gets\mathcal A\setminus\mathcal H$;
            $K_r\gets\Call{ScheduledSteps}{\mathcal T_r,20\ \mathrm{epochs}}$
        \State $C\gets\Call{BeginPhase}{C,f_\theta,\mathcal H}$
            \Comment{Set phase references and first action; retain states}
        \Statex \hspace{2.6em}\textit{Train and update the designated controller}
        \For{$k=1,\ldots,K_r$}
            \State $B_k\gets\Call{ClassBalancedBatch}{\mathcal T_r}$
            \State $(\theta,\Omega,\xi_k)\gets\Call{TrainingStep}{\theta,\Omega,B_k,m,C,\Gamma}$
            \State $C\gets\Call{UpdateMoments}{C,\xi_k}$
            \If{$k\bmod100=0$ \textbf{or} $k=K_r$}
                \State $C\gets\Call{UpdateControl}{C,f_\theta,\mathcal H,d=(k=K_r)}$
            \EndIf
        \EndFor
        \State $\mathcal K_{m,r}\gets\Call{SaveCheckpoint}{\theta,\Omega,C}$;
            $E_{m,r}\gets\Call{EvaluateClean}{f_\theta,\mathcal V}$
    \EndFor
    \State $E_m^{\mathrm{robust}}\gets\Call{AutoAttack}{\mathcal K_{m,R},\mathcal V}$
        \Comment{Frozen full-set protocol; final scheduled checkpoint}
\EndFor
\State \Return $\{(\mathcal K_{m,r},E_{m,r}):m\in\mathcal M,\ 1\le r\le R\}$,
    $\{E_m^{\mathrm{robust}}:m\in\mathcal M\}$
\end{algorithmic}
\vspace{2pt}
\noindent\footnotesize
\textsc{TrainingStep} applies the scheduled SGD update and returns training
moments $\xi_k$; unregularized methods omit both penalties.
\textsc{UpdateControl} leaves fixed strength unchanged, applies
Equation~\eqref{eq:lite} for Lite at nonterminal boundaries, or calls
Algorithm~\ref{alg:macc_policy} for Full MACC. Control operations are omitted
for unregularized methods. Neither acquisition nor feedback uses official
evaluation labels. The later control study follows this loop with the changes in
Section~\ref{sec:feedback_design}: fixed/scheduled control selection, no official
evaluation during tuning, and no additional attack evaluation.
\end{algorithm*}

\begin{figure}[tbp]
\centering
\begin{tikzpicture}[x=1mm,y=1mm,
 every node/.style={font=\fontsize{8}{9.4}\selectfont,align=center},
 box/.style={draw=black!55,rounded corners=.7mm,fill=white,inner sep=1.4mm,text width=67mm},
 panel/.style={draw=black!25,rounded corners=1mm,fill=black!2},
 head/.style={anchor=west,font=\fontsize{8.5}{10}\selectfont\bfseries},
 flow/.style={-{Latex[length=1.5mm]},line width=.45pt,draw=black!75},
 signal/.style={flow,dashed,draw=caenlaccent}]

\draw[panel] (0,0) rectangle (80,-53);
\node[head] at (2,-4) {(a) Query--train loop: 10\% to 60\% labels};
\node[box,minimum height=11mm] (score) at (38,-15) {
 \textbf{Score the current unlabeled pool}\\
 Geometry $\sum_l\alpha_lu_l(x)$ or the condition's rule};
\node[box,minimum height=10mm] (query) at (38,-30) {
 \textbf{Query $b=12{,}669$ labels per round}\\
 Add to $\mathcal A_r$; remove from $\mathcal U_r$};
\node[box,minimum height=12mm] (train) at (38,-45) {
 \textbf{Train on $\mathcal T_r=\mathcal A_r\setminus\mathcal H$}\\
 20 epochs; retained optimizer momentum};
\draw[flow] (score.south)--(query.north);
\draw[flow] (query.south)--(train.north);
\draw[flow] (train.east)--(77,-45)--(77,-15)--(score.east);

\draw[panel] (0,-57) rectangle (80,-124);
\node[head] at (2,-61) {(b) Aligned representation regularization};
\node[box,minimum height=12mm] (geometry) at (38,-72) {
 \textbf{Monitor normalized layer3/layer4 features}\\
 $q_l=\log(W_l+\epss)-\log(B_l+\epss)$\\
 Moving geometry residuals and trends};
\node[box,draw=caenlaccent,minimum height=21mm] (control) at (38,-94) {
 \textbf{Choose one coefficient policy}\\[.5mm]
 Fixed: constant $\lambda_l$.\\
 Lite: bounded recurrence in Eq.~\eqref{eq:lite}.\\
 Full: learned increments/flags; geometry and holdout loss.\\
 $0\le\lambda_l\le0.10$; windows of at most 100 steps};
\node[box,draw=caenlaccent,fill=caenlaccent!5,minimum height=12mm] (objective) at (38,-114) {
 \textbf{Apply the objective for gradient updates}\\
 $\CE+\sum_l\lambda_l^{\mathrm{eff}}R_l+\beta_B\sum_lF_l$\\
 Shared $q_l$; fixed anchors; separation safeguard};
\draw[signal] (geometry.south)--(control.north);
\draw[flow] (control.south)--(objective.north);
\draw[signal] (geometry.east)--(77,-72)--(77,-114)--(objective.east);

\draw[panel] (0,-128) rectangle (80,-157);
\node[head] at (2,-132) {(c) Data and feedback boundaries};
\node[anchor=west,align=left,text width=73mm] at (3,-146) {
 $\mathcal T_r$: task-model gradient updates only.\\
 $\mathcal H$: 633 acquired labels; excluded from $\mathcal T_r$;\\
 \hspace*{3mm}counted in budget; Full-MACC loss feedback.\\
 $\mathcal V$: 5,000 evaluation images; no controller input.};
\end{tikzpicture}
\caption{The classification workflow of \model{}. (a) Acquisition expands the
annotated set between training phases. (b) The regularizer and adaptive
controllers use the same normalized scatter statistic, while the separation
safeguard penalizes insufficient between-class separation. Fixed strength,
deterministic feedback, and learned feedback are alternatives. The initial
fixed strength is 0.03; the later control study selects 0.10 and also tests
preset schedules (Section~\ref{sec:feedback_design}). Full MACC credits the
terminal reward and keeps its acquisition-policy head disabled.
(c) The fixed holdout supplies Full-MACC feedback without task-model gradient
updates; official evaluation is separate. Solid arrows indicate execution or
applied control; dashed arrows indicate measured signals. Language, diffusion,
and audio use their task-specific spectral formulations.}
\label{fig:detailed_workflow}\label{fig:framework_overview}
\end{figure}

\subsection{Spectral formulation for language, diffusion, and audio}
\label{sec:spectral}

The other tasks retain the idea of controlling a representation penalty,
but use the covariance-spectrum formulation discussed in
Section~\ref{sec:related_regularization}. They do not use the supervised
class partition or the scatter target of the classification method.
Their objective is
\begin{equation}
 \mathcal L_{\mathrm{spec}}=
 \mathcal L_{\mathrm{task}}+
 \sum_{l\in\mathcal L}\lambda_l\|p_l-p_l^\star\|_2^2,
 \label{eq:spectral}
\end{equation}
where $\lambda_l$ denotes the applied strength, including any activation
flag. The penalty matches eigenvalue vectors rather than full covariance
matrices.

For a centered feature matrix $Z_l\in\R^{n_l\times d_l}$,
the implementation computes descending, nonnegative eigenvalues
$\sigma_{l,j}$ of $Z_l^\top Z_l/n_l$, using the smaller Gram matrix
$Z_lZ_l^\top/n_l$ when appropriate. With $k_l=\min(k,d_l)$, retain
the first $k_l$ eigenvalues and pad with zeros if fewer are available.
The batch spectrum is normalized over these retained entries:
\begin{equation}
 p_{l,j}=
 \frac{\sigma_{l,j}}
 {\sum_{a=1}^{k_l}\sigma_{l,a}+\epsp},
 \quad j=1,\ldots,k_l.
 \label{eq:spectrum_normalization}
\end{equation}
With $\epsp=10^{-8}$, the total mass can be appreciably smaller than
one when the retained spectral sum is very small. A rank-$r$
flat target assigns mass $1/r$ to its first $r$ entries and zero to the
rest. The target rank is restricted by the feature dimension, retained
spectrum size, and, when necessary, the centered sample-rank bound
$n_l-1$. Matching these eigenvalues leaves eigenvectors and class
arrangements unspecified.

The spectral monitor uses the full moving covariance spectrum, whereas
the loss is computed by eigenspectrum autodifferentiation from the current
truncated batch spectrum. Smoothness is local to regions where the
relevant eigenvalues remain separated. Ties crossing the retained-spectrum
or target-rank boundary can make the penalty nonsmooth; autodifferentiation
does not establish global smoothness.
For an exactly normalized nonzero spectrum $\pi$, effective rank is
$\exp(-\sum_j\pi_j\log\pi_j)$. Neither this spectral measurement nor
the loss is the supervised trace ratio of
Equation~\eqref{eq:trace_ratio}. The spectral controller rules and
settings are task-specific; Table~\ref{tab:control_variants} and
Section~\ref{sec:cross_protocols} distinguish them from the aligned
classification controllers above.

Language modeling uses token hidden states with next-token
cross-entropy. Diffusion uses spatially pooled U-Net mid-block
activations with noise-prediction mean squared error. Audio captioning
uses learned prefix embeddings and pooled audio-token-encoder features
with next-token cross-entropy. These experiments evaluate spectral
regularization and coefficient control without learned label acquisition.

\section{Analysis of representation geometry and coefficient control}
\label{sec:theory}

Sections~\ref{sec:geometry} and~\ref{sec:framework} define the geometry
measurements, the aligned objective, and the coefficient updates.
We now establish their mathematical properties and identify the
conditions under which geometric quantities can inform prediction.
The analysis proceeds from the measurements to an isolated regularizer
step, the controller response, and conditional error and robustness
bounds. The scatter and control results concern the aligned
classification method; the spectral variants in
Section~\ref{sec:spectral} use different objectives and update rules.

\subsection{Scatter geometry and acquisition scores}
\label{sec:analysis_geometry}

The equal-class weighting in Section~\ref{sec:geometry} gives a
variance decomposition even when class counts differ. All quantities
in the following proposition refer to the same measurement set.

\begin{proposition}[Scatter decomposition and invariance]
\label{prop:scatter}
For represented classes $\mathcal C_{\mathcal S}$ with positive counts,
let
\[
 T_l=\frac1K\sum_{c\in\mathcal C_{\mathcal S}}\frac1{n_c}
       \sum_{i\in\mathcal S_c}\|h_{i,l}-\bar\mu_l\|_2^2.
\]
Then $T_l=W_l+B_l$. Under
Equation~\eqref{eq:normalized_features},
$0\le W_l,B_l\le T_l\le1$. These moments and the stabilized
log ratio $q_l$ are invariant under a common orthogonal transformation
of the features. They are also invariant under common positive
rescaling of the raw features when the normalization floor is inactive
both before and after rescaling.
\end{proposition}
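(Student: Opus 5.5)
The decomposition $T_l=W_l+B_l$ comes from expanding each squared distance to the global center around the class mean. Within a fixed class $c$, I write
\[
 h_{i,l}-\bar\mu_l=(h_{i,l}-\mu_{c,l})+(\mu_{c,l}-\bar\mu_l)
\]
and expand the square. The cross term is $2\langle h_{i,l}-\mu_{c,l},\mu_{c,l}-\bar\mu_l\rangle$. Averaged with weight $1/n_c$ over $\mathcal S_c$, it vanishes, because $\mu_{c,l}$ is exactly the unweighted mean of $\mathcal S_c$. This is the point where the population denominator $n_c$ matters. Averaging the remaining two terms over classes with weight $1/K$ gives $W_l$ plus $\frac1K\sum_c\|\mu_{c,l}-\bar\mu_l\|_2^2=B_l$. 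Unequal class counts cause no difficulty, since the cancellation is classwise and $\bar\mu_l$ is itself the equal-weight average of the class means.

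Next come the bounds. $W_l$ and $B_l$ are averages of squared norms, so they are nonnegative, and then $W_l,B_l\le T_l$ follows from the decomposition. For $T_l\le1$ I would use the variance form, which holds for any center. Applied to the equal-weight mixture distribution that puts mass $1/(Kn_c)$ on each $h_{i,l}$, whose mean is $\bar\mu_l$, it gives
\[
 T_l=\frac1K\sum_c\frac1{n_c}\sum_{i\in\mathcal S_c}\|h_{i,l}\|_2^2-\|\bar\mu_l\|_2^2\le\max_i\|h_{i,l}\|_2^2\le1 .
\]
The last inequality uses $\|h_{i,l}\|_2\le1$ from Equation~\eqref{eq:normalized_features}. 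This second-moment identity is the step I expect to need the most care: the naive triangle-inequality bound only gives $T_l\le4$. I will check that the mixture weights sum to one, so that its mean really is $\bar\mu_l$.

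For orthogonal invariance, let $U$ be orthogonal and replace $z$ by $Uz$. Then $\|Uz\|_2=\|z\|_2$, so the normalization denominator is unchanged and $h\mapsto Uh$. Class means and the global mean transform linearly to $U\mu_{c,l}$ and $U\bar\mu_l$, and all differences are rotated by $U$, so every squared norm is preserved. Hence $W_l$, $B_l$, $T_l$ are unchanged, and therefore so is $q_l$.

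For rescaling $z\mapsto az$ with $a>0$, assume the floor is inactive both before and after, i.e.\ $\|z\|_2\ge\epsn$ and $a\|z\|_2\ge\epsn$. Then $h=az/(a\|z\|_2)=z/\|z\|_2$, so the normalized features are identical. All moments and $q_l$ are therefore unchanged.

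The two-sided floor condition is genuinely needed. If the floor were active on one side, $h$ would scale by a factor other than one, and invariance could fail. I would remark on this briefly, without proving it formally.
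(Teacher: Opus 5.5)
Your proposal is correct and matches the paper's proof essentially step for step: you expand classwise around $\mu_{c,l}$ so the cross term vanishes, bound $T_l$ via the second-moment identity $T_l=\frac1K\sum_c\frac1{n_c}\sum_{i\in\mathcal S_c}\|h_{i,l}\|_2^2-\|\bar\mu_l\|_2^2\le1$, and use norm preservation under orthogonal maps and cancellation of a common positive scale when the floor is inactive. Your explicit check that the equal-class mixture has total weight one and mean $\bar\mu_l$ supplies the one detail the paper leaves implicit.
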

\begin{proof}
Expand $h_{i,l}-\bar\mu_l=(h_{i,l}-\mu_{c,l})
+(\mu_{c,l}-\bar\mu_l)$. The cross term sums to zero within
each class, yielding $T_l=W_l+B_l$. Since $\|h_{i,l}\|_2\le1$,
\[
 T_l=\frac1K\sum_c\frac1{n_c}\sum_{i\in\mathcal S_c}
       \|h_{i,l}\|_2^2-\|\bar\mu_l\|_2^2\le1.
\]
Nonnegativity gives the remaining bounds. Orthogonal maps preserve
norms and squared distances; normalization cancels a common positive
scale whenever neither normalization activates the floor.
\end{proof}

Between-class dispersion is an average separation:
\[
 B_l=\frac1{2K^2}\sum_{c,c'\in\mathcal C_{\mathcal S}}
                    \|\mu_{c,l}-\mu_{c',l}\|_2^2.
\]
Consequently, the safeguard in Equation~\eqref{eq:aligned_loss}
penalizes a small aggregate dispersion, but permits some class means
to coincide when others remain separated. This distinction matters
for the prediction bound below.

The acquisition score uses a different geometry: raw features and
a fitted diagonal metric. Its interpretation follows from the
quadratic-form identity, with a centering term retained explicitly.

\begin{lemma}[Expected quadratic score]
\label{lem:mahalanobis}
Let $Z\in\R^d$ have mean $\mu$, covariance $\Sigma$, and finite
second moments. For a fixed symmetric matrix $A$ and fixed center $m$,
\begin{equation}
 \begin{aligned}
 \E[(Z-m)^\top A(Z-m)]
  ={}&\tr(A\Sigma)\\
    &+(\mu-m)^\top A(\mu-m).
 \end{aligned}
 \label{eq:quadratic_expectation}
\end{equation}
\end{lemma}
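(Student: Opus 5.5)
The plan is to center at the mean and expand, mirroring the decomposition used in Proposition~\ref{prop:scatter}. Write
\[
 Z-m=(Z-\mu)+(\mu-m).
\]
Substituting this into the quadratic form and using the symmetry of $A$ gives three terms:
\[
 (Z-\mu)^\top A(Z-\mu)
 +2(\mu-m)^\top A(Z-\mu)
 +(\mu-m)^\top A(\mu-m).
\]
The last term is deterministic because $\mu$, $m$, and $A$ are fixed. The middle term is linear in $Z-\mu$ with a fixed coefficient vector $2A(\mu-m)$, so its expectation vanishes since $\E[Z-\mu]=0$. This is the analogue of the cross term that sums to zero within each class in Proposition~\ref{prop:scatter}.

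For the first term, I will use the trace trick. A scalar equals its own trace, and the trace is cyclic, so
\[
 (Z-\mu)^\top A(Z-\mu)=\tr\!\left(A(Z-\mu)(Z-\mu)^\top\right).
\]
Trace and multiplication by the fixed matrix $A$ are linear, so they commute with expectation. Hence the expectation of this term equals
\[
 \tr\!\left(A\,\E[(Z-\mu)(Z-\mu)^\top]\right)=\tr(A\Sigma).
\]
Adding the three expectations gives Equation~\eqref{eq:quadratic_expectation}.

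The only step needing care is justifying the interchange of expectation with the sums and trace. Each entry of the quadratic form is a finite linear combination of products $Z_jZ_k$, $Z_j$, and constants. Finite second moments make all of these integrable, with $|Z_jZ_k|\le(Z_j^2+Z_k^2)/2$, so linearity of expectation applies entrywise and no further assumption is needed.

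Finally, I will specialize the result to the acquisition score. Take $A=\operatorname{diag}\bigl((\widehat v_{c,l,j}+\epsilon_a)^{-1}\bigr)$ and $m=\widehat\mu_{c,l}$, both held fixed during a round's selection. Then $\E[u_l]$ splits into a variance-weighted spread term and a squared center offset. The score's expectation therefore mixes within-class dispersion with bias of the fitted reference center, and is not an estimate of epistemic uncertainty.
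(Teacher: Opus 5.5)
Your proof is correct and follows the paper's argument: the same split $Z-m=(Z-\mu)+(\mu-m)$, vanishing mixed terms, and the trace identity for the centered quadratic term, with your integrability remark supplying a detail the paper leaves implicit. Your closing specialization goes beyond the lemma. If you keep it, note that $u_l$ uses the predicted class $\widehat c(x)$, so the identity applies only conditionally on that event, and those conditional moments can differ from the class moments, as the paper remarks after the lemma.
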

\begin{proof}
Write $Z-m=(Z-\mu)+(\mu-m)$. The mixed terms have zero
expectation, while
$\E[\tr(A(Z-\mu)(Z-\mu)^\top)]=\tr(A\Sigma)$.
\end{proof}

At the population center, Euclidean squared distance has expectation
$\tr(\Sigma)$, whereas squared Mahalanobis distance has expectation
$d$ for $A=\Sigma^{-1}$ when $\Sigma$ is nonsingular.
For the population diagonal metric
$A=\operatorname{diag}((\Sigma_{jj}+\epsilon_a)^{-1})$,
the expectation instead equals
$\sum_{j=1}^d\Sigma_{jj}/(\Sigma_{jj}+\epsilon_a)$.
Thus the implemented score is variance-standardized deviation,
rather than an estimate of the within-class trace.
Equation~\eqref{eq:acquisition} additionally uses estimated centers
and a predicted class. Conditioning on fitted references gives the
identity with the corresponding conditional moments; conditioning
on a predicted-class event can change those moments again.
Neither the identity nor a high score establishes the expected
predictive value of acquiring that example's label.

\subsection{Local effect of the aligned regularizer}
\label{sec:analysis_alignment}

Objective--feedback alignment has a precise local meaning: when
the ratio target is exceeded, the ratio penalty points along the
gradient of that same measured residual. The next result makes
the required step-size and smoothness assumptions explicit.

\begin{samepage}
\begin{proposition}[Direct local alignment]\label{prop:local}
Fix a measurement batch, a layer, and its target, and define
$g(\theta)=q_l(\theta)-q_l^\star$. Suppose $g(\theta)>0$ and
$g$ has an $L_g$-Lipschitz gradient, $L_g>0$, on a convex
neighborhood containing the step segment. Hold the applied
coefficient $\lambda=\lambda_l^{\mathrm{eff}}\ge0$ fixed and let
$a=H_\delta'(g(\theta))=\min\{g(\theta)/\delta,1\}$.
For $\eta\ge0$, the isolated ratio-penalty step
$\theta'=\theta-\eta\lambda a\nabla g(\theta)$ satisfies
\begin{align}
 g(\theta')\le g(\theta)
 -\eta\lambda a
 \left(1-\frac{L_g\eta\lambda a}{2}\right)
 \|\nabla g(\theta)\|_2^2.
 \label{eq:alignment_bound}
\end{align}
Hence $g$ and $H_\delta(g)$ do not increase when
$0\le\eta\lambda a\le2/L_g$. The residual decreases strictly
if $0<\eta\lambda a<2/L_g$ and $\nabla g(\theta)\ne0$.
\end{proposition}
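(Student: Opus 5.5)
The proof is the standard descent lemma for a function with Lipschitz gradient, applied to the residual $g$ along the isolated penalty step. Set $s=\eta\lambda a\ge0$, so that $\theta'=\theta-s\nabla g(\theta)$. By hypothesis the segment $\{\theta-\tau s\nabla g(\theta):\tau\in[0,1]\}$ lies in the convex neighborhood where $\nabla g$ is $L_g$-Lipschitz. First we record that $a=H_\delta'(g(\theta))=\min\{g(\theta)/\delta,1\}$. This follows from Equation~\eqref{eq:huber}, since $g(\theta)>0$ puts us on the quadratic or the linear branch. It also gives $a\in(0,1]$; the value of $a$ only scales the step and does not affect the argument.

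\textbf{Quadratic upper bound.} The fundamental theorem of calculus along the segment gives
\[
 g(\theta')-g(\theta)=-s\|\nabla g(\theta)\|_2^2
 +\int_0^1\bigl\langle\nabla g(\theta-\tau s\nabla g(\theta))-\nabla g(\theta),\,-s\nabla g(\theta)\bigr\rangle\,d\tau .
\]
By Cauchy--Schwarz and the Lipschitz bound, the integrand is at most $L_g\tau s^2\|\nabla g(\theta)\|_2^2$, and integrating gives $L_g s^2\|\nabla g(\theta)\|_2^2/2$. Substituting $s=\eta\lambda a$ yields Equation~\eqref{eq:alignment_bound} directly.

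\textbf{Consequences.} If $0\le s\le2/L_g$, the factor $s(1-L_gs/2)$ is nonnegative, so $g(\theta')\le g(\theta)$. If $0<s<2/L_g$ and $\nabla g(\theta)\ne0$, that factor is strictly positive and the decrease is strict. For $H_\delta(g)$, we note that $H_\delta$ is nondecreasing on $\R$. It is zero on $(-\infty,0]$ and increasing afterwards, and it is continuous at $0$ and at $\delta$, where both branches match: $\delta/2$ at $u=\delta$. Monotonicity then transfers $g(\theta')\le g(\theta)$ to $H_\delta(g(\theta'))\le H_\delta(g(\theta))$, even if $g(\theta')$ crosses below zero.

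The only step needing care is the quadratic upper bound. It uses Lipschitz continuity only on the segment, which is why the hypothesis is stated on a convex neighborhood containing it. Apart from that, the argument is routine. I would add a remark that the statement concerns the isolated ratio-penalty step only: cross-entropy and the separation term $F_l$ are excluded, consistent with the word ``isolated.''
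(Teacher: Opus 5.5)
Your proposal is correct and takes the same route as the paper. The paper also applies the smoothness (descent) inequality with $v=-\eta\lambda a\nabla g(\theta)$ and then uses that $H_\delta$ is nondecreasing. The only difference is that you derive the descent inequality from the fundamental theorem of calculus along the segment, whereas the paper cites it directly.
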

\par
\end{samepage}
\begin{proof}
Apply the smoothness inequality
$g(\theta+v)\le g(\theta)+\nabla g(\theta)^\top v
+(L_g/2)\|v\|_2^2$
with $v=-\eta\lambda a\nabla g(\theta)$.
The penalty $H_\delta$ is nondecreasing in its argument.
\end{proof}

The result applies to a fixed or adaptively selected coefficient
once its value is held fixed for the model update, as in
Section~\ref{sec:aligned_objective}. It establishes the local
direction of the isolated intervention, rather than an advantage
of feedback over fixed strength. When $g\le0$, the ratio term
has zero gradient. For the complete training update,
cross-entropy, the separation penalty, other layers, momentum,
and changing batches can alter the direction. The smoothness
condition is local and need not hold across activation or
normalization-floor boundaries. For the actual optimizer direction $v$,
defined by $\theta^+=\theta-\eta v$, suppose the same local smoothness
assumption holds on a convex neighborhood containing the actual update
segment. It then yields
\begin{equation}
\begin{aligned}
g(\theta^+)\le{}&g(\theta)-\eta\langle\nabla g(\theta),v\rangle\\
&+\frac{L_g\eta^2}{2}\|v\|_2^2.
\end{aligned}
\label{eq:complete_update_residual_bound}
\end{equation}
If $v\ne0$ and $\langle\nabla g(\theta),v\rangle>0$, strict
descent follows for
$0<\eta<2\langle\nabla g(\theta),v\rangle/(L_g\|v\|_2^2)$.
For plain gradient descent, write $p=\nabla g(\theta)$ and
$v=d+\lambda a p$, where $d$ contains the other gradient terms.
Then $\langle p,v\rangle=\langle p,d\rangle+\lambda a\|p\|_2^2$;
the other losses may oppose the isolated intervention. With momentum,
clipping, or weight decay, $v$ must denote the resulting realized
direction. The scalar residual and coefficient bounds alone do not
ensure this directional condition or a monotone full-training trajectory.

\subsection{Bounded coefficients and the feedback response}
\label{sec:analysis_control}

MACC-Lite combines a bounded desired strength with a leaky update.
Its coefficient bounds hold for every finite residual sequence,
including the first phase-start action described in Section~\ref{sec:workflow}.

\begin{proposition}[Bounded feedback coefficients]\label{prop:bounded}
Let $0\le\lambda_{l,0}\le\lambda_{\max}$, $0<\rho\le1$, and
$\delta>0$. Equation~\eqref{eq:lite} keeps every coefficient in
$[0,\lambda_{\max}]$. For $t\ge1$, its response to an arbitrary
sequence of desired strengths is
\[
 \lambda_{l,t}=(1-\rho)^t\lambda_{l,0}
 +\rho\sum_{j=0}^{t-1}(1-\rho)^{t-1-j}u_{l,j}.
\]
If $u_{l,j}=u_l$ is constant, this reduces to
\begin{equation}
 \lambda_{l,t}-u_l=(1-\rho)^t(\lambda_{l,0}-u_l).
 \label{eq:coefficient_response}
\end{equation}
\end{proposition}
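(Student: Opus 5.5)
The proof proceeds by induction on $t$, using only the recurrence in Equation~\eqref{eq:lite} and the range of the desired strength; no property of the residual sequence $e_{l,t}$ is needed beyond finiteness.

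\textbf{Range of the desired strength.} For any finite residual, $[e_{l,t}]_+\ge0$. Since $\delta>0$, we have $\tanh([e_{l,t}]_+/\delta)\in[0,1)$, and therefore $0\le u_{l,t}<\lambda_{\max}$ holds for every $t$, including the phase-start action.

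\textbf{Invariance of $[0,\lambda_{\max}]$.} Assume inductively that $\lambda_{l,t}\in[0,\lambda_{\max}]$; the base case is the hypothesis on $\lambda_{l,0}$. The update $\lambda_{l,t+1}=(1-\rho)\lambda_{l,t}+\rho u_{l,t}$ has weights $1-\rho\ge0$ and $\rho>0$ that sum to one, because $0<\rho\le1$. It is therefore a convex combination of two points of the interval $[0,\lambda_{\max}]$. The interval is convex, so it contains $\lambda_{l,t+1}$.

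\textbf{Closed-form response.} We prove the unrolled formula by induction on $t\ge1$.
\begin{itemize}
\item For $t=1$ it is exactly the recurrence, since $(1-\rho)^{0}=1$.
\item For the inductive step, substitute the formula for $\lambda_{l,t}$ into the recurrence. The initial term picks up one more factor $(1-\rho)$. Each summand $\rho(1-\rho)^{t-1-j}u_{l,j}$ becomes $\rho(1-\rho)^{t-j}u_{l,j}$. The new term $\rho u_{l,t}$ is the $j=t$ summand of the sum up to $t$.
\end{itemize}
For constant $u_{l,j}=u_l$, the geometric sum gives $\rho\sum_{j=0}^{t-1}(1-\rho)^{t-1-j}=1-(1-\rho)^t$. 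This yields $\lambda_{l,t}=(1-\rho)^t\lambda_{l,0}+(1-(1-\rho)^t)u_l$, which rearranges to Equation~\eqref{eq:coefficient_response}. Alternatively, subtracting $u_l$ from both sides of the recurrence gives $\lambda_{l,t+1}-u_l=(1-\rho)(\lambda_{l,t}-u_l)$ directly. When $\rho=1$ we use the convention $0^0=1$; the formula then says $\lambda_{l,t}=u_{l,t-1}$, which is consistent with the recurrence.

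The only step needing care is bookkeeping of indices in the unrolled sum. The bound itself follows immediately from convexity.
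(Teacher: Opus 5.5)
Your proposal is correct and follows the same route as the paper. Like the paper, you bound the desired strength $\lambda_{\max}\tanh([e]_+/\delta)$ in $[0,\lambda_{\max}]$, obtain invariance of the interval by induction from the convex-combination form of the update, unroll the recurrence, and reach Equation~\eqref{eq:coefficient_response} by the geometric sum or by subtracting $u_l$. One quibble: your strict bound $u_{l,t}<\lambda_{\max}$ assumes $\lambda_{\max}>0$, but the argument only ever uses the non-strict inclusion, so nothing breaks.
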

\begin{proof}
The map $\lambda_{\max}\tanh([e]_+/\delta)$ lies in
$[0,\lambda_{\max}]$. Each update is a convex combination
of this value and the previous coefficient, proving invariance
of the interval by induction. Unrolling the recurrence gives
the arbitrary-input formula; subtracting a constant $u_l$
gives Equation~\eqref{eq:coefficient_response}.
\end{proof}

With $\rho=0.20$, the fixed-input coefficient error contracts by
a factor of $0.8$ per control event. If the residual remains
nonpositive, $u_l=0$ and the coefficient decays toward zero;
this does not deactivate the separate separation penalty.
Full MACC also has bounded applied coefficients because its
projection and binary flags preserve $[0,\lambda_{\max}]$,
but it does not follow the Lite recurrence.
These are properties of the coefficient mechanism. In the coupled
system, residuals depend on the evolving network and moving
measurements, so coefficient boundedness alone establishes
neither target convergence nor closed-loop stability.

\subsection{Conditions linking geometry to prediction}
\label{sec:analysis_prediction}

Within-class concentration can control prediction error when
class separation and the decision rule are specified. The following
population statement identifies these additional conditions.

\begin{proposition}[Conditional nearest-centroid error bound]
\label{prop:centroid}
Let $H$ be a feature vector with finite second moments and equal
class probabilities over $C\ge2$ classes. Define population means
$\mu_c=\E[H\mid Y=c]$, minimum separation
$d_{\min}=\min_{c\ne c'}\|\mu_c-\mu_{c'}\|_2>0$, and
\[
 W_{\mathrm{pop}}=\frac1C\sum_c
       \E[\|H-\mu_c\|_2^2\mid Y=c].
\]
The classifier $\widehat Y(H)=\arg\min_c\|H-\mu_c\|_2$,
with any fixed tie rule, satisfies
\begin{equation}
 \Pr\{\widehat Y(H)\ne Y\}
 \le\min\{1,4W_{\mathrm{pop}}/d_{\min}^2\}.
 \label{eq:centroid_error}
\end{equation}
\end{proposition}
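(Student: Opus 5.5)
The bound follows from a misclassification-implies-large-deviation argument combined with Markov's inequality applied to the squared within-class distance, averaged over the equally likely classes. The $\min\{1,\cdot\}$ is trivial, so it suffices to show $\Pr\{\widehat Y(H)\ne Y\}\le 4W_{\mathrm{pop}}/d_{\min}^2$.

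\textbf{Step 1: a deterministic implication.} Fix a class $c$ and a realization $H$ with $\|H-\mu_c\|_2<d_{\min}/2$. For any $c'\ne c$, the triangle inequality gives
\[
\|H-\mu_{c'}\|_2\ge\|\mu_c-\mu_{c'}\|_2-\|H-\mu_c\|_2>d_{\min}-d_{\min}/2>\|H-\mu_c\|_2 .
\]
So $\mu_c$ is the strict unique nearest mean, and $\widehat Y(H)=c$ regardless of the tie rule. Contrapositively, the event $\{\widehat Y(H)\ne Y\}$ given $Y=c$ is contained in $\{\|H-\mu_c\|_2\ge d_{\min}/2\}$. The only point needing care is strictness: the containment must hold with $\ge$ on the deviation side, so that ties between centroids never cause an error inside the open ball. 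The strict inequality above handles this.

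\textbf{Step 2: conditional Markov.} Conditioning on $Y=c$ and applying Markov's inequality to the nonnegative variable $\|H-\mu_c\|_2^2$, which has finite mean by the finite second moments, gives
\[
\Pr\{\widehat Y\ne Y\mid Y=c\}\le\Pr\{\|H-\mu_c\|_2^2\ge d_{\min}^2/4\mid Y=c\}\le\frac{4\,\E[\|H-\mu_c\|_2^2\mid Y=c]}{d_{\min}^2}.
\]

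\textbf{Step 3: average over classes.} With equal class probabilities $1/C$, the law of total probability yields
\[
\Pr\{\widehat Y\ne Y\}=\frac1C\sum_c\Pr\{\widehat Y\ne Y\mid Y=c\}\le\frac{4W_{\mathrm{pop}}}{d_{\min}^2}.
\]
Combining this with the trivial bound $1$ gives Equation~\eqref{eq:centroid_error}.

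\textbf{Main obstacle and remarks.} There is no real obstacle beyond the tie-rule detail in Step 1. The remarks worth adding afterward concern scope. The bound uses the minimum pairwise separation $d_{\min}$, not the aggregate $B_l$. As noted after Proposition~\ref{prop:scatter}, $B_l$ does not bound $d_{\min}$ from below, so a small $q_l$ alone does not yield this guarantee. In addition, the classifier here is the population nearest-centroid rule, not the trained linear head.
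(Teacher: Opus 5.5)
Your proof is correct and matches the paper's own argument step for step. The triangle inequality shows that an error forces $\|H-\mu_Y\|_2\ge d_{\min}/2$; conditional Markov on the squared distance, averaging over the equally likely classes, and capping at one then give the bound. Your explicit check of strictness, so that the tie rule never matters inside the open ball, spells out the same step more carefully.
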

\begin{proof}
If $\|H-\mu_Y\|_2<d_{\min}/2$, the triangle inequality
makes every other centroid farther away. An error therefore
requires $\|H-\mu_Y\|_2^2\ge d_{\min}^2/4$.
Markov's inequality and averaging over the equally probable
classes give the bound, capped by one.
\end{proof}

Here $W_{\mathrm{pop}}$ is a population moment, distinct from
the finite-batch $W_l$ used in training. The trained softmax
head is not constrained to implement nearest-centroid prediction,
and a positive aggregate $B_l$ does not imply $d_{\min}>0$.
Consequently, reducing the batch ratio does not by itself tighten
this population error bound.

\paragraph{Input-space robustness.}
A corresponding robustness condition concerns logit margins and
input sensitivity, as in Lipschitz-based
certification~\cite{hein2017formal}. For a correctly classified
input $x$ with true class $y$, let
$m_{y,j}(x)=f_\theta(x)_y-f_\theta(x)_j$.
Suppose each margin is $L_{y,j}$-Lipschitz with respect to
$\|\cdot\|_\infty$ throughout the admissible radius-$\epsilon$
neighborhood. Every admissible perturbation $v$ with
$\|v\|_\infty\le\epsilon$ then satisfies
\[
 m_{y,j}(x+v)\ge m_{y,j}(x)-\epsilon L_{y,j}.
\]
Thus $m_{y,j}(x)>\epsilon L_{y,j}$ for every $j\ne y$
is sufficient to preserve the correct class. Neither the scatter
nor the spectral objective supplies the required bounds on
$L_{y,j}$. Section~\ref{sec:experiments} therefore specifies
separate evaluations of predictive performance, representation
geometry, and adversarial robustness, together with matched
controls for the value of coefficient adaptation.

The need for input-sensitivity control is not merely a missing step in
the preceding sufficient condition. The following construction preserves
the clean measurements while changing robustness.

\begin{proposition}[Identical clean geometry with arbitrarily small robust radius]
\label{prop:clean_geometry_fragile_map}
There is a four-example data set in $[0,1]^2$, with two distinct examples
in each of two classes, and a family of finite ReLU feature maps with a
fixed linear classifier such that every member has identical clean
features, positive within-class and between-class scatter, identical clean
logits and cross-entropy, and perfect clean accuracy. For every
$\epsilon>0$, a member admits an admissible misclassifying perturbation
of $L_\infty$ norm less than $\epsilon$ at every clean example.
\end{proposition}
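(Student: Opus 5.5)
We prove the statement with an explicit construction: a fixed "clean" ReLU feature map plus a narrow ReLU spike added near each example. Each spike vanishes at the clean point but rises steeply within an arbitrarily small neighbourhood of it. Because the spike is zero on the data, it changes none of the clean quantities. Its steepness makes the robust radius as small as we like.

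\textbf{Data and base map.} Take the examples
\[
x_1=(0.2,0.5),\quad x_2=(0.3,0.5)\ \text{(class 1)},\qquad x_3=(0.7,0.5),\quad x_4=(0.8,0.5)\ \text{(class 2)},
\]
all in $[0,1]^2$. Let the base feature map be $\phi_0(x)=(\mathrm{ReLU}(x_1),\mathrm{ReLU}(1-x_1))\in\R^2$. This is a finite ReLU map, and on the data it equals $(x_1,1-x_1)$. Use the fixed linear classifier with logits $f(x)=(z_2-z_1,\;z_1-z_2)$ for $z=\phi(x)$, so class $1$ is predicted iff $z_2>z_1$.

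The clean features are $(0.2,0.8),(0.3,0.7),(0.7,0.3),(0.8,0.2)$, and these give perfect accuracy. Normalizing by Equation~\eqref{eq:normalized_features} leaves two distinct unit vectors in each class and distinct class means. Hence $W_l>0$ and $B_l>0$, and Proposition~\ref{prop:scatter} guarantees the quantities are finite and well defined.

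\textbf{The fragile family.} For $\epsilon\in(0,0.05)$ let $\tau=\epsilon/2$, and for each example $x_i$ define the tent spike
\[
s_i(x)=\mathrm{ReLU}(x_1-x_{i,1})-2\,\mathrm{ReLU}(x_1-x_{i,1}-\tau/2)+\mathrm{ReLU}(x_1-x_{i,1}-\tau).
\]
This spike is supported on $[x_{i,1},x_{i,1}+\tau]$, so it vanishes at every data point. Its peak value is $\tau/2$.

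Let $\phi_\epsilon=\phi_0+M\sum_i s_i(x)\,\sigma_i$, where $\sigma_i$ pushes toward the wrong class: $\sigma_i=(1,0)$ for class 1 and $(0,1)$ for class 2. Choose $M=4/\tau$. At the peak of spike $i$ the wrong coordinate then increases by $M\tau/2=2$, which exceeds the clean margin (at most $0.6$) plus the small drift of $\phi_0$ over the distance $\tau$.

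The perturbation is $v=(\tau/2,0)$, with $\|v\|_\infty=\epsilon/4<\epsilon$. It keeps $x+v$ inside $[0,1]^2$, which makes it admissible. The supports of different spikes are disjoint, since $\tau<0.1$ is smaller than the gaps between the examples, so only one spike is active at a time.

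Each $\phi_\epsilon$ is a finite one-hidden-layer ReLU map followed by linear combination; one may append a further ReLU layer if nonnegative features are wanted. It equals $\phi_0$ on the four data points. Therefore the clean features, the scatter statistics, the logits, the cross-entropy and the accuracy are all identical across the family, while each example has a misclassifying perturbation of norm below $\epsilon$.

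\textbf{Main obstacle.} The only real work is bookkeeping. We must verify that the spike overwhelms the clean margin at $x+v$ after including the change in $\phi_0$ along $v$. We must also confirm that the spike supports stay disjoint and inside $[0,1]^2$ for all small $\epsilon$. Both checks reduce to comparing explicit constants.
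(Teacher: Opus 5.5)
Your construction is correct and follows essentially the same route as the paper's proof: ReLU tents vanish at the clean inputs, leaving every clean quantity unchanged, and a shift of half their width lets them overwhelm the clean margin. The paper uses two triangles, each shared by a class, and unit-norm clean features so the scatter is explicit, whereas you use one spike per example and argue positivity of the scatter qualitatively. The only loose end is the case $\epsilon\ge 0.05$, which is covered by reusing any member built for a smaller radius.
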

\begin{proof}
Set $b=1/5$ and $c=\sqrt{24/25}$. Assign class 1 to
$(1/4,2/5),(1/4,3/5)$ and class 2 to
$(3/4,2/5),(3/4,3/5)$. For $0<\eta<1/8$, define the ReLU triangle
\[
T_{t,\eta}(u)=
\frac{[u-t]_+-2[u-t-\eta]_++[u-t-2\eta]_+}{\eta}.
\]
It vanishes outside $[t,t+2\eta]$, is zero at $t$, and is one at
$t+\eta$. Define the finite ReLU map
\begin{align*}
a_\eta(u)&=c(2-4u)-2cT_{1/4,\eta}(u)+2cT_{3/4,\eta}(u),\\
z_\eta(u,v)&=(a_\eta(u),b(10v-5)),
\end{align*}
and fixed classifier weights $w_1=(1,0)$, $w_2=(-1,0)$, with logits
$f_j(x)=w_j^\top z_\eta(x)$. The triangles vanish at every clean
example. Their features are $(c,-b),(c,b),(-c,-b),(-c,b)$ for every
$\eta$, each of norm one. Normalization therefore leaves them
unchanged, and equal class weighting gives
\[
W=b^2=1/25,\qquad B=c^2=24/25,\qquad W+B=1.
\]
The stabilized log ratio, frozen-target penalties, clean margin $2c$,
and cross-entropy $\log(1+e^{-2c})$ are independent of $\eta$.

Perturb each clean input by $(\eta,0)$. The inputs remain in $[0,1]^2$
and the triangle supports do not overlap. The first feature coordinate
becomes $-c(1+4\eta)<0$ for class 1 and $c(1-4\eta)>0$ for class 2,
so all four predictions flip. The second coordinate remains $\pm b$,
keeping the raw norm above the normalization floor; normalization
does not change the decision. Choose
$0<\eta<\min\{\epsilon,1/8\}$ to obtain the stated perturbation.
\end{proof}

Removing the two triangles preserves the same clean features and logits
but places the decision boundary at $u=1/2$, at input distance $1/4$
from every clean example. The constructed family has no uniform weight
or input-sensitivity bound as $\eta$ decreases. It proves that the
measured clean geometry and logits alone cannot supply a uniform positive
lower bound on robust radius across such feature maps; it does not
identify the mechanism in the trained models.
Matched measurements of margins, perturbation-induced feature
displacement, and attack success are still required for that empirical
question.

\section{Experimental design}\label{sec:experiments}

The evaluation follows the distinction between representation objectives
and coefficient control developed in Sections~\ref{sec:framework}
and~\ref{sec:theory}. Classification studies compare acquisition rules
and regularization treatments at matched annotation budgets, then test
feedback against independently selected constant and scheduled strengths.
Separate language, diffusion, and audio studies evaluate the spectral
variants under their task-specific protocols.

\subsection{Study organization}\label{sec:study_sequence}

Throughout this paper, a \emph{condition} is a method configuration, a
\emph{run} is one condition under one seed, and a \emph{phase} is one
acquisition/training stage within a run.

The initial ImageNet-100 comparison evaluated nine conditions across six
paired seeds. After reviewing those results, we evaluated two additional
conditions: entropy acquisition with aligned DCR and MACC-Lite, and a
matched-protocol UHerding-Margin comparator. Each used the original six
seed-specific initializations, targets, splits, annotation budgets, and
evaluation endpoint. Their comparison lists were fixed before the
additional runs, and their statistical families are reported separately.
A third follow-up added entropy with fixed aligned DCR, retaining the
same per-seed initial model and optimizer, splits, targets, floor penalty,
acquisition protocol, and optimization schedule. Its two contrasts were
declared before the new fixed-condition outcomes were inspected, after
the entropy and Lite outcomes were already known. This later control
adds six runs and 30 phases; it is not fresh-seed confirmation.
The resulting classification
record for these studies comprises 72 method--seed runs and 360
post-initialization phases across 12 conditions, with six paired seeds.

A separate study then tunes fixed and preset-schedule controls on three
new ResNet-50 seeds, confirms the selected comparisons on ten further
ResNet-50 seeds, and transfers all settings to ten ResNet-18 seeds.
Its 104 method runs, 23 shared initializations, and 520 post-initialization
phases are additional experiments. The tuning, confirmation, and transfer
seed sets are disjoint. They are not pooled with the earlier six-seed
comparisons, and fresh seeds do not make the previously studied dataset
an untouched benchmark.

Engineering checks verify implementation behavior and are not counted
as predictive-performance experiments. The original ImageNet-100
spectral pilot uses seeds 601 and 602. Saved-checkpoint attack
diagnostics and a bounded aligned-objective development screen
precede the final seeds 801--806. The aligned and spectral
development protocols also differ in sampling, BatchNorm treatment,
and integration choices; their absolute accuracies cannot isolate
the effect of changing DCR.

C4 uses six separate training seeds and an additional unused-shard
evaluation of the same saved checkpoints.
Diffusion development uses seeds 1001 and 1002, followed by
confirmation seeds 1101--1106. Audio development uses seeds
1201 and 1202 for two epochs per condition; confirmation uses
seeds 1301--1306 for ten epochs. Confirmation starts from the
original pretrained assets and new seed-specific task components,
not development-final checkpoints. Only audio confirmation
evaluates caption-model predictions on the retained official test
split. Development results are not pooled into confirmation inference.

\subsection{ImageNet-100 data and label accounting}\label{sec:data_accounting}

We use the fixed 100-class subset associated with Contrastive
Multiview Coding~\cite{tian2020cmc}, drawn from
ImageNet~\cite{deng2009imagenet}.
It contains 126,689 training images and 5,000 validation images,
with 50 validation images per class. Original image bytes and
dimensions are retained. A manifest records class mappings, source
revision, paths, and SHA-256 values, which are checked before
execution. The evaluated benchmark is ImageNet-100, not ImageNet-1K.

Stratified sampling supplies 12,669 initial annotations per seed.
The controller holdout contains 633 of these, leaving 12,036
gradient-training images. Five rounds each acquire 12,669
additional examples. Annotation totals are 12,669, 25,338, 38,007,
50,676, 63,345, and 76,014. The final gradient-training set
contains 75,381 images. The final annotation fraction is
approximately $60.0005\%$, reported as 60\%.
Holdout labels are included in every method's annotation cost.
Pool labels are consulted only after selection.

The split procedure excludes holdout indices from gradient training and
acquisition. A subsequent content-hash audit identifies five
training--validation image matches and duplicate content within the
training inventory. The frozen benchmark results are retained, with
post-hoc duplicate-exclusion sensitivity reported in
Section~\ref{sec:duplicate_sensitivity}. Index-disjoint partitions are
therefore not asserted to be content-disjoint.

\subsection{Classification optimization and matching}

The initial study and later confirmation use a randomly initialized
ResNet-50 with a 100-class head; the transfer stage uses ResNet-18~\cite{he2016deep}. Geometry is measured at the spatially pooled
outputs of layer3 and layer4. Each seed has one shared 40-nominal-epoch
cross-entropy initialization, reused by the original methods and
later matched conditions within that seed. The new control study
uses independently initialized models for its own seed sets.
A physical minibatch contains 16 uniformly selected represented
classes and eight distinct acquired examples per class.
A nominal epoch has $\lceil|\mathcal T_r|/128\rceil$ sampled
minibatches rather than a permutation pass through the labeled set.

Training uses random resized $224\times224$ crops and horizontal
flips. Scoring, feedback, and evaluation resize the shorter side to
256 and take a 224-pixel center crop. ImageNet normalization is
inside the model, so attacks act in pixel space.
Augmentation and sampling streams are keyed by seed, phase, step,
and example index. Common random-number conventions do not make
sampled images identical once acquired sets differ.

Optimization uses SGD with momentum 0.9, Nesterov acceleration,
and weight decay $10^{-4}$ on multidimensional parameters.
One-dimensional parameters receive no decay.
Parameters remain FP32 with BF16 autocast, and gradients are
clipped at norm 5. Initialization uses learning rate 0.05,
three nominal warm-up epochs, and cosine decay.
Each subsequent 20-epoch phase starts at rate 0.025 with one
warm-up epoch and cosine decay; momentum is retained.
BatchNorm running statistics are updated during initialization
and frozen afterward; affine parameters remain trainable.

The five acquired-label phases contain 3,880, 5,840, 7,820,
9,800, and 11,780 optimizer steps.
Initialization contains 3,800 steps.
Each conceptual condition run therefore includes 42,920 steps and
5,493,760 sampled example presentations. Initialization is executed
once per seed and allocated once per conceptual condition when
reporting comparable costs.

\begin{table*}[tbp]
\caption{Initial ImageNet-100 comparison. All nine conditions use
seeds 801--806 and the same 10--60\% annotation schedule.
``None'' disables both aligned penalties; all conditions record geometry.
The three follow-up conditions are reported separately.}
\label{tab:baselines}
\centering\small
\begin{tabularx}{\textwidth}{@{}p{0.30\textwidth}p{0.25\textwidth}Y@{}}
\toprule
Condition & Acquisition & Representation control \\
\midrule
Random & Uniform random selection & None \\
Entropy & Highest predictive entropy & None \\
BADGE & Exact factorized weight-gradient distances &
None; independent implementation, bias excluded \\
Noise Stability & Perturbed-output deviations with greedy coverage &
None; five noise draws, independent implementation \\
\model{} acquisition only &
Predicted-class diagonal Mahalanobis score & None \\
Random + aligned fixed DCR & Uniform random selection &
Aligned objective, $\lambda_l=0.03$ \\
\model{} + aligned fixed DCR & Mahalanobis score &
Aligned objective, $\lambda_l=0.03$ \\
\model{} + aligned MACC-Lite & Mahalanobis score &
Bounded, smoothed coefficient feedback \\
\model{} + aligned Full MACC & Mahalanobis score &
Learned coefficient increments and activation flags \\
\bottomrule
\end{tabularx}
\end{table*}

\subsection{Tuned controls and backbone transfer}
\label{sec:feedback_design}

All four treatments share entropy acquisition, annotation budgets,
and the optimization protocol. The three regularized treatments share
the aligned loss, target calibration, and fixed separation penalty;
entropy alone omits both representation penalties. The experiment
therefore compares the complete regularized-feedback treatment with
entropy alone, and compares coefficient rules within the same
regularized treatment.
Four constant strengths, $0.003$, $0.01$, $0.03$, and $0.10$, and four
linear schedules are evaluated on ResNet-50 seeds 41001--41003, giving
24 tuning runs. Each schedule is either increasing or decreasing,
with peak $a\in\{0.03,0.10\}$:
\begin{equation}
 \lambda_l(p)=ap\quad\text{or}\quad a(1-p),\qquad p\in[0,1].
 \label{eq:openloop_schedule}
\end{equation}
Progress $p$ is the cumulative fraction of the 39,120
post-initialization optimizer steps across all five rounds. Strengths
refresh at phase starts and nonterminal 100-step control boundaries,
using the same event cadence as Lite. Schedules receive no observation
feedback. Both layers use the same fixed or scheduled strength.

Within each family, selection maximizes mean final accuracy on the
633-example training holdout across the three tuning seeds. Accuracy
ties within $10^{-12}$ are broken by lower mean holdout cross-entropy,
then candidate identifier. Official validation is not evaluated during
tuning. The selected constant and schedule are frozen before
confirmation begins. Lite retains $\lambda_{\max}=0.10$, initial
strength 0.03, smoothing 0.20, and Huber scale 0.25 from the earlier
development, without additional tuning in this study.

Confirmation uses ResNet-50 seeds 42001--42010; transfer uses ResNet-18
seeds 43001--43010. Each stage compares entropy alone, entropy with
Lite, entropy with the selected constant, and entropy with the selected
schedule, for 40 runs per backbone. Every seed has a shared initial
model, targets, holdout and first acquisition. Later labeled sets can
diverge with the model predictions. Each backbone is initialized from
scratch, and ResNet-18 receives no retuning. This is architecture
transfer on one dataset, not transfer of trained weights or evaluation
on a new image benchmark.

\subsection{Acquisition baselines and matched comparisons}

Table~\ref{tab:baselines} separates acquisition, regularization,
and adaptation.
BADGE uses exact factorized last-linear-layer weight-gradient
inner products rather than the pilot's random projection;
bias terms are excluded.
Noise Stability uses five globally normalized Gaussian
perturbations of the trainable parameter vector at relative
scale 0.001, normalized softmax-output deviations, concatenation
without projection, and greedy farthest-first selection.
Model parameters and buffers are restored after noisy forwards.

Nonrandom methods cache required pool outputs at the current
checkpoint. Mahalanobis acquisition additionally estimates class
statistics from acquired examples.
Equal annotation budgets and pass sizes do not equalize
acquisition time, especially for Noise Stability.
Random plus aligned fixed DCR tests whether the objective
contributes without geometric acquisition.
The final matrix has no new full-label reference; the pilot's
different-budget supervised reference is not mixed into the
matched final comparison.

\subsection{UHerding-Margin: matched-protocol implementation}
\label{sec:uherding_setup}

The recent comparator implements the uncertainty-weighted coverage
mechanism of Uncertainty Herding~\cite{bae2025uncertainty},
as \texttt{uherding\_margin\_initial\_r50}.
Each seed's shared-initial supervised ResNet-50 supplies
unit-normalized layer4 features $g_i$, cached for the full
training pool and fixed across rounds.
This differs from a separately pretrained self-supervised
representation. The task classifier continues under the
common warm-start schedule.

Before acquisition, temperature $T$ minimizes 15-bin ECE over
81 logarithmically spaced values in $[0.1,10]$ using the
budgeted 633-image holdout.
Temperature-scaled probabilities define
$\omega_i=1-(p_{i,(1)}-p_{i,(2)})$.
The bandwidth $\sigma$ is the smallest positive Euclidean
distance between fixed features of currently annotated examples,
including held-out annotations. Squared distances at most
$10^{-12}$ are numerical zero.
The kernel is
$k_\sigma(g_i,g_j)=\exp(-\|g_i-g_j\|_2^2/\sigma^2)$.

For candidate pass $P$, greedy selection maximizes
\begin{equation}
 G_P(j\mid\mathcal C)=
 \sum_{i\in P}\omega_i
 \left[
 k_\sigma(g_i,g_j)-
 \max_{a\in\mathcal C}k_\sigma(g_i,g_a)
 \right]_+,
 \label{eq:uherding_gain}
\end{equation}
where $\mathcal C$ contains annotated centers and examples
already selected in the round.
The 10,000-candidate and 10\%-per-pass convention is retained
until 12,669 new examples are acquired.
Distances, kernels, and gains use float64.
Lazy greedy is checked against exhaustive greedy on
deterministic CPU and CUDA fixtures.

No DCR is applied. Initializations, optimizer states, label
accounting, balanced batches, post-initialization BatchNorm
treatment, and the final attack endpoint match the task protocol.
Feature extraction, acquisition including calibration, and
training are timed separately.
This comparison evaluates the disclosed adaptation, not the
original authors' representation, unrestricted selection domain,
or published benchmark scores.

\subsection{Classification evaluation and statistical analysis}
\label{sec:statistics}

Every post-initialization phase in the original studies and the later
confirmation/transfer stages is evaluated on all 5,000 official
validation images in full precision. Shared initializations and tuning
in the later control study do not evaluate official validation.
Reported outcomes include pooled cross-entropy, top-1 and top-5
accuracy, 15-bin ECE, and equal-class normalized moments.
The selected checkpoint is the final scheduled checkpoint,
not the best validation epoch.
Logits, predictions, labels, and compact class-moment sufficient
statistics are retained.

The final checkpoint in the original six-seed studies receives standard
AutoAttack~\cite{croce2020autoattack} at $L_\infty$ radius $8/255$
on all 5,000 images.
Its components are APGD-CE, targeted APGD, targeted FAB,
and Square Attack, without shortened prescribed iterations.
Attack minibatches contain 64 images; restartable storage chunks
contain 256. Pixel bounds, perturbation norms, and survival masks
are recorded. No adversarial training is used.
The later control study adds no adversarial evaluations.
A compatibility repair changed \texttt{view} to \texttt{reshape}
in the library's zero-gradient diagnostic, retaining the diagnostic,
attack configuration, and completed training.

For the original classification studies, the paired experimental units
are six seeds, conditional on the fixed dataset and protocol.
Seven contrasts are specified separately for final clean and
robust accuracy. For target-minus-reference differences $d_i$,
the individual paired-$t$ interval is
\[
 \bar d\pm t_{0.975,5}\frac{s_d}{\sqrt6}.
\]
Both paired-$t$ tests and enumeration of all $2^6$ sign flips
are retained. Holm correction~\cite{holm1979} is applied within
each declared endpoint and test family.
Intervals are individual, unadjusted intervals, not simultaneous
confidence bands. The paired-$t$ analysis relies on its
distributional approximation for the differences. Interpreting
sign-flip enumeration as an exact null test requires the
corresponding sign-exchangeability or symmetry assumption;
enumeration alone does not make it assumption-free.

\paragraph{Resolution of the six-seed exact tests.}
\label{sec:exact_resolution}
For the two-sided absolute-mean statistic, a sign pattern and its
negation are equally extreme. Hence
\begin{equation}
 p_{\mathrm{exact}}\ge 2/2^6=0.03125.
 \label{eq:exact_resolution}
\end{equation}
For a Holm family of $m\ge2$ comparisons at $\alpha=0.05$, the first
threshold is $0.05/m\le0.025$. No rejection is therefore attainable
with these six-seed, two-sided, Holm-adjusted exact tests. This is a
resolution limit, not evidence that the effects are absent. Raw and
adjusted values are retained alongside the paired-$t$ analysis.

The entropy-plus-Lite, UHerding, and entropy-plus-fixed follow-ups each
use a separate two-contrast clean-accuracy family, specified before the
respective additional runs but after earlier outcomes were available.
For UHerding, the contrasts are UHerding minus entropy and
entropy-plus-Lite minus UHerding. For the fixed-control follow-up, they
are entropy-plus-fixed minus entropy and entropy-plus-Lite minus
entropy-plus-fixed. New fixed-condition robustness is descriptive;
the original robustness family is unchanged.
These within-family corrections do not provide one family-wise
error guarantee for the full sequential research programme.
Geometry, calibration, top-5 accuracy, and intermediate budgets
are descriptive secondary outcomes.

\paragraph{Ten-seed control study.}
The later study declares six final-top-1 contrasts: Lite minus entropy,
tuned fixed, and tuned schedule, separately on each backbone. These form
one Holm family across both backbones, with separate corrections for
paired-$t$ and exact sign-flip tests. Each contrast uses ten paired seeds;
its individual interval uses $t_{0.975,9}s_d/\sqrt{10}$ and its exact
test enumerates $2^{10}$ signs. The smallest attainable two-sided raw
value is $2/2^{10}=0.001953125$, so adjusted rejection is possible in
this family. The six-seed resolution restriction above does not apply
to this study. Sample size and comparisons were fixed before execution,
without outcome-dependent stopping or seed replacement. A 0.5-percentage-point
practical reference is descriptive, not an equivalence margin or stopping
rule. Intermediate budgets, controller dynamics, secondary metrics and
duplicate-exclusion analyses do not redefine the primary endpoint.

\subsection{Cross-task experimental protocols}\label{sec:cross_protocols}
These protocols concern Equation~\eqref{eq:spectral}. The fixed, Lite,
and Full names identify task-specific implementations; the aligned
classification loss is not used in these studies.
\begin{table*}[tbp]
\caption{Task-specific regularization and feedback conventions.
``Lite'' and ``Full'' denote different task-specific implementations,
not an unchanged controller transferred across all tasks.
Task protocols provide the remaining settings.}
\label{tab:control_variants}
\centering\small
\begin{tabularx}{\textwidth}{@{}p{0.17\textwidth}YYY@{}}
\toprule
Task & Loss and observed statistic & Fixed/adaptive coefficients &
Update convention \\
\midrule
ImageNet-100 classification &
Aligned normalized within/between log ratio; fixed separation safeguard &
Fixed $0.03$ in the initial study and selected $0.10$ in the later study;
adaptive initialization $0.03$; bounds $[0,0.10]$ &
Aligned Lite uses the leaky rule; Full uses the aligned reward;
windows of at most 100 steps and phase-boundary handling \\
GPT-2/C4 &
Spectral loss with $k=128$, ranks 16 and 4 at h6/h12;
separate covariance-spectrum feedback &
Fixed $0.01$; adaptive initialization $0.005$;
bounds $[0,0.08]$ &
Spectral control every 25 steps;
Lite step size $0.002$ and deadband $0.1$ \\
CIFAR-10 diffusion &
Spectral loss with $k=64$, rank-16 target;
full EMA-covariance monitor &
Fixed $0.01$; adaptive initialization $0.01$;
bounds $[0,0.05]$ &
Spectral control every 50 steps;
Lite step size $0.001$ and deadband $0.1$ \\
AudioCaps captioning &
Spectral loss with $k=64$, rank-16 targets at prefix/encoder;
different sample axes and zero padding &
Fixed $0.01$; adaptive initialization $0.01$;
bounds $[0,0.05]$ &
Spectral control every 50 steps and final step;
terminal reward credited without an unused next action \\
\bottomrule
\end{tabularx}
\end{table*}

\subsubsection{Language modeling on C4}
\label{sec:c4_setup}

Pretrained GPT-2~\cite{radford2019gpt2}, with 124,439,808 parameters,
is fine-tuned on a deterministic C4 subset~\cite{raffel2020}.
Each of seeds 301--306 processes 99,999,744 tokens in 12,207
steps, with sequence length 1,024 and eight sequences per step.
Baseline, fixed spectral DCR, spectral MACC-Lite, and spectral
Full MACC share the budget and training configuration.
The spectral implementation monitors h6 and h12 with target
ranks 16 and 4, retained spectrum size 128, and a 2,048-token
DCR subsample. Fixed strength is 0.01.

Training uses the first English C4 training shard, with the first
validation shard used for feedback and development
(Appendix~\ref{app:provenance}). All four conditions therefore
evaluate the same single-shard training distribution.
The original reporting set has 1,889 validation sequences, with
64 additional sequences reserved for controller feedback.

After checkpoint selection and training were frozen, all 24
checkpoints were evaluated on validation shard 00002,
which had not been used for local training, feedback,
or configuration development.
Each checkpoint makes 1,999,872 next-token predictions.
NLL and accuracy are pooled over tokens; ECE is computed from
pooled confidence-bin statistics.
Perplexity is the exponential of NLL, not an independent endpoint.
``Untouched'' concerns local study use, not verified absence of
GPT-2 pretraining overlap.

Optimization uses learning rate $5\times10^{-5}$,
weight decay 0.01, and 200 warm-up steps.
AdamW has $\beta_1=0.9$, $\beta_2=0.95$;
one-dimensional, bias, and normalization parameters are exempt
from decay. Norm-1 clipping and linear warm-up followed by
cosine decay toward 10\% of the base rate are used.
Parameters remain FP32 with BF16 CUDA autocast.

Spectral controllers operate every 25 steps.
Both adaptive methods initialize their coefficients at 0.005
within $[0,0.08]$.
Spectral Lite uses step size 0.002 and deadband 0.1.
Full MACC uses two 128-unit hidden
layers, policy learning rate $3\times10^{-4}$,
exploration probability $0.15$ decreasing to $0.02$,
reward-baseline weight 0.1, and entropy coefficient 0.001.
Its coefficient increments are
$\{-0.005,-0.0025,0,0.0025,0.005\}$.
Its language-specific reward settings include validation-delta
scale 5.0, value-delta clip 0.1, target-deviation weight 0.005,
and zero query penalty.
These settings are not replaced by the later diffusion/audio
or aligned-classification controller settings.

The unused-shard evaluator compares fixed DCR, Lite, and Full separately
with baseline, giving three contrasts per endpoint. It reports individual
paired-$t$ intervals and raw paired-$t$ and exact sign-flip values.
Holm adjustment is applied separately to the paired-$t$ and exact
values within each three-contrast endpoint family. NLL is primary; perplexity is its
monotone transformation, while token accuracy and ECE remain secondary.

\subsubsection{CIFAR-10 diffusion}
\label{sec:diffusion_setup}

We fine-tune the 35,746,307-parameter pretrained
\path{google/ddpm-cifar10-32} U-Net with the noise-prediction
objective~\cite{ho2020denoising}.
Seeds 1101--1106 evaluate baseline fine-tuning, fixed spectral
DCR, spectral MACC-Lite, and spectral Full MACC.
The unchanged pretrained model is also sampled with each seed's
generation-noise stream. Those six references measure sampling
variation for one model, not six pretrained-model replications.

A fixed partition of the original 50,000 CIFAR-10 training
images~\cite{krizhevsky2009cifar}
supplies 45,000 fine-tuning images and 5,000 internal holdout
images. The holdout contains 256 controller images and
1,024 disjoint denoising-assessment images with fixed noise
and timesteps. These images may have been seen during
pretraining; the holdout is not pristine relative to the
pretrained model. Official test outcomes do not enter control
or checkpoint selection.

Each trained condition performs 10,000 updates with batch size
128, matched per-seed image order and noise streams,
horizontal flips, AdamW learning rate $2\times10^{-5}$,
zero weight decay, and norm-1 clipping.
Parameters are FP32 with BF16 autocast.
EMA weights use decay 0.9995.
The final scheduled checkpoint is retained independently of
performance. No active label acquisition occurs.

DCR uses 256-dimensional spatially pooled mid-block features
across timesteps 0--999, retained spectrum size 64,
and target rank 16.
The full EMA-covariance monitor uses decay 0.99 and cadence
50 steps. Fixed strength is 0.01.
Adaptive methods start at 0.01 within $[0,0.05]$.
For finite monitored rank $r_{\mathrm{eff},l,t}$, spectral Lite uses
\begin{align}
 e^{\mathrm{sp}}_{l,t}
 &=\left|\log\!\left(
 \frac{\max\{r_{\mathrm{eff},l,t},10^{-12}\}}{16}
 \right)\right|-0.1,\nonumber\\
 \lambda_{l,t+1}
 &=\Pi_{[0,0.05]}\!\left(\lambda_{l,t}
                +0.001e^{\mathrm{sp}}_{l,t}\right),
 \label{eq:spectral_lite_update}
\end{align}
where $\Pi_{[a,b]}(v)=\min\{b,\max\{a,v\}\}$.
The error is not replaced by its positive part: the coefficient
increases outside the log-rank deadband, decreases inside it, and
is unchanged at its boundary, subject to projection. Missing or
nonfinite rank observations leave the corresponding coefficient
unchanged. This accumulating projected rule differs from the leaky
classification rule in Equation~\eqref{eq:lite}.

Spectral Full MACC uses the retained two-layer 128-unit policy,
increments $\{-0.005,-0.0025,0,0.0025,0.005\}$,
activation flags, policy learning rate $3\times10^{-4}$,
exploration decreasing from 0.10 to 0.02, baseline weight 0.05,
and entropy coefficient 0.001.
Its reward combines the scaled held-out loss improvement
(scale 10, clipped to $[-2,2]$) and a
$-0.1|\log(r_{\mathrm{eff}}/16)|$ target-deviation term.
The query head is disabled, and uniformly explored actions
are excluded from on-policy gradient updates.
At termination, the pending action is credited without sampling
an unused next action.

Each final EMA model generates 50,000 uint8 images with
DDIM~\cite{song2021ddim}, 100 sampling steps, $\eta=0$,
and batches of 128. Noise streams are matched within a seed.
The primary endpoint is FID~\cite{heusel2017fid} against the
complete original 50,000-image training distribution, using
\texttt{torch-fidelity} 0.4.0.
KID~\cite{binkowski2018mmd} and Inception
score~\cite{salimans2016gan} are secondary.
A final EMA denoising assessment uses all 10,000 official test
images and fixed noise/timestep seed 541211.
Per-image errors are retained.

Four FID contrasts are frozen: baseline minus pretrained
reference and each regularized condition minus baseline.
Negative differences favor the target.
Paired-$t$ and all 64 sign flips are reported with separate
Holm correction across these four contrasts.
The exact-resolution limitation in
Section~\ref{sec:statistics} applies.
Development uses two different seeds, 2,000 updates,
and 5,000 generated samples; its absolute FID values are
not pooled with confirmation.

\subsubsection{AudioCaps captioning}
\label{sec:audio_setup}

The audio study uses original AudioCaps
annotations~\cite{kim2019audiocaps} and a documented
available-recording subset from
\path{OpenSound/AudioCaps}.
All 473 hosted Parquet shards were checked against published
digests. IDs, start times, and captions were matched to pinned
original annotations.
The retained counts are 44,466 training, 440 validation,
and 866 test clips, versus original inventories of
49,838, 495, and 975 unique clips.
Coverage is 89.22\%, 88.89\%, and 88.82\%.
Each retained validation/test clip has five references.

Preparation excludes recordings outside eight to twelve seconds,
one test identity with conflicting decoded variants,
and both members of a nonzero waveform duplicate spanning train
and test. Sources, variants, and exclusion records are preserved.
Four all-zero training waveforms remain flagged and retained.
Checks found no cross-split video IDs or exact nonzero waveform
duplicates, including after actual mono/resampling preprocessing.
These checks do not establish acoustic authenticity, absence
of near-duplicates, or equivalence to an official raw-audio
distribution. The mirror's provenance and licensing limitations
remain explicit; raw audio is not redistributed.

The frozen Audio Spectrogram Transformer
(AST)~\cite{gong2021ast} checkpoint
\path{MIT/ast-finetuned-audioset-10-10-0.4593}
supplies shared features.
Channels are averaged, resampled to 16 kHz, and limited to the
first 10.24 seconds for a single-window representation.
Original waveforms are not rewritten.
FP32 AST computation supplies 64 pooled patch tokens of
dimension 768, stored in FP16 and reused read-only.
The test features use the frozen backbone only.
AudioSet~\cite{gemmeke2017audioset} pretraining may include
underlying recordings, so no absence of pretraining overlap
is claimed.

The task model has a two-layer trainable audio-token encoder,
an eight-token learned prefix, and trainable pretrained GPT-2,
totaling 141,774,336 parameters.
Baseline, fixed spectral DCR, spectral Lite, and spectral Full
use matched accepted initial task weights, epoch permutations,
per-step dropout seeds, and optimizer budgets.
Seeds 1301--1306 start from original GPT-2 assets and new
seed-specific encoder/prefix parameters, not pilot-final weights.
The initialization recovery is documented in
Appendix~\ref{app:provenance}.

AdamW uses encoder/prefix rate $10^{-4}$, decoder rate
$5\times10^{-5}$, decay 0.01, and norm-1 clipping.
Parameters remain FP32 with BF16 autocast.
Each epoch permutes 44,466 training pairs and uses
1,389 batches of 32, dropping the last 18 pairs.
A new deterministic permutation is used every epoch.
Each of the 24 runs performs ten epochs, or 13,890 updates.

For zero-based optimizer-step index $s=0,\ldots,S-1$, with
$S=13{,}890$ and $w=100$ warm-up steps, every parameter group's
base rate $\eta_{g,0}$ is multiplied by
\begin{equation}
 a(s)=
 \begin{cases}
 (s+1)/w, & 0\le s<w,\\[2pt]
 \dfrac{1+\cos\!\bigl(\pi p(s)\bigr)}{2}, & w\le s,
 \end{cases}
 \label{eq:audio_lr_schedule}
\end{equation}
where $p(s)=\min\{1,(s-w)/(S-w)\}$ and
$\eta_g(s)=\eta_{g,0}a(s)$.
The schedule has terminal factor zero and no positive learning-rate
floor. Its zero endpoint is $s=S$; the last executed update is
$s=S-1$, so its final applied factor is slightly above zero.
The final scheduled checkpoint is evaluated without best-epoch
selection or early stopping.

\paragraph{Spectral sample axes and rank.}
The prefix representation reshapes eight embeddings per clip
into $32\times8=256$ vectors of dimension 768.
The pooled encoder representation has 32 vectors of dimension 768.
Thus their centered sample-rank bounds are 255 and 31,
respectively. The configured 64-entry spectrum is a storage
and truncation dimension, not a claim of 64 nonzero eigenvalues
for both representations. The encoder Gram spectrum is
zero-padded to 64 entries as in
Equation~\eqref{eq:spectrum_normalization}.
A rank-16 template is feasible for both full training batches.
The EMA-covariance monitor is distinct from either current-batch
spectrum and can have a larger effective rank.

Control operates every 50 steps and at the final step,
yielding 278 adaptive events.
The fixed coefficient is 0.01.
Adaptive coefficients start at 0.01 within $[0,0.05]$.
Spectral Lite uses Equation~\eqref{eq:spectral_lite_update} at both
monitored layers, with step size 0.001, deadband 0.1, and target rank 16.
Full retains the diffusion spectral-policy settings, replacing
denoising feedback loss with caption loss on the audio holdout.
Its terminal action is credited without an unused next action,
and the acquisition head is disabled.

The first 64 validation IDs in frozen manifest order provide
controller feedback; the remaining 376 provide development
assessment. Neither is used for task-model gradient updates.
Final confirmation checkpoints evaluate all 866 retained test
recordings.
Beam size is three, decoding batch size is 32, and at most
30 new tokens are generated.

The primary endpoint is CIDEr-D~\cite{vedantam2015cider}
from the retained \texttt{pycocoevalcap}/PTB path.
BLEU-1--4~\cite{papineni2002bleu},
ROUGE-L~\cite{lin2004rouge},
METEOR~\cite{denkowski2014meteor},
SPICE~\cite{anderson2016spice},
SPIDEr~\cite{liu2017spider}, and pooled token NLL are
secondary outcomes.
SPIDEr combines CIDEr-D and SPICE on their recorded scales.
NLL is weighted by actual next-token target counts.
A separate development diagnostic mismatches audio and captions
deterministically without training updates.

The three primary CIDEr-D contrasts compare fixed, Lite,
and Full with baseline.
Positive differences favor the regularized condition.
Both paired-$t$ and all 64 sign flips are retained with
separate Holm corrections across these three comparisons
and individual unadjusted intervals.
Secondary outcomes do not replace the primary endpoint.
The six paired seeds remain the experimental units;
repeated evaluation of 866 clips does not create extra
training replications.

\subsection{Execution environment and reproducibility}

Experiments run sequentially on an IBM Cloud instance with one
NVIDIA L40S GPU, PyTorch 2.6.0, torchvision 0.21.0,
CUDA runtime 12.4, and Python 3.11.7.
Datasets and checkpoints reside on persistent storage.
Protocols, source hashes, splits, acquisitions, controller records,
summaries, and attack-chunk digests are retained.
Integrity checks establish consistency of recorded artifacts,
not third-party replication of GPU execution.

\section{Results and analysis}\label{sec:results}

We first report the ten-seed control comparisons, their coefficient
dynamics, and sensitivity to content duplicates. The earlier six-seed
studies then address acquisition choices, the original fixed-strength
control, learned feedback, and adversarial performance. The spectral
task results and computational measurements follow. This order places
the principal control comparison first while retaining each study's
separate experimental units and statistical family.

\subsection{Confirmation with tuned controls and backbone transfer}
\label{sec:feedback_results}

The training-holdout search selects constant strength 0.10 and the
linearly decreasing schedule with peak 0.10. Their mean tuning
accuracies are 76.567\% and 76.461\%, respectively; these are selection
scores rather than independent performance estimates. The complete
candidate ranking is shown in Table~\ref{tab:tuning_candidates}, including
the 0.03 constant used in the earlier comparisons. Selection occurs
within the specified grid; it does not establish a globally optimal
constant or schedule.

\begin{table}[tbp]
\caption{All candidate controls on the three ResNet-50 tuning seeds
41001--41003. Values are final training-holdout mean $\pm$ sample SD;
each seed has 633 holdout images. Selection maximizes mean accuracy
within each family, with cross-entropy and identifier tie-breaks.
These are selection scores, not independent performance estimates.
Official validation is not evaluated during tuning.}
\label{tab:tuning_candidates}
\centering\footnotesize
\begin{tabular*}{\linewidth}{@{\extracolsep{\fill}}lrrc@{}}
\toprule
Control & \shortstack{Holdout\\top-1 (\%)} & \shortstack{Holdout\\cross-entropy} & Selected \\
\midrule
Constant 0.003 & $75.408\pm2.519$ & $0.976\pm0.066$ & --- \\
Constant 0.010 & $75.619\pm1.003$ & $0.955\pm0.050$ & --- \\
Constant 0.030 & $75.197\pm0.790$ & $0.963\pm0.056$ & --- \\
Constant 0.100 & $76.567\pm1.121$ & $0.886\pm0.058$ & Yes \\
\midrule
Decreasing, peak 0.03 & $74.829\pm0.638$ & $0.969\pm0.047$ & --- \\
Decreasing, peak 0.10 & $76.461\pm0.632$ & $0.948\pm0.045$ & Yes \\
Increasing, peak 0.03 & $75.566\pm1.227$ & $0.980\pm0.052$ & --- \\
Increasing, peak 0.10 & $75.935\pm1.164$ & $0.936\pm0.023$ & --- \\
\bottomrule
\end{tabular*}
\end{table}

Table~\ref{tab:feedback_endpoints} reports the subsequent ten-seed
evaluations. Lite improves final accuracy over entropy by 1.742
percentage points on ResNet-50 and 1.834 points on ResNet-18. Every
paired seed difference is positive on each backbone. Both contrasts
pass the planned six-comparison Holm correction under paired-$t$ and
exact sign-flip tests (Table~\ref{tab:feedback_contrasts}). The same
feedback settings therefore retain a classification benefit on the
second architecture without retuning.

\begin{table*}[tbp]
\caption{Control-study outcomes at 60\% annotations on ImageNet-100.
Values are mean $\pm$ sample SD over ten paired seeds per backbone.
Every treatment uses entropy acquisition. Fixed and scheduled strengths
are selected using separate ResNet-50 tuning seeds; all settings are
frozen for confirmation and transfer.}
\label{tab:feedback_endpoints}
\centering\small
\begin{tabularx}{\textwidth}{@{}lYrrr@{}}
\toprule
Backbone & Treatment & Clean top-1 (\%) & Cross-entropy & ECE \\
\midrule
ResNet-50 & Entropy & $71.678\pm0.808$ & $1.137\pm0.024$ & $0.079\pm0.007$ \\
ResNet-50 & Entropy + MACC-Lite & $73.420\pm0.512$ & $1.020\pm0.014$ & $0.058\pm0.006$ \\
ResNet-50 & Entropy + tuned fixed & $73.608\pm0.474$ & $1.010\pm0.019$ & $0.054\pm0.006$ \\
ResNet-50 & Entropy + tuned schedule & $73.022\pm0.371$ & $1.052\pm0.019$ & $0.064\pm0.004$ \\
ResNet-18 & Entropy & $71.252\pm0.397$ & $1.197\pm0.022$ & $0.095\pm0.004$ \\
ResNet-18 & Entropy + MACC-Lite & $73.086\pm0.397$ & $1.091\pm0.018$ & $0.082\pm0.003$ \\
ResNet-18 & Entropy + tuned fixed & $73.420\pm0.673$ & $1.086\pm0.024$ & $0.079\pm0.006$ \\
ResNet-18 & Entropy + tuned schedule & $72.762\pm0.483$ & $1.113\pm0.025$ & $0.084\pm0.004$ \\
\bottomrule
\end{tabularx}
\end{table*}

The tuned fixed condition has the highest final mean on both backbones.
Lite minus fixed is $-0.188$ points on ResNet-50 and $-0.334$ on
ResNet-18, with individual intervals spanning zero. Neither comparison
establishes superiority in either direction, equivalence, or
non-inferiority. Lite exceeds the selected schedule by 0.398 and 0.324
points; only the ResNet-18 comparison survives the family correction.
The final-budget gain over entropy is therefore supported, but these
paired comparisons do not establish an advantage over the independently
tuned constant.

\begin{table*}[tbp]
\caption{All six predeclared final-top-1 contrasts in the control study.
Effects and individual, unadjusted 95\% paired-$t$ intervals are in
percentage points. Each test column is separately Holm-adjusted across
the full six-comparison family.}
\label{tab:feedback_contrasts}
\centering\small
\begin{tabularx}{\textwidth}{@{}lYrrrr@{}}
\toprule
Backbone & Contrast & Difference & 95\% CI & Adjusted $t$ $p$ & Adjusted exact $p$ \\
\midrule
ResNet-50 & Lite $-$ entropy & $+1.742$ & $[+1.354,+2.130]$ & $1.57\!\times\!10^{-5}$ & 0.01172 \\
ResNet-50 & Lite $-$ tuned fixed & $-0.188$ & $[-0.529,+0.153]$ & 0.2436 & 0.25781 \\
ResNet-50 & Lite $-$ tuned schedule & $+0.398$ & $[+0.073,+0.723]$ & 0.065 & 0.08203 \\
ResNet-18 & Lite $-$ entropy & $+1.834$ & $[+1.558,+2.110]$ & $6.74\!\times\!10^{-7}$ & 0.01172 \\
ResNet-18 & Lite $-$ tuned fixed & $-0.334$ & $[-0.741,+0.073]$ & 0.1929 & 0.20312 \\
ResNet-18 & Lite $-$ tuned schedule & $+0.324$ & $[+0.155,+0.493]$ & 0.007541 & 0.03125 \\
\bottomrule
\end{tabularx}
\end{table*}

Figure~\ref{fig:feedback_confirmation} shows the measured trajectories
and primary effect intervals. Benefits are budget dependent: on
ResNet-50, Lite trails entropy by 0.362 and 0.108 points at 20\% and
30\% annotations, then leads by 0.850, 1.552 and 1.742 points at
40\%, 50\% and 60\%. Corresponding ResNet-18 gains are 0.150, 0.882,
1.750, 2.040 and 1.834 points. These intermediate outcomes are
descriptive; they do not establish a universal label-saving rate.

\begin{figure}[tbp]
\centering
\includegraphics[width=\linewidth]{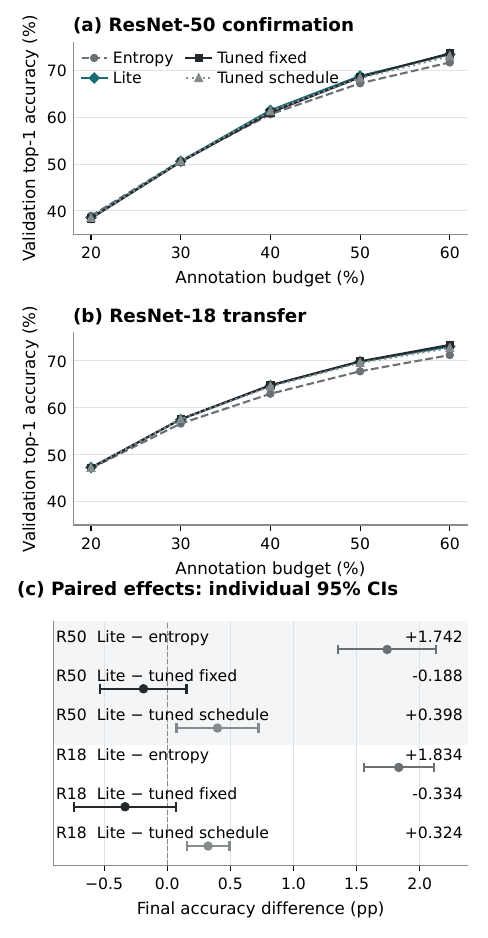}
\caption{Fresh-seed control comparison. Panels (a) and (b) show mean
official validation accuracy over ten seeds at measured acquisition
budgets. Panel (c) shows all six final Lite-minus-control effects and
individual 95\% paired-$t$ intervals. The zero line denotes no
difference; interval exclusion does not by itself imply
multiplicity-adjusted significance. The test results are given in
Table~\ref{tab:feedback_contrasts}.}
\label{fig:feedback_confirmation}
\end{figure}

\subsection{Feedback dynamics and representation outcomes}
\label{sec:feedback_dynamics}

The recorded coefficients follow the prescribed recurrence throughout
training. Step-weighted mean layer3/layer4 strengths are 0.0450/0.0866
on ResNet-50 and 0.0537/0.0789 on ResNet-18. Their layer dependence and
within-phase changes are visible in Figure~\ref{fig:feedback_dynamics}.
This verifies that feedback acts on the measured signal; it does not
isolate the causal value of adaptation. Subsequent acquired sets also
depend on the resulting model predictions.

Final mean log scatter ratios for entropy, Lite, fixed and schedule are
1.387, 0.471, 0.322 and 0.991 on ResNet-50, and 1.162, 0.528, 0.406
and 0.802 on ResNet-18. These descriptive changes accompany lower
cross-entropy and ECE under the regularized treatments, but do not
establish geometry as a mediator or a robustness certificate. The fixed
separation safeguard is active on only 5 layer3 and 102 layer4
minibatch records across the regularized post-initialization traces;
its independent contribution is not identified by this experiment.

\begin{figure}[tbp]
\centering
\includegraphics[width=\linewidth]{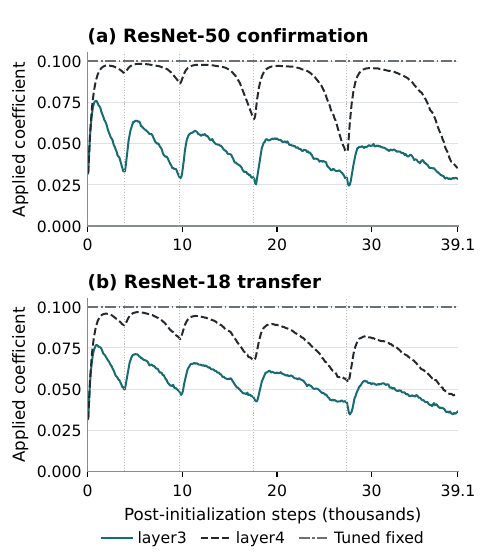}
\caption{Applied Lite strengths from native training traces. Each curve
is the ten-seed mean coefficient in successive windows of at most 100
steps within each phase; separate
line styles identify layer3 and layer4. Vertical separators mark
acquisition-round boundaries, and the horizontal reference is the tuned
constant strength 0.10. Coefficients carry state across phases while
responding to their respective moving geometry residuals.}
\label{fig:feedback_dynamics}
\end{figure}

\subsection{Content-duplicate sensitivity}
\label{sec:duplicate_sensitivity}

The dataset manifest contains five exact SHA-256 matches between
training and validation images, all with conflicting class labels,
and 51 duplicate-content pairs within training, 43 with conflicting
labels. Seven of the 23 new seed holdouts contain one or two examples
duplicated elsewhere in training; their counterpart enters the final
gradient-training set in 24 method runs. None of the three tuning
holdouts is affected. The exclusions by index operate as specified,
but do not eliminate overlap by content.

In a post-hoc sensitivity analysis, the same five validation indices
are excluded for every method, retaining 4,995 examples. Lite-minus-entropy
effects become 1.742 and 1.836 points on ResNet-50 and ResNet-18, while
Lite-minus-fixed becomes $-0.188$ and $-0.328$ points. All six adjusted
test decisions remain unchanged; excluding duplicate-content holdout
items also leaves the selected controls unchanged. The planned
5,000-image outcomes remain primary. Evaluation-time exclusion does not
undo possible training influence, and hash equality does not diagnose
the source of the conflicting labels. Appendix~\ref{app:feedback_details}
provides the sensitivity contrasts and seed-level endpoints.

\subsection{Earlier six-seed classification comparisons}\label{sec:initial_results}

The earlier studies complement the tuned-control comparison by testing
acquisition rules and learned control. Table~\ref{tab:imagenet100_final}
consolidates their 12 conditions. Entropy acquisition with aligned DCR
and MACC-Lite has the highest mean within these six-seed studies,
$73.360\pm0.208\%$ clean accuracy, compared with
$71.463\pm0.320\%$ for entropy alone and $71.747\pm0.626\%$ for
UHerding-Margin. The later entropy-plus-fixed condition reaches
$72.263\pm0.381\%$. The table distinguishes the initial comparison from
the three follow-up conditions; their inferential families remain separate.
Within the initial nine-condition comparison, geometric acquisition with
aligned MACC-Lite attains $72.120\pm0.490\%$. Adding fixed aligned DCR to
geometric acquisition gives +0.850 percentage points, and replacing fixed
strength with Lite gives another +1.080 points. Both component differences
are positive in all six paired seeds.

\begin{table*}[tbp]
\caption{The 12 earlier six-seed ImageNet-100 conditions at 60\% annotations.
Values are mean $\pm$ sample SD over six paired seeds; clean and standard
AutoAttack accuracy use all 5,000 validation images. Study I denotes the
initial nine-condition comparison, A the later acquisition-rule ablation,
B the later UHerding comparison, and C the later entropy-plus-fixed
control. The table is descriptive: study-specific inferential families
remain separate. Bold identifies the highest observed
mean clean accuracy, not a universal superiority claim.}
\label{tab:imagenet100_final}
\centering\small
\begin{tabularx}{\textwidth}{@{}p{0.23\textwidth}c*{4}{>{\centering\arraybackslash}X}@{}}
\toprule
Condition & Study & Clean top-1 (\%) & AutoAttack $8/255$ (\%) &
Cross-entropy & ECE \\
\midrule
Random & I & $69.293\pm0.334$ & $0.013\pm0.010$ & $1.272712\pm0.021145$ & $0.094945\pm0.002917$ \\
Entropy & I & $71.463\pm0.320$ & $0.010\pm0.011$ & $1.150591\pm0.020293$ & $0.079698\pm0.003849$ \\
BADGE & I & $71.110\pm0.467$ & $0.003\pm0.008$ & $1.149581\pm0.007022$ & $0.077088\pm0.004058$ \\
Noise Stability & I & $71.023\pm0.249$ & $0.003\pm0.008$ & $1.162821\pm0.006167$ & $0.078703\pm0.002375$ \\
\model{} acquisition only & I & $70.190\pm0.240$ & $0.013\pm0.010$ & $1.210545\pm0.010404$ & $0.085552\pm0.004501$ \\
Random + aligned fixed DCR & I & $70.113\pm0.452$ & $0.017\pm0.008$ & $1.217939\pm0.021315$ & $0.085964\pm0.006228$ \\
\model{} + aligned fixed DCR & I & $71.040\pm0.427$ & $0.017\pm0.008$ & $1.155351\pm0.015230$ & $0.078428\pm0.003667$ \\
\model{} + aligned MACC-Lite & I & $72.120\pm0.490$ & $0.007\pm0.010$ & $1.078885\pm0.008014$ & $0.062948\pm0.004965$ \\
\model{} + aligned Full MACC & I & $70.803\pm0.318$ & $0.007\pm0.010$ & $1.171322\pm0.023746$ & $0.081639\pm0.004671$ \\
\midrule
Entropy + aligned MACC-Lite & A & $\mathbf{73.360\pm0.208}$ & $0.003\pm0.008$ & $1.028967\pm0.010101$ & $0.055426\pm0.001256$ \\
UHerding-Margin & B & $71.747\pm0.626$ & $0.007\pm0.010$ & $1.127922\pm0.021039$ & $0.073977\pm0.003294$ \\
Entropy + aligned fixed DCR & C & $72.263\pm0.381$ & $0.000\pm0.000$ & $1.092014\pm0.013386$ & $0.071037\pm0.004253$ \\
\bottomrule
\end{tabularx}
\end{table*}

Figure~\ref{fig:label_budget} shows the available per-phase clean-accuracy
measurements, separating acquisition rules from aligned regularization.
The budget trajectories are descriptive; the final 60\% endpoint remains
the basis of the declared clean-accuracy comparisons. The
entropy-plus-Lite and entropy-plus-fixed trajectories are shown
separately in Figure~\ref{fig:entropy_budget}, together with paired gains
over entropy. All displayed points are measured phase endpoints.

\begin{figure}[tbp]
\centering
\includegraphics[width=\linewidth]{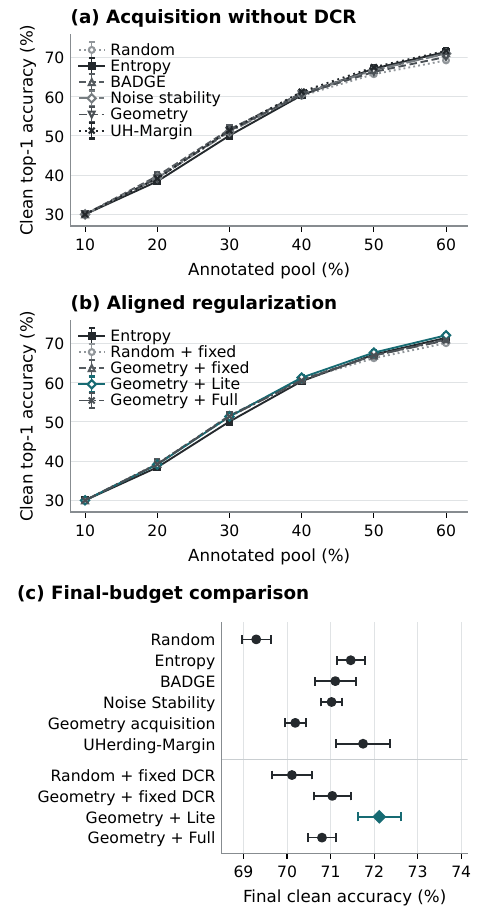}
\caption{Acquisition and aligned regularization under the common
ImageNet-100 protocol. Panels (a) and (b) show measured accuracy trajectories
on identical axes; panel (c) enlarges the final 60\% endpoint for the original
nine conditions and the UHerding-Margin follow-up. Markers and bars represent
six-seed means and sample SDs, not confidence intervals. All trajectories
start from the same 10\% checkpoint. Fixed, Lite, and Full denote aligned
DCR conditions; entropy regularization follow-ups appear in
Figure~\ref{fig:entropy_budget}. Connecting lines guide the eye, and no new
area-under-curve test or inferential family is introduced. Geometry and
UH-Margin abbreviate geometry acquisition and UHerding-Margin.}
\label{fig:label_budget}
\end{figure}

The two component contrasts pass the planned adjusted
paired-$t$ threshold.
Geometric acquisition with Lite versus unregularized entropy does not:
its raw paired-$t$ value is 0.0466, but its adjusted value is 0.0931.
This contrast differs from the entropy-plus-Lite treatment evaluated
in Section~\ref{sec:feedback_results}.
The individual interval narrowly excludes zero but is not a
simultaneous interval.
Table~\ref{tab:planned_contrasts} preserves all original effects
and intervals.
The nonrejection of the adjusted exact tests must be interpreted
with their design-imposed resolution limit, not as an
independent demonstration that the effects are absent.

Acquisition alone improves over random by +0.897 points,
while random plus fixed aligned DCR improves by +0.820.
These comparisons show favorable directions for both components
within the declared regime. Acquisition alone remains below
entropy in mean accuracy.
The acquisition-rule ablation shows
that geometric acquisition is not necessary for the favorable
entropy-plus-Lite treatment. No equal-performance full-supervision
comparison establishes a 40\% annotation saving.

\begin{table*}[tbp]
\caption{All seven original clean-accuracy contrasts in percentage
points. Intervals are individual unadjusted 95\% paired-$t$
intervals. Each $p$-value column is separately Holm-adjusted
across seven contrasts. The exact tests have the resolution
limit described in Section~\ref{sec:statistics}.}
\label{tab:planned_contrasts}
\centering\small
\begin{tabularx}{\textwidth}{@{}Yrrrr@{}}
\toprule
Target minus reference & Difference (pp) & 95\% CI (pp) &
Adjusted $t$ $p$ & Adjusted exact $p$ \\
\midrule
Aligned fixed DCR $-$ acquisition only &
+0.850 & $[+0.430,+1.270]$ & 0.0208 & 0.2188 \\
Aligned MACC-Lite $-$ aligned fixed DCR &
+1.080 & $[+0.447,+1.713]$ & 0.0309 & 0.2188 \\
Aligned Full MACC $-$ aligned fixed DCR &
$-0.237$ & $[-0.714,+0.241]$ & 0.2589 & 0.2813 \\
Aligned MACC-Lite $-$ entropy &
+0.657 & $[+0.015,+1.299]$ & 0.0931 & 0.2188 \\
Aligned Full MACC $-$ entropy &
$-0.660$ & $[-0.869,-0.451]$ & 0.0032 & 0.2188 \\
Acquisition only $-$ random &
+0.897 & $[+0.389,+1.405]$ & 0.0309 & 0.2188 \\
Random + aligned fixed DCR $-$ random &
+0.820 & $[+0.297,+1.343]$ & 0.0309 & 0.2188 \\
\bottomrule
\end{tabularx}
\end{table*}

\subsection{Acquisition-rule ablation under aligned MACC-Lite}\label{sec:acquisition_results}
\label{sec:entropy_lite}

The acquisition-rule condition replaces geometric acquisition with
entropy while retaining the aligned objective, Lite settings,
six shared initializations, annotation and feedback splits,
targets, and five 20-epoch phases.
Its first acquisition is checked against the original entropy
selection. Thirty additional phases and full-set final attacks
are completed. These are paired acquisition-rule runs,
not new independent seeds for the original matrix.

Entropy with aligned DCR and Lite reaches
$73.360\pm0.208\%$ and exceeds unregularized entropy and
geometry-plus-Lite in every seed.
Table~\ref{tab:entropy_lite_supplement} gives the paired comparisons;
Appendix~\ref{app:classification_seeds} provides the seed-level
outcomes (Table~\ref{tab:entropy_lite_seeds}).

The gain over entropy is +1.897 percentage points, with individual
95\% paired-$t$ interval $[+1.625,+2.169]$. The gain over geometry plus
Lite is +1.240 points, with interval $[+0.707,+1.773]$. Both comparisons
pass the acquisition-rule adjusted paired-$t$ analysis. Their raw exact value is 0.03125 and adjusted value
0.0625, consistent with the resolution limit.
The evidence supports the \emph{combined aligned-loss-and-Lite
treatment within entropy acquisition}.
That acquisition-rule study did not include a fixed-strength entropy
control. The later matched follow-up in Section~\ref{sec:entropy_fixed}
provides that comparison while preserving the acquisition-rule study's
original two-contrast family.

One image survives the attack for seed 802 and none for the
other seeds, giving mean robust accuracy $0.0033\%$.
This descriptive result does not change the robustness
conclusion or the original robustness family.

\begin{table*}[tbp]
\caption{Two acquisition-rule clean-accuracy contrasts.
Positive values favor entropy with aligned DCR and Lite.
Intervals are individual unadjusted paired-$t$ intervals;
each test family is Holm-adjusted across these two contrasts.
This family was specified after inspection of the initial comparison.}
\label{tab:entropy_lite_supplement}
\centering\small
\begin{tabularx}{\textwidth}{@{}Yrrrr@{}}
\toprule
Reference condition & Gain (pp) & 95\% CI (pp) &
Adjusted $t$ $p$ & Adjusted exact $p$ \\
\midrule
Entropy without aligned regularization &
+1.897 & $[+1.625,+2.169]$ & 0.0000198 & 0.0625 \\
Geometry acquisition + aligned MACC-Lite &
+1.240 & $[+0.707,+1.773]$ & 0.001875 & 0.0625 \\
\bottomrule
\end{tabularx}
\end{table*}

\subsection{Matched fixed-strength control within entropy acquisition}\label{sec:fixed_results}
\label{sec:entropy_fixed}

The third follow-up keeps entropy acquisition and replaces Lite with
the fixed aligned coefficient $\lambda_l=0.03$ at layer3 and layer4.
The separation penalty remains $\beta_B=0.01$ with the original
per-seed targets and floors. All six initial model and optimizer states,
label/holdout splits, training settings, and the five-round schedule are
reused. The first acquisition matches original entropy exactly; later
acquired sets are allowed to change with the trained model. This tests
the coefficient-control treatment within the active-learning pipeline,
including its subsequent effect on selection.

Entropy plus fixed DCR reaches $72.263\pm0.381\%$, improving over entropy
by +0.800 percentage points with individual 95\% interval
$[+0.498,+1.102]$. Lite improves over fixed by +1.097 points with interval
$[+0.714,+1.479]$. Both differences are positive in all six seeds.
Within this six-seed two-contrast family, both Holm-adjusted paired-$t$
values are 0.001442. Both raw exact values are 0.03125 and adjusted
values are 0.0625. Table~\ref{tab:entropy_fixed_tests} reports the full
statistics, and Table~\ref{tab:entropy_lite_seeds} gives the paired outcomes.
This follow-up favors Lite over the prespecified 0.03 coefficient
under the paired-$t$ analysis. The independently tuned constant and
schedule are evaluated with new seeds in Section~\ref{sec:feedback_results};
those comparisons address different controls.

Figure~\ref{fig:entropy_budget} includes the Lite trajectory
and the matched fixed trajectory. At 20\% and 30\% annotations, fixed DCR
has slightly higher mean accuracy than Lite; the ordering reverses at
later budgets. Thus the final-budget advantage should not be described
as uniform throughout acquisition. Fixed DCR has zero robust-correct
images in every seed under standard AutoAttack $8/255$ on all 5,000
validation images. This fixed-strength condition also has negligible
adversarial accuracy under the specified threat model.

\begin{figure}[tbp]
\centering
\includegraphics[width=\linewidth]{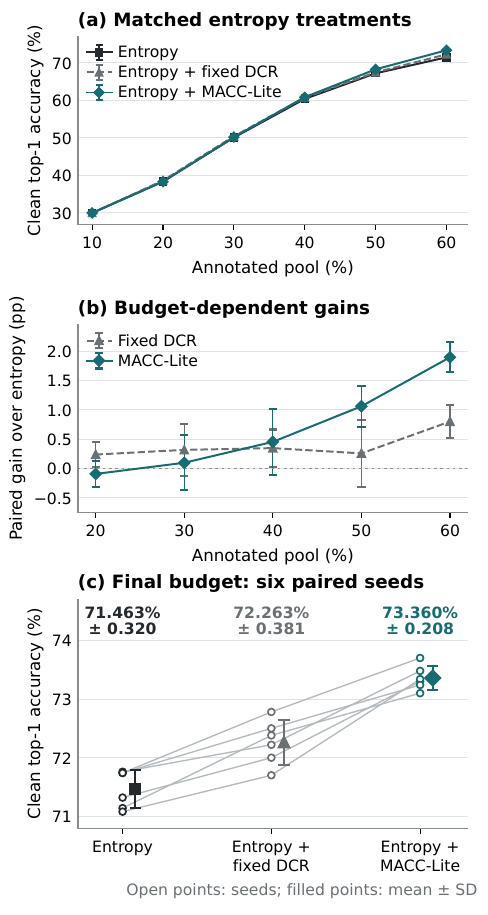}
\caption{Matched entropy treatments across annotation budgets and seeds.
(a) Mean clean accuracy and sample SD across six seeds. (b) Mean and sample
SD of within-seed differences from entropy; the common 10\% initialization
has zero gain and is omitted. (c) Individual final-budget accuracies, with
lines joining the same seed across conditions; adjacent filled markers and
bars show mean $\pm$ sample SD. Overlapping seed values are retained.
These bars are not confidence intervals. Fixed DCR uses $\lambda_l=0.03$
and the same separation penalty as Lite. Lite is not uniformly better at
early budgets, but exceeds both comparators in all six final-budget seeds.
Only the final 60\% comparisons enter the declared follow-up tests. All
points derive from recorded measurements.}
\label{fig:entropy_budget}
\end{figure}

\begin{table*}[tbp]
\caption{The six-seed fixed-control follow-up's two clean-accuracy contrasts.
Differences and individual unadjusted 95\% paired-$t$ intervals are in
percentage points. Holm correction is applied separately to the two
paired-$t$ and the two exact tests in this family. Prior families are unchanged.}
\label{tab:entropy_fixed_tests}
\centering\small
\begin{tabularx}{\textwidth}{@{}Yrrrrr@{}}
\toprule
Target minus reference & Difference & 95\% CI & Raw $t$ $p$ &
Adjusted $t$ $p$ & Exact raw / adjusted \\
\midrule
Entropy + fixed $-$ entropy & +0.800 & $[+0.498,+1.102]$ &
0.001047 & 0.001442 & 0.03125 / 0.0625 \\
Entropy + Lite $-$ entropy + fixed & +1.097 & $[+0.714,+1.479]$ &
0.000721 & 0.001442 & 0.03125 / 0.0625 \\
\bottomrule
\end{tabularx}
\end{table*}

\subsection{Comparison with UHerding-Margin}
\label{sec:uherding_results}

Under the matched protocol in Section~\ref{sec:uherding_setup},
UHerding-Margin attains mean clean accuracy $71.747\pm0.626\%$,
versus $71.463\pm0.320\%$ for entropy and
$73.360\pm0.208\%$ for entropy with aligned DCR and Lite.
UHerding minus entropy is +0.283 points with
interval $[-0.430,+0.997]$, adjusted paired-$t$ value
0.3540, and adjusted exact value 0.3125.
This does not establish superiority or equivalence.

Entropy with the aligned Lite treatment exceeds the UHerding
adaptation in every seed. The mean difference is +1.613 points,
with interval $[+0.957,+2.270]$.
The adjusted paired-$t$ value is 0.002923;
the adjusted exact value is 0.0625.
This compares complete pipelines and does not establish
superiority of entropy alone, or of the combined treatment
over the authors' unmodified UHerding benchmark implementation.

UHerding retains one correct adversarial image for each of
seeds 802 and 805, and none otherwise:
mean robust accuracy $0.0067\%$.
The comparison adds an executed recent mechanism while retaining
its representation and selection-domain adaptations.

\begin{table*}[tbp]
\caption{The UHerding comparison's two clean-accuracy contrasts,
in percentage points. Intervals are individual unadjusted 95\%
paired-$t$ intervals; each test column is Holm-adjusted
across these two comparisons only.}
\label{tab:uherding_tests}
\centering\small
\begin{tabularx}{\textwidth}{@{}Yrrrr@{}}
\toprule
Target minus reference & Difference & 95\% CI &
Adjusted $t$ $p$ & Adjusted exact $p$ \\
\midrule
UHerding-Margin $-$ entropy &
+0.283 & $[-0.430,+0.997]$ & 0.3540 & 0.3125 \\
Entropy + aligned Lite $-$ UHerding-Margin &
+1.613 & $[+0.957,+2.270]$ & 0.002923 & 0.0625 \\
\bottomrule
\end{tabularx}
\end{table*}

\subsection{Earlier representation and learned-control results}\label{sec:full_results}

Within the original nine-condition matrix, geometric acquisition
with aligned Lite has the
lowest mean log trace ratio, cross-entropy, and ECE.
Table~\ref{tab:geometry} reports descriptive geometry separately
from task performance. These statistics are not a causal
mediation analysis, and the all-class assessment ratio does not
establish attainment of the 16-class anchor target.

\begin{table}[tbp]
\caption{Final all-class normalized geometry, mean $\pm$ seed SD.
The statistic averages layer3 and layer4
$\log((W+\epsilon)/(B+\epsilon))$.}
\label{tab:geometry}
\centering\small
\begin{tabularx}{\linewidth}{@{}Yc@{}}
\toprule
Condition & Mean log trace ratio \\
\midrule
Random & $1.3881\pm0.0197$ \\
Entropy & $1.3916\pm0.0170$ \\
Acquisition only & $1.3864\pm0.0151$ \\
Random + aligned fixed & $1.0607\pm0.0218$ \\
\model{} + aligned fixed & $0.9767\pm0.0200$ \\
\model{} + aligned Lite & $0.4937\pm0.0390$ \\
\model{} + aligned Full & $1.0825\pm0.0982$ \\
Entropy + aligned fixed & $1.0027\pm0.0125$ \\
Entropy + aligned Lite & $0.4725\pm0.0434$ \\
\bottomrule
\end{tabularx}
\end{table}

Full MACC executes 370, 368, 368, 374, 365, and 373
learned-policy updates for seeds 801--806.
Its mean accuracy is 0.237 points below aligned fixed DCR,
with an interval spanning zero, and 0.660 points below entropy.
The latter difference passes the adjusted paired-$t$ test.
The learned controller is active, but its additional capacity
does not establish benefit under the present settings.
Feedback horizon, changing acquisition phases, and imperfect
credit assignment are possible explanations rather than
identified causes.

Short development continuations found Lite coefficients below
the 0.10 cap. That observation does not substitute for a
long-run stability analysis.
Proposition~\ref{prop:bounded} bounds coefficients,
not the network's geometric state or task error.

\subsection{Adversarial evaluation}\label{sec:robustness_results}

Across the earlier six-seed classification studies, final checkpoints
retain zero or one correct adversarial prediction out of 5,000. Mean robust accuracy is approximately
$0.000\%$--$0.017\%$.
No planned robustness comparison establishes an improvement.
Table~\ref{tab:robustness_contrasts} reports all seven original paired
contrasts and their uncertainty, with the original test families retained.
These near-zero empirical outcomes are not a population
certificate, and repeated predictions on the same images
do not become independent observations when pooled across seeds.

The original spectral pilot also had zero robustness at
$8/255$; extending its selected-checkpoint diagnostic from
1,000 to all 5,000 images reproduced that result.
Smaller-radius post-hoc diagnostics did not establish an
overall advantage over random acquisition.
The final study retains the original attack radius.
Improved clean accuracy and supervised geometry therefore coexist
with negligible adversarial accuracy under this threat model.

\subsection{Language modeling on C4}
\label{sec:c4_results}

The following three task studies evaluate the spectral formulation
in Section~\ref{sec:spectral}, with protocols given in
Section~\ref{sec:cross_protocols}. For C4,
Table~\ref{tab:c4_holdout} reports the unused local validation shard, and
Table~\ref{tab:c4_contrasts} gives all three NLL contrasts against baseline.
Fixed spectral DCR lowers NLL in every paired seed, with mean difference
$-0.00029250$ nats/token and individual interval
$[-0.00032813,-0.00025688]$. Its raw paired-$t$ value is
$4.42\times10^{-6}$ and its archived within-endpoint Holm-adjusted value is
$8.85\times10^{-6}$. The raw exact value is 0.03125; applying Holm
correction within the same three-contrast family gives 0.09375.

The effect is close to the original-shard difference of $-0.00028734$.
The second shard checks sensitivity to local data reuse but adds no
independent training seeds. The practical change is approximately
0.0086\% in NLL and 0.0292\% in perplexity. Fixed DCR does not establish a
token-accuracy or pooled-ECE benefit. Spectral Lite increases NLL in all
six seeds by a mean of $0.00194729$ nats/token, with interval
$[0.00187898,0.00201560]$. Full is inconclusive relative to baseline:
$+0.00023624$, interval $[-0.00018778,+0.00066025]$, raw paired-$t$ value
0.2115. This is not an equivalence result. The secondary comparison
outputs are retained in Appendix~\ref{app:c4_secondary}.

\begin{table*}[tbp]
\caption{Unused-local-shard C4 evaluation of the same 24 final
GPT-2 checkpoints. Values are mean $\pm$ sample SD over six
training seeds, with 1,999,872 next-token predictions per checkpoint.
The formulations are spectral, not aligned classification DCR.}
\label{tab:c4_holdout}
\centering\small
\begin{tabularx}{\textwidth}{@{}Ycccc@{}}
\toprule
Condition & NLL (nats/token) & Perplexity &
Pooled token ECE & Token accuracy \\
\midrule
Baseline &
$3.384212\pm0.000239$ & $29.49473\pm0.00705$ &
$0.012324\pm0.000426$ & $0.380067\pm0.000123$ \\
Fixed spectral DCR &
$3.383919\pm0.000253$ & $29.48611\pm0.00746$ &
$0.012216\pm0.000438$ & $0.380029\pm0.000083$ \\
Spectral MACC-Lite &
$3.386159\pm0.000220$ & $29.55222\pm0.00651$ &
$0.013131\pm0.000396$ & $0.379920\pm0.000098$ \\
Spectral Full MACC &
$3.384448\pm0.000462$ & $29.50170\pm0.01362$ &
$0.012438\pm0.000584$ & $0.380040\pm0.000111$ \\
\bottomrule
\end{tabularx}
\end{table*}

\begin{table*}[tbp]
\caption{All three unused-shard C4 NLL contrasts. Differences and individual
95\% paired-$t$ intervals are in $10^{-4}$ nats/token; negative differences
favor regularization. Holm adjustment is within the three NLL contrasts,
separately for each test. Both test families use the same three contrasts; intervals are unadjusted.}
\label{tab:c4_contrasts}
\centering\small
\begin{tabularx}{\textwidth}{@{}Yrrrrrr@{}}
\toprule
Condition minus baseline & Difference & 95\% CI &
Raw $t$ $p$ & Holm $t$ $p$ & Raw exact $p$ & Holm exact $p$ \\
\midrule
Fixed spectral DCR & $-2.92505$ & $[-3.28129,-2.56881]$ & $4.4239\times10^{-6}$ & $8.8477\times10^{-6}$ & 0.03125 & 0.09375 \\
Spectral MACC-Lite & $+19.47289$ & $[+18.78979,+20.15599]$ & $8.9651\times10^{-9}$ & $2.6895\times10^{-8}$ & 0.03125 & 0.09375 \\
Spectral Full MACC & $+2.36236$ & $[-1.87778,+6.60250]$ & 0.21152 & 0.21152 & 0.21875 & 0.21875 \\
\bottomrule
\end{tabularx}
\end{table*}

The language controller records 488 actions per seed.
Lite spends much of training at the 0.08 coefficient cap;
Full records 428--445 learned-policy updates.
The adaptive results therefore do not reflect an inactive
controller. Effective ranks increase under regularization,
so the mechanism is spectral shaping rather than uniformly
stronger low-rank collapse.
The small fixed-DCR NLL result does not establish broader
calibration, memorization, or cross-modal gains.

\subsection{CIFAR-10 diffusion}
\label{sec:diffusion_results}

Unregularized fine-tuning lowers mean FID from 10.512 to 8.057.
Fixed spectral DCR, Lite, and Full instead increase FID
relative to that baseline by 0.156, 0.415, and 0.154.
Each increase occurs in every seed.
Improvement over the unchanged pretrained model therefore
does not establish added value from DCR.

\begin{table}[tbp]
\caption{Diffusion confirmation with 50,000 generated images per
condition and seed. Values are mean $\pm$ sample SD over six
seed streams. FID is primary. Reference variation is sampling
variation for one unchanged pretrained model.}
\label{tab:diffusion_final}
\centering\footnotesize
\begin{tabularx}{\linewidth}{@{}Yccc@{}}
\toprule
Condition & FID $\downarrow$ &
KID $\times10^3\downarrow$ & \shortstack{Inception\\score $\uparrow$} \\
\midrule
Pretrained reference &
$10.512\pm0.100$ & $9.153\pm0.124$ & $8.479\pm0.039$ \\
Unregularized fine-tuning &
$\mathbf{8.057\pm0.048}$ & $6.605\pm0.084$ & $8.692\pm0.043$ \\
Fixed spectral DCR &
$8.213\pm0.070$ & $6.726\pm0.105$ & $8.724\pm0.049$ \\
Spectral MACC-Lite &
$8.472\pm0.100$ & $6.939\pm0.115$ & $8.728\pm0.055$ \\
Spectral Full MACC &
$8.211\pm0.116$ & $6.744\pm0.088$ & $8.729\pm0.050$ \\
\bottomrule
\end{tabularx}
\end{table}

\begin{table*}[tbp]
\caption{All four diffusion FID contrasts.
Positive differences indicate worse FID for the target.
Intervals are individual unadjusted paired-$t$ intervals.
Each test family is Holm-adjusted over these four comparisons.}
\label{tab:diffusion_contrasts}
\centering\small
\begin{tabularx}{\textwidth}{@{}Yrrrr@{}}
\toprule
Target minus reference & Difference & 95\% interval &
Adjusted $t$ $p$ & Adjusted exact $p$ \\
\midrule
Baseline $-$ pretrained reference &
$-2.45468$ & $[-2.58346,-2.32589]$ &
$2.6769\times10^{-7}$ & 0.125 \\
Fixed DCR $-$ baseline &
+0.15601 & $[+0.11357,+0.19845]$ & 0.0004481 & 0.125 \\
MACC-Lite $-$ baseline &
+0.41472 & $[+0.31931,+0.51012]$ & 0.0003004 & 0.125 \\
Full MACC $-$ baseline &
+0.15431 & $[+0.06205,+0.24658]$ & 0.0077204 & 0.125 \\
\bottomrule
\end{tabularx}
\end{table*}

All four contrasts pass their adjusted paired-$t$ tests.
Each raw exact value is 0.03125, giving adjusted value 0.125.
The consistent adverse direction is retained with the
small-sample assumptions and resolution qualification.
It is not a claim that every diffusion regularizer is harmful.

KID is also higher in mean under every regularized condition.
Slightly higher Inception scores remain secondary and do not
replace FID.
Mean official-test EMA denoising MSE is 0.02918539 for baseline,
0.02918606 for fixed DCR, 0.02918634 for Lite, and
0.02918632 for Full, compared with 0.02935395 for the reference.
These small descriptive differences are not formal
equivalence or significant secondary effects.

Full executes 188, 186, 189, 185, 183, and 186 learned
updates, with 200 controller events per seed.
Lite also executes 200 events and ends at the 0.05 cap.
The final value does not specify its occupancy over training.
Final monitored ranks are approximately 23--24 for baseline,
49 for fixed DCR, 38--39 for Lite, and 42--60 for Full.
Active control and representation changes do not establish
target attainment or better generation.

The two-seed pilot favored fixed DCR and Full relative to its
baseline. That direction does not persist in confirmation.
Training horizon and generated-sample count both changed,
so the present comparison does not isolate the reason
for the reversal.

\subsection{AudioCaps captioning}
\label{sec:audio_results}

Fixed DCR has a slightly higher mean CIDEr-D than baseline;
Lite and Full have lower means.
Every primary interval spans zero, and both adjusted test
families give $p=1.0000$ for all three contrasts.
This is an inconclusive comparison, not proof of equivalence,
zero population effect, or universal ineffectiveness.

\begin{table*}[tbp]
\caption{AudioCaps results on the fixed available subset:
mean $\pm$ sample SD over seeds 1301--1306.
Each run uses 866 test clips with five references each.
Metrics retain their native scales; CIDEr-D is primary.}
\label{tab:audio_final}
\centering\small
\begin{tabularx}{\textwidth}{@{}Yccccc@{}}
\toprule
Condition & CIDEr-D & BLEU-4 & METEOR & SPICE & SPIDEr \\
\midrule
Baseline &
$0.6736\pm0.0220$ & $0.2401\pm0.0137$ &
$0.2328\pm0.0043$ & $0.1664\pm0.0046$ & $0.4200\pm0.0132$ \\
Fixed spectral DCR &
$0.6745\pm0.0185$ & $0.2384\pm0.0114$ &
$0.2335\pm0.0022$ & $0.1650\pm0.0041$ & $0.4198\pm0.0110$ \\
Spectral MACC-Lite &
$0.6668\pm0.0212$ & $0.2378\pm0.0108$ &
$0.2320\pm0.0035$ & $0.1664\pm0.0020$ & $0.4166\pm0.0112$ \\
Spectral Full MACC &
$0.6703\pm0.0175$ & $0.2382\pm0.0094$ &
$0.2319\pm0.0023$ & $0.1659\pm0.0041$ & $0.4181\pm0.0100$ \\
\bottomrule
\end{tabularx}
\end{table*}

\begin{table*}[tbp]
\caption{All three primary audio CIDEr-D contrasts.
Positive differences favor the target.
Intervals are individual unadjusted paired-$t$ intervals;
each test family is Holm-adjusted across three comparisons.}
\label{tab:audio_contrasts}
\centering\small
\begin{tabularx}{\textwidth}{@{}Yrrrr@{}}
\toprule
Target minus baseline & Difference & 95\% interval &
Adjusted $t$ $p$ & Adjusted exact $p$ \\
\midrule
Fixed spectral DCR &
+0.000896 & $[-0.016040,+0.017833]$ & 1.0000 & 1.0000 \\
Spectral MACC-Lite &
$-0.006781$ & $[-0.028950,+0.015389]$ & 1.0000 & 1.0000 \\
Spectral Full MACC &
$-0.003329$ & $[-0.015313,+0.008656]$ & 1.0000 & 1.0000 \\
\bottomrule
\end{tabularx}
\end{table*}

Seed-level CIDEr-D values are reported in Appendix~\ref{app:audio_seeds}.

Raw paired-$t$ values are 0.8971, 0.4673, and 0.5072;
raw exact values are 0.84375, 0.50000, and 0.53125.
Unlike a case resting only on the exact-test resolution limit,
these raw results also provide no strong evidence of a difference.
Mean test NLL is 2.026833 for baseline, 2.029867 for fixed,
2.033260 for Lite, and 2.030362 for Full.
These remain descriptive secondary results.

Full records 260, 253, 260, 266, 259, and 254 learned-policy
updates. Both adaptive conditions record 278 events.
All 24 runs produce 866 nonempty captions.
The mismatched-audio development diagnostic has higher NLL
than correctly paired audio in every run.
That establishes sensitivity to the supplied conditioning
under this diagnostic, not added benefit from DCR or
a comprehensive grounding guarantee.

The pilot evaluates 376 development clips after two epochs;
confirmation evaluates 866 test clips after ten epochs.
Their absolute scores cannot attribute a difference solely
to training duration.
The study evaluates audio--text captioning; audio retrieval, audio classification,
other architectures, and full-original-benchmark comparability
remain outside its scope.

\subsection{Computational cost and memory}
\label{sec:overhead}

The implementation costs follow the components being used. Fixed DCR
still computes differentiable layer statistics because they define the
regularization loss. MACC-Lite adds moving measurements, coefficient
states, and bounded update arithmetic. The evaluated Lite path also
records held-out cross-entropy at control boundaries, although this
metric does not enter its coefficient action. Full MACC uses held-out
task feedback in its state and reward and additionally optimizes a
policy. These are training operations and can be omitted during ordinary
logits-based classifier inference. Geometric acquisition between training
rounds still requires feature extraction and fitted class statistics.

Table~\ref{tab:compute} reports timing components for the initial
nine-condition ImageNet-100 study.
The synchronized training-compute timer includes forward,
loss, backward, clipping, and optimizer execution, but excludes
subsequent controller observation and policy operations.
Feedback and acquisition have separate timers.
Loading, serialization, evaluation, and orchestration are not
fully represented by summing these columns.
All references record geometry, and initialization is allocated
once per conceptual condition. Thus, these comparisons do not estimate
overhead against the leanest unregularized or fixed-weight implementation.
Coefficient selection and earlier method development incur costs beyond
one selected run; the recorded selection costs below are reported
separately rather than absorbed into a per-run overhead percentage.

\begin{table*}[tbp]
\caption{Recorded ImageNet-100 timing components in seconds
per conceptual condition run, averaged over six seeds.
Training and feedback include their allocated shared-initialization
costs. These are component, not complete application, timings.}
\label{tab:compute}
\centering\small
\begin{tabularx}{\textwidth}{@{}Yrrrr@{}}
\toprule
Method & Training compute & Feedback inference &
Acquisition & Final AutoAttack \\
\midrule
Random & 5093.24 & 182.25 & 0.85 & 792.52 \\
Entropy & 5092.81 & 182.45 & 490.83 & 800.95 \\
BADGE & 5093.97 & 182.55 & 523.26 & 775.80 \\
Noise Stability & 5093.61 & 182.52 & 2942.89 & 777.36 \\
Acquisition only & 5093.23 & 182.38 & 739.69 & 798.50 \\
Random + aligned fixed & 5240.51 & 182.30 & 0.86 & 811.51 \\
\model{} + aligned fixed & 5238.40 & 182.28 & 740.18 & 820.73 \\
\model{} + aligned Lite & 5238.85 & 182.27 & 739.58 & 804.70 \\
\model{} + aligned Full & 5240.13 & 182.30 & 738.97 & 783.91 \\
\bottomrule
\end{tabularx}
\end{table*}

The maximum PyTorch-allocated memory is 6,075,090,432 bytes
for every conceptual run in the original nine-condition matrix.
Shared initialization may dominate that maximum;
equal values do not establish zero additional controller memory.
The phase-specific comparison below is narrower but still
does not isolate a complete controller footprint.

\begin{table}[tbp]
\caption{Post-initialization training time and peak allocation.
Duration is the mean sum over five post-initialization
training phases. Peaks describe PyTorch allocation in
those phases, not isolated controller memory.}
\label{tab:phase_cost}
\centering\small
\begin{tabularx}{\linewidth}{@{}Yrr@{}}
\toprule
Condition & \shortstack{Five-phase\\duration (s)} & \shortstack{Recorded phase\\peak (bytes)} \\
\midrule
Entropy & 5,255.02 & 6,066,439,680 \\
Entropy + aligned MACC-Lite & 5,391.31 & 6,066,701,824 \\
\midrule
Difference & +136.29 & +262,144 \\
\bottomrule
\end{tabularx}
\end{table}

The duration in Table~\ref{tab:phase_cost} and the synchronized
training-compute component in Table~\ref{tab:compute} have different
measurement boundaries. The former is a phase-duration record; the
latter excludes controller observation and policy operations and includes
an allocated initialization component. They are not additive or directly
subtractable. These timers do not define a common end-to-end boundary.

The entropy-plus-Lite mean phase duration is 2.59\% higher,
a difference of 136.29 s. These are sequential campaigns,
not interleaved timing trials.
The peak difference is 262,144 bytes, or 0.25 MiB.
Neither quantity is complete application overhead or
an isolated controller measurement.

The entropy-plus-fixed follow-up separately records means of 4,837.85~s
for synchronized training compute, 182.55~s for feedback inference,
483.99~s for acquisition, and 782.66~s for final AutoAttack.
The training and feedback components exclude the reused initialization;
they therefore cannot be inserted directly into Table~\ref{tab:compute}
or compared as complete application time. The fixed-control records do not contain a phase-duration or phase-peak memory aggregate for this
new condition, so no new overhead ratio is inferred.

\paragraph{Tuned-control study.}
The additional 24 tuning runs take 39.66 hours of recorded
post-initialization phase training plus acquisition, or 40.07 hours
including the three shared initializations. These totals exclude final
reporting and other unrecorded overhead. Mean post-initialization
training-plus-acquisition time per run is 1.626, 1.669, 1.674 and
1.674 hours for entropy, Lite, fixed and schedule on ResNet-50;
corresponding ResNet-18 times are 0.759, 0.760, 0.764 and 0.762 hours.
The small per-run difference between Lite and fixed does not account
for the separate tuning cost or Lite's earlier development cost.
Hence, unchanged feedback in this campaign is not synonymous with
an absence of prior tuning. All four conditions compute the monitored
holdout metric at matched control windows, including those whose
actions do not consume it.

\paragraph{Spectral-task costs.}
A five-pair GPT-2/C4 microbenchmark times 250 optimizer steps.
Plain execution takes $33.5293\pm0.0429$ s;
the sampled-monitoring path takes $33.7633\pm0.0731$ s.
The paired time increase is $0.698\%$ with SD 0.142
percentage points; peak allocation increases about 0.089 GiB.
Its BF16 parameters, microbatches of two sequences with
four accumulation passes, ten monitor events, optimizer
initialization, and timing boundary differ from production.

Production C4 time increases by 7.88\%, 7.68\%, and 16.62\%
for fixed, Lite, and Full relative to its monitored baseline.
These ratios cannot be combined with the microbenchmark
to obtain total overhead relative to plain training.

Diffusion mean task-job wall times are 1,616.81 s for
reference sampling and 2,919.80, 2,974.48, 2,978.48,
and 2,977.43 s for baseline, fixed, Lite, and Full.
They include task-driver training, sampling, and metrics,
but exclude the wrapper's later official-test EMA assessment
and study preparation. The reference has no training updates
and is not an overhead denominator.

Audio successful training-loop times span
1,175.53--1,201.80 s for baseline,
1,308.80--1,333.21 s for fixed,
1,309.24--1,337.74 s for Lite, and
1,336.39--1,375.73 s for Full.
These include controller feedback and checkpoints but exclude
setup, final decoding, metrics, and saved-result audits.
Feature extraction is reused from the pilot.
Rejected initialization attempts and verification are separate.
No universal sub-10\% or end-to-end overhead claim follows.

\section{Discussion and limitations}\label{sec:discussion}

The central question is whether representation geometry can guide
regularization during active learning, and whether adapting its
coefficient improves on a separately selected constant. The combined aligned-DCR and MACC-Lite treatment improves
final-budget classification over entropy acquisition alone. The tuned
constant attains slightly higher mean accuracy, with inconclusive paired
differences. These findings support geometry-informed regularization
under the evaluated protocol and motivate a separate assessment of the
coefficient policy. Their interpretation connects the measured gains to the formulation
in Sections~\ref{sec:geometry}--\ref{sec:framework}, the analysis in
Section~\ref{sec:theory}, and the distinct task protocols.

\subsection{Implications for representation control}\label{sec:interpretation}

The ten-seed comparisons provide the strongest evidence for the
classification treatment. At 60\% annotations, its gains over entropy
are 1.742 percentage points on ResNet-50 and 1.834 points on ResNet-18;
both survive the planned Holm corrections under paired-$t$ and exact
tests. The feedback settings are unchanged across backbones, and the
content-duplicate sensitivity analysis preserves the inferential
decisions. Together with the earlier acquisition-rule ablation, these
results show that the representation treatment can complement a
standard uncertainty-based acquisition rule. The benefit does not
require the proposed geometric acquisition score.

The comparison with fixed strength addresses a more specific question.
Earlier six-seed studies favor Lite over the prespecified coefficient
0.03 under adjusted paired-$t$ inference. In the later study, independent
selection chooses 0.10, which yields slightly higher mean accuracy than
Lite on both backbones. The Lite-minus-fixed effects are $-0.188$ and
$-0.334$ percentage points, with individual 95\% intervals
$[-0.529,+0.153]$ and $[-0.741,+0.073]$
(Table~\ref{tab:feedback_contrasts}). These intervals permit a
disadvantage as well as a small advantage for Lite; they establish
neither superiority nor equivalence. The earlier and later findings
therefore concern different comparators. They support the importance
of coefficient selection without demonstrating that feedback is the
source of the gain over unregularized training. Lite exceeds the
selected schedule under adjusted inference only on ResNet-18, further
indicating that its relative performance depends on the control and
architecture.

The formulation contributes an explicit connection between the measured
quantity and the intervention: the supervised scatter residual enters
both the loss and the coefficient update. Section~\ref{sec:theory}
characterizes this connection through an isolated local descent result
and bounded coefficient responses. The recorded layer-dependent
trajectories confirm that the implemented controller responds to the
signal. These analytical and implementation properties make the
mechanism interpretable, but do not establish that alignment is necessary
for the predictive gain. The experiment does not independently identify
the contributions of normalization, the separation safeguard, feedback,
and the subsequently acquired examples. Moreover, a lower aggregate
scatter ratio need not improve minimum class separation or control
input sensitivity. The near-zero adversarial accuracy is consistent
with this distinction between representation geometry and robustness.

Full MACC adds learned actions but provides no established task benefit
over the relevant controls. Its result is conditional on the specified
state, reward, action space, and horizon; the present comparisons do
not identify which of these choices limits performance. For practical
evaluation, the tuned constant is therefore an essential reference
alongside Lite. Both transfer here without backbone-specific retuning,
so the transfer result alone cannot show that feedback saves tuning
effort. The measured computation in Section~\ref{sec:overhead} informs
this comparison, but its protocol-specific timing boundaries do not
support a universal overhead or deployment-efficiency claim. When
representative labeled feedback data permit coefficient selection, tuned
fixed DCR is the simplest regularized reference supported by these
comparisons. MACC-Lite is an explicit feedback alternative, but the
completed evidence does not establish a predictive or total-cost advantage
that would justify recommending it generally. Full MACC incurs policy
optimization without an established additional benefit under the tested
configuration.

Table~\ref{tab:claim_boundaries} summarizes the findings and the
scope of the corresponding inferences.

\begin{table*}[tbp]
\caption{Interpretation and scope of the reported evidence.
Analytical statements retain their stated assumptions; empirical
findings apply to the evaluated protocols.}
\label{tab:claim_boundaries}
\centering\small
\begin{tabularx}{\textwidth}{@{}p{0.23\textwidth}YY@{}}
\toprule
Question & Supported finding & Limit of the inference \\
\midrule
Objective--feedback alignment &
Shared supervised scatter residual, local isolated descent, and bounded coefficients &
Necessity of alignment, global optimization, and closed-loop stability are not established \\
Active learning &
Matched acquisition comparisons; entropy can be combined with aligned regularization &
No universal acquisition ranking or equal-performance 40\% annotation saving \\
MACC-Lite &
Final-budget gains over entropy on two backbones; earlier gains over specified 0.03 controls &
No established superiority to the tuned constant, equivalence, or isolated causal benefit of adaptation \\
Full MACC &
Policy updates execute, without an established added classification benefit &
Results concern the evaluated controller; learned query thresholds are not tested \\
Adversarial robustness &
Near-zero full-set AutoAttack accuracy at the specified radius &
No meaningful robustness gain or certification follows from the geometric objectives \\
Language modeling &
Small fixed spectral-DCR loss reduction on an unused local C4 shard &
Effect is specific to this spectral objective and language-modeling protocol \\
Diffusion &
All spectral conditions have higher FID than matched fine-tuning &
The result does not evaluate aligned classification DCR or other tuned generative objectives \\
Audio captioning &
Six-seed available-subset comparison with no established CIDEr-D gain &
No equivalence or full-original-benchmark comparability is established \\
\bottomrule
\end{tabularx}
\end{table*}

\subsection{Limits of classification and statistical inference}
\label{sec:classification_limits}

The classification evidence covers two ResNet backbones on one
100-class subset, with a class-balanced sampler and frozen
post-initialization BatchNorm statistics. Matching these choices
supports comparison within the protocol, but does not establish the
best training recipe for every method. In particular, UHerding-Margin
uses fixed shared-initial supervised features, capped candidate passes,
holdout temperature calibration, and warm-start training. Its ranking
cannot be generalized to uncertainty-weighted coverage methods under
their original or independently optimized protocols. ImageNet-1K and
ViT performance remain unevaluated.

The endpoint also matters. Mixed early-budget ordering on ResNet-50
shows that a final-budget gain is not a uniform improvement throughout
acquisition. No matched full-supervision comparison establishes an
equal-performance annotation saving. Geometric atypicality may select
unhelpful examples, while aggregate scatter can conceal a poorly
separated class pair. Training anchors and all-class assessment differ
in class support and transforms. Consequently, the measured geometry
should be interpreted alongside predictive outcomes at each budget.

New seeds quantify variability from initialization and data partitioning
within this benchmark. They do not create an untouched image benchmark:
official validation images were used for reporting during development.
The five exact training--validation content matches with conflicting
labels further prevent a claim of complete content separation.
Excluding them at evaluation leaves the comparison decisions unchanged,
but cannot undo their possible influence during training. The within-training
duplicates and affected feedback holdouts reported in
Section~\ref{sec:duplicate_sensitivity} also remain part of the study's
data provenance.

Statistical interpretation must retain the distinction between study
families. The original six-seed exact tests have the resolution limit
derived in Section~\ref{sec:statistics}; their nonrejection does not
independently demonstrate absent effects. The ten-seed family permits
rejection and retains both test types. Paired-$t$ inference remains
conditional on small-sample assumptions, and neither procedure provides
study-wide error control across the full sequence of experiments.
The later comparisons were specified before execution, but were informed
by earlier findings. Keeping the original and later families separate
preserves this chronology; it does not eliminate the influence of
sequential development on the research questions selected.

\subsection{Task dependence and generalization}
\label{sec:extensions}

The language, diffusion, and audio experiments evaluate spectral
regularization, whereas classification uses a supervised class-scatter
objective. Their joint value is to test the scope of geometry-based
interventions across tasks, with outcomes judged by the relevant task
metrics. They do not constitute a transfer test of one unchanged loss
or controller across modalities.

On C4, fixed spectral DCR produces a small loss reduction on an unused
local validation shard. This is evidence for that objective and
protocol, rather than for adaptive control. The result concerns a
locally selected single-shard training subset and pretrained GPT-2;
local non-use of the evaluation shard does not establish absence of
pretraining overlap. In CIFAR-10 diffusion, every spectral condition
worsens FID relative to matched unregularized fine-tuning. One possible
explanation is that the imposed spectral target restricts variation
needed by the denoising objective, but the present experiments do not
test that mechanism. The result covers one pretrained U-Net, sampler,
and training schedule, without determining whether a separately tuned
spectral objective could behave differently.

AudioCaps provides no established CIDEr-D benefit or statistical
equivalence. Interpretation is limited by the approximately 89\%-coverage
third-party subset, possible AST pretraining overlap, and four flagged
zero training clips retained under the frozen data policy. Exact
identity and waveform checks do not establish acoustic authenticity or
exclude all near-duplicates. These conditions limit comparability to
the full original benchmark and extrapolation to other audio tasks.

Across these studies, a geometric target is a modeling choice whose
usefulness depends on its relationship to the task loss. The positive
classification result, small fixed C4 effect, adverse diffusion result,
and inconclusive audio contrasts motivate task-specific validation of
that relationship. None supplies evidence of a universal cross-task
gain, and the spectral outcomes cannot directly validate or invalidate
the different aligned classification objective.

\subsection{Directions for further research}
\label{sec:future_directions}

The most direct next question is which component produces the
classification benefit. A factorial comparison of the scatter penalty,
separation safeguard, and coefficient policy, with matched control-selection
budgets, would distinguish their contributions more clearly.
Replaying common acquired sets across treatments would complement the
current adaptive-acquisition experiment: it would test the training
intervention conditional on a shared labeled sequence, while allowing
sets to evolve would retain the full pipeline effect. These designs
address different mechanisms and should be reported separately.

The practical value of feedback also warrants a test beyond backbone
transfer on the same subset. Predefined comparisons on new datasets,
architectures, class-imbalance regimes, and annotation budgets could
measure whether unchanged feedback settings reduce control-selection
effort while maintaining predictive performance. Such a study should
specify the primary endpoint and a meaningful effect size in advance,
use a sample size appropriate to that question, and account for both
tuning and training cost. Content-based grouping before partitioning,
with label conflicts resolved or quarantined, would address the
overlap limitations identified here.

For the spectral variants, varying target rank and regularization
strength would help test whether the observed task dependence reflects
the imposed geometry or its magnitude. Any robustness extension would
require an explicit link to input sensitivity or robust optimization,
followed by a dedicated adversarial evaluation. These are directions
for testing mechanisms and scope; their outcomes are not implied by
the present clean-accuracy or representation measurements.

\subsection{Reproducibility and data availability}
\label{sec:reproducibility}

Code, experiment configurations, numeric result records, and CPU statistical
reconstruction scripts are available at
\url{https://github.com/alqithami/CANL}.
The released code, numerical records, and reconstruction tools are pinned to revision
\href{https://github.com/alqithami/CANL/tree/caf2c0563ee4eeb502547cb235a18e924ab932b1}{\texttt{caf2c05}};
each protocol identifies its frozen training launcher.
Historical studies and later variants remain separate; licensed raw data
and large checkpoints are excluded. Appendix~\ref{app:provenance} documents
the retained evidence, the audio initialization workaround that limits
seed-only byte-identical reproduction, and the verification boundaries.
Statistical reconstruction from released records is distinct from
independently repeating training or checkpoint inference.

\section{Conclusion}\label{sec:conclusion}

This study introduced \model{}, an active-learning framework that
connects representation measurements to an explicit regularization
objective and its coefficient control. For classification, the same
normalized within-class to between-class scatter residual enters the
loss and feedback, alongside a safeguard against loss of class
separation. Separating acquisition, regularization, and coefficient
selection makes their roles open to controlled comparison. The
supporting analysis characterizes the geometry measurements,
conditional local regularizer descent, and bounded coefficient
responses, while identifying the additional conditions needed to
relate feature geometry to prediction and robustness. The full-update
condition makes gradient interaction explicit, and the constructive
counterexample shows that identical clean geometry and logits can coexist
with arbitrarily small robust radii when input sensitivity is unconstrained.

The principal empirical finding is a final-budget classification
benefit from combining aligned DCR and deterministic MACC-Lite with
entropy acquisition. At 60\% annotations on ImageNet-100, ten-seed
comparisons show gains of 1.742 percentage points on ResNet-50 and
1.834 points on ResNet-18 over entropy alone. Both survive the planned
multiplicity corrections, with unchanged feedback settings across
backbones. Independently tuned constants attain slightly higher mean
accuracy; the paired comparisons establish neither superiority nor
equivalence between Lite and those controls. The evidence therefore
supports the complete classification treatment without attributing
its benefit specifically to coefficient adaptation.

The complementary experiments define the scope of this finding.
Earlier acquisition comparisons show that the benefit can be obtained
without geometric acquisition, whereas learned Full MACC adds no
established classification advantage. Near-zero accuracy under the
specified AutoAttack evaluation precludes a meaningful robustness
claim. The separate spectral formulations yield a small fixed-DCR
loss reduction on C4, worse CIFAR-10 diffusion FID, and no established
AudioCaps captioning benefit. These task-dependent outcomes emphasize
that a geometric objective must be assessed through the predictive or
generative task it is intended to improve.

The study provides a specified representation-control mechanism,
conditional analytical guarantees, and controlled evidence for its use
with entropy-based active learning. It also establishes tuned fixed
regularization as a necessary empirical reference when assessing the
additional value of feedback. Further work should isolate component
effects and evaluate transfer across datasets and architectures under
matched tuning budgets. The released implementations and numerical
records support reproduction of the reported comparisons and further
investigation of geometry-informed learning.

\section*{Funding}
This research received no external funding or in-kind support.

\section*{Declaration of competing interest}
The author declares no known competing financial interests or personal
relationships that could have appeared to influence the work reported
in this paper.

\section*{CRediT authorship contribution statement}
Saad Alqithami: Conceptualization, Methodology, Data curation,
Formal analysis, Investigation, Project administration, Resources,
Software, Validation, Visualization, Writing -- original draft,
Writing -- review and editing.

\section*{Declaration of generative AI and AI-assisted technologies
in the manuscript preparation process}
ChatGPT (OpenAI) was used to assist with manuscript organization and
editing, mathematical exposition, programming and debugging,
interpretation of saved experimental artifacts, reference checking,
and preparation of the LaTeX/TikZ and Matplotlib figures.
Numerical figures use recorded experimental measurements or stated
statistical summaries; schematics describe the implemented workflow.
The scope of saved-result verification is documented in
Appendix~\ref{app:provenance}. The author is responsible for the
scientific design, interpretation, and content of this work.

\FloatBarrier
\appendix

\section{Supplementary results and development analysis}
\label{app:classification_details}

This appendix provides paired observations for the six-seed
classification follow-ups and AudioCaps evaluation, followed by the
development screen that motivated the aligned classification objective.
The separate ten-seed control study is reported in
Appendix~\ref{app:feedback_details}, and secondary C4 comparisons in
Appendix~\ref{app:c4_secondary}. Experimental units and comparison
families follow Section~\ref{sec:statistics}.

\subsection{Paired classification outcomes}
\label{app:classification_seeds}

Table~\ref{tab:entropy_lite_seeds} gives the observations underlying
the entropy-treatment comparisons in Sections~\ref{sec:entropy_lite}
and~\ref{sec:entropy_fixed}. Each row pairs methods under the same seed
and shared initialization. The fixed condition uses aligned DCR with
coefficient 0.03; it is distinct from the independently tuned constant
in Appendix~\ref{app:feedback_details}. Table~\ref{tab:uherding_seeds}
adds the UHerding-Margin comparison from
Section~\ref{sec:uherding_results}. The repeated entropy and Lite columns
are shared reference runs, not additional independent observations.
The tables retain the separate inferential families specified for
these follow-ups.

\begin{table*}[tbp]
\caption{Paired entropy-acquisition treatments at 60\% annotations on
ImageNet-100. Clean accuracies are percentages and differences are
percentage points; fixed aligned DCR uses coefficient 0.03. The entropy
and Lite reference runs are shared with the corresponding follow-up
comparisons. Robust-correct counts use standard AutoAttack at
$L_\infty$ radius $8/255$ on all 5,000 validation images. Fixed DCR
has 0/5,000 robust-correct images in every seed.}
\label{tab:entropy_lite_seeds}
\centering\small
\begin{tabularx}{\linewidth}{@{}c*{6}{>{\centering\arraybackslash}X}@{}}
\toprule
Seed & Entropy & Entropy + fixed & Entropy + Lite & Fixed $-$ entropy &
Lite $-$ fixed & Lite robust correct \\
\midrule
801 & 71.14 & 72.38 & 73.10 & +1.24 & +0.72 & 0/5,000 \\
802 & 71.76 & 72.22 & 73.48 & +0.46 & +1.26 & 1/5,000 \\
803 & 71.32 & 72.00 & 73.30 & +0.68 & +1.30 & 0/5,000 \\
804 & 71.74 & 72.78 & 73.70 & +1.04 & +0.92 & 0/5,000 \\
805 & 71.08 & 71.70 & 73.34 & +0.62 & +1.64 & 0/5,000 \\
806 & 71.74 & 72.50 & 73.24 & +0.76 & +0.74 & 0/5,000 \\
\midrule
Mean & 71.463 & 72.263 & 73.360 & +0.800 & +1.097 & --- \\
Sample SD & 0.320 & 0.381 & 0.208 & 0.288 & 0.364 & --- \\
\bottomrule
\end{tabularx}
\end{table*}

\begin{table*}[tbp]
\caption{Paired UHerding-Margin comparison at 60\% annotations on
ImageNet-100. Clean accuracies are percentages; the summary row reports
mean $\pm$ sample SD. Entropy and entropy-plus-Lite are the same reference
runs as in Table~\ref{tab:entropy_lite_seeds}. UHerding-Margin uses the
fixed shared-initial ResNet-50 features specified in
Section~\ref{sec:uherding_setup}. Robust-correct counts concern UHerding-Margin
under standard AutoAttack at $L_\infty$ radius $8/255$ on all 5,000
validation images.}
\label{tab:uherding_seeds}
\centering\small
\begin{tabularx}{\textwidth}{@{}YYYYY@{}}
\toprule
Seed & Entropy & UHerding-Margin & Entropy + aligned Lite &
UHerding robust correct \\
\midrule
801 & 71.14 & 71.58 & 73.10 & 0/5,000 \\
802 & 71.76 & 72.64 & 73.48 & 1/5,000 \\
803 & 71.32 & 72.38 & 73.30 & 0/5,000 \\
804 & 71.74 & 71.46 & 73.70 & 0/5,000 \\
805 & 71.08 & 71.40 & 73.34 & 1/5,000 \\
806 & 71.74 & 71.02 & 73.24 & 0/5,000 \\
\midrule
Mean $\pm$ SD & $71.463\pm0.320$ &
$71.747\pm0.626$ & $73.360\pm0.208$ & --- \\
\bottomrule
\end{tabularx}
\end{table*}

Table~\ref{tab:robustness_contrasts} completes the original robustness
family from Section~\ref{sec:statistics}. The same six paired seeds and
full-set attack endpoint are used for each contrast. All Holm-adjusted
paired-$t$ and exact values exceed 0.05. The individual intervals are
reported under the stated small-sample assumptions; they are not
population robustness certificates.

\begin{table*}[tbp]
\caption{All seven original AutoAttack accuracy contrasts at
$L_\infty$ radius $8/255$ on 5,000 ImageNet-100 validation images.
Effects and individual, unadjusted 95\% paired-$t$ intervals are in
percentage points over six paired seeds. Each test column is separately
Holm-adjusted across these seven comparisons. Aligned conditions use
geometric acquisition unless random acquisition is named explicitly.}
\label{tab:robustness_contrasts}
\centering\small
\begin{tabularx}{\textwidth}{@{}Yrrrr@{}}
\toprule
Target minus reference & Difference (pp) & 95\% CI (pp) &
Adjusted $t$ $p$ & Adjusted exact $p$ \\
\midrule
Aligned fixed DCR $-$ acquisition only & $+0.0033$ & $[-0.0125,+0.0191]$ & 1.0000 & 1.0000 \\
Aligned MACC-Lite $-$ aligned fixed DCR & $-0.0100$ & $[-0.0215,+0.0015]$ & 0.5291 & 1.0000 \\
Aligned Full MACC $-$ aligned fixed DCR & $-0.0100$ & $[-0.0215,+0.0015]$ & 0.5291 & 1.0000 \\
Aligned MACC-Lite $-$ entropy & $-0.0033$ & $[-0.0119,+0.0052]$ & 1.0000 & 1.0000 \\
Aligned Full MACC $-$ entropy & $-0.0033$ & $[-0.0191,+0.0125]$ & 1.0000 & 1.0000 \\
Acquisition only $-$ random & $0.0000$ & $[-0.0133,+0.0133]$ & 1.0000 & 1.0000 \\
Random + aligned fixed DCR $-$ random & $+0.0033$ & $[-0.0052,+0.0119]$ & 1.0000 & 1.0000 \\
\bottomrule
\end{tabularx}
\end{table*}

\subsection{Paired AudioCaps outcomes}
\label{app:audio_seeds}

Table~\ref{tab:audio_seeds} reports CIDEr-D for the six paired training
seeds in the AudioCaps study. Baseline, fixed, Lite, and Full follow
the task-specific spectral protocol in Section~\ref{sec:audio_setup};
the latter three names do not denote the aligned classification loss.
All conditions use the same 866 retained test recordings and reference
captions. These seed-level observations underlie the means and primary
contrasts in Tables~\ref{tab:audio_final} and~\ref{tab:audio_contrasts}.
The training seed remains the replication unit; the test recordings
are evaluation examples within each run.

\begin{table}[tbp]
\caption{Paired AudioCaps CIDEr-D outcomes on the 866 retained test
recordings, on the native metric scale. Fixed, Lite, and Full denote the
task-specific spectral conditions; larger values indicate better caption
agreement. Each row corresponds to one shared training seed.}
\label{tab:audio_seeds}
\centering\small
\begin{tabularx}{\linewidth}{@{}*{5}{>{\centering\arraybackslash}X}@{}}
\toprule
Seed & Baseline & Fixed & Lite & Full \\
\midrule
1301 & 0.699902 & 0.706662 & 0.695297 & 0.697474 \\
1302 & 0.683473 & 0.656485 & 0.683658 & 0.675072 \\
1303 & 0.634166 & 0.656615 & 0.658886 & 0.647493 \\
1304 & 0.681798 & 0.678712 & 0.663435 & 0.671009 \\
1305 & 0.672890 & 0.676179 & 0.633839 & 0.655220 \\
1306 & 0.669446 & 0.672400 & 0.665875 & 0.675435 \\
\bottomrule
\end{tabularx}
\end{table}

\subsection{Development screen for the classification objective}
\label{app:development_screen}

The aligned formulation was selected using a bounded development
screen before the end-to-end classification comparisons. Each of two
source seeds had a shared 200-step stabilization and five matched
2,000-step continuations on a fixed labeled set. The selection criterion
required a favorable geometric direction and an assessment-loss
increase no larger than 0.02 nats in either seed. This tolerance guided
development; it was not a statistical noninferiority test.

Relative to no regularization, spectral DCR changed the geometric-mean
trace ratio by +0.077\% and +0.054\% in the two seeds. The corresponding
changes were $-2.009\%$ and $-2.365\%$ for aligned fixed DCR,
$-4.426\%$ and $-5.015\%$ for aligned Lite, and $-1.691\%$ and
$-1.300\%$ for aligned Full. Loss changes were mixed but remained
within the screen's tolerance. These observations motivated testing
the aligned objective under the full acquisition and training protocol;
they do not themselves establish annotation efficiency or robustness.
The development observations remain separate from the six-seed and
ten-seed performance comparisons.

\section{Control-study seed outcomes and sensitivity}
\label{app:feedback_details}

The control study separates coefficient selection from paired
confirmation and backbone transfer. This appendix supplements
Section~\ref{sec:feedback_results} with the individual final outcomes
and the content-duplicate sensitivity analysis described in
Section~\ref{sec:duplicate_sensitivity}.

\subsection{Confirmation and transfer outcomes}
\label{app:control_seed_values}

Table~\ref{tab:feedback_seeds} reports final official-validation
accuracy for every confirmation and transfer seed. All four treatments
use entropy acquisition. Constant and scheduled strengths were selected
using the distinct ResNet-50 tuning seeds 41001--41003
(Table~\ref{tab:tuning_candidates}) and then held fixed for both
backbones. Those tuning runs do not enter the paired confirmation or
transfer comparisons. Within each backbone, rows retain the seed
pairing used for the effects in Table~\ref{tab:feedback_contrasts}.

\begin{table*}[tbp]
\caption{Control-study final top-1 accuracy (\%) at 60\% annotations
on all 5,000 ImageNet-100 validation images. Each row pairs four entropy-acquisition
treatments under one seed. Fixed and scheduled strengths were selected
before these runs and transferred unchanged to ResNet-18.}
\label{tab:feedback_seeds}
\centering\small
\begin{tabular*}{\textwidth}{@{\extracolsep{\fill}}lrrrrr@{}}
\toprule
Backbone & Seed & Entropy & Lite & Tuned fixed & Tuned schedule \\
\midrule
ResNet-50 & 42001 & 70.78 & 73.16 & 73.10 & 72.40 \\
ResNet-50 & 42002 & 71.22 & 72.68 & 73.32 & 72.64 \\
ResNet-50 & 42003 & 72.04 & 73.56 & 73.30 & 72.98 \\
ResNet-50 & 42004 & 72.76 & 73.66 & 74.68 & 73.40 \\
ResNet-50 & 42005 & 70.28 & 72.84 & 73.52 & 72.62 \\
ResNet-50 & 42006 & 71.50 & 73.62 & 73.58 & 72.92 \\
ResNet-50 & 42007 & 72.56 & 74.14 & 74.06 & 73.34 \\
ResNet-50 & 42008 & 72.12 & 73.16 & 73.60 & 73.44 \\
ResNet-50 & 42009 & 72.30 & 74.22 & 73.76 & 73.14 \\
ResNet-50 & 42010 & 71.22 & 73.16 & 73.16 & 73.34 \\
ResNet-18 & 43001 & 71.66 & 73.02 & 73.06 & 72.56 \\
ResNet-18 & 43002 & 70.48 & 72.96 & 72.90 & 72.36 \\
ResNet-18 & 43003 & 71.04 & 73.28 & 73.60 & 72.86 \\
ResNet-18 & 43004 & 71.28 & 73.02 & 74.34 & 73.14 \\
ResNet-18 & 43005 & 71.40 & 73.18 & 72.54 & 72.68 \\
ResNet-18 & 43006 & 70.72 & 72.52 & 73.14 & 72.32 \\
ResNet-18 & 43007 & 71.72 & 73.88 & 74.12 & 73.88 \\
ResNet-18 & 43008 & 71.38 & 72.88 & 73.82 & 72.60 \\
ResNet-18 & 43009 & 71.50 & 73.48 & 74.14 & 72.94 \\
ResNet-18 & 43010 & 71.34 & 72.64 & 72.54 & 72.28 \\
\bottomrule
\end{tabular*}
\end{table*}

\subsection{Sensitivity to content-duplicate exclusion}
\label{app:control_sensitivity}

Table~\ref{tab:feedback_sensitivity} excludes the same five
content-overlapping validation examples from every treatment, retaining
4,995 images. It preserves the six primary contrasts and applies Holm
correction separately to the paired-$t$ and exact test values across
that family. All adjusted decisions agree with the primary analysis.
This post-hoc check assesses the sensitivity of the reported comparisons;
it does not reverse possible training exposure or reproduce training on
a deduplicated dataset. The planned 5,000-image endpoint remains primary.

\begin{table*}[tbp]
\caption{Content-duplicate sensitivity on 4,995 validation images.
Effects and individual, unadjusted 95\% paired-$t$ intervals are in
percentage points. Fixed and schedule denote the selected controls.
Each test column is separately Holm-adjusted across the same six
contrasts as in the primary analysis; all adjusted decisions are
unchanged. Positive effects favor Lite.}
\label{tab:feedback_sensitivity}
\centering\small
\begin{tabularx}{\textwidth}{@{}lYrrrr@{}}
\toprule
Backbone & Contrast & Difference & 95\% CI & Adjusted $t$ $p$ & Adjusted exact $p$ \\
\midrule
ResNet-50 & Lite $-$ entropy & $+1.742$ & $[+1.358,+2.125]$ & $1.42\!\times\!10^{-5}$ & 0.01172 \\
ResNet-50 & Lite $-$ fixed & $-0.188$ & $[-0.527,+0.151]$ & 0.2406 & 0.25391 \\
ResNet-50 & Lite $-$ schedule & $+0.398$ & $[+0.075,+0.722]$ & 0.06367 & 0.08789 \\
ResNet-18 & Lite $-$ entropy & $+1.836$ & $[+1.564,+2.108]$ & $5.86\!\times\!10^{-7}$ & 0.01172 \\
ResNet-18 & Lite $-$ fixed & $-0.328$ & $[-0.732,+0.075]$ & 0.1978 & 0.20703 \\
ResNet-18 & Lite $-$ schedule & $+0.322$ & $[+0.155,+0.490]$ & 0.00744 & 0.03125 \\
\bottomrule
\end{tabularx}
\end{table*}

\section{Secondary C4 outcomes}
\label{app:c4_secondary}

The C4 results in Section~\ref{sec:c4_results} use NLL as the primary
endpoint. Table~\ref{tab:c4_secondary} provides the accompanying
perplexity, pooled token ECE, and token-accuracy contrasts on the unused
local validation shard. They evaluate the same final checkpoints from
six training seeds; the additional shard does not introduce new training
replications. Perplexity is the exponential of NLL and is reported as
a descriptive transformation, not an independent primary outcome.

Each endpoint has three regularized-condition-minus-baseline contrasts.
Individual paired-$t$ confidence intervals are unadjusted; Holm
correction is applied separately to the paired-$t$ and exact sign-flip
values within each endpoint's three-contrast family. The adjusted exact
values are reconstructed from the archived raw values. This correction
does not control multiplicity jointly across endpoints. Positive
differences indicate worse perplexity or ECE, whereas negative
differences indicate worse token accuracy. The scaling shown in each
panel applies to both effects and interval limits; test $p$-values are
not rescaled.

\begin{table*}[tbp]
\caption{Secondary C4 comparisons on the unused local validation shard.
Each panel reports three regularized-condition-minus-baseline contrasts
with individual, unadjusted 95\% paired-$t$ intervals. The paired-$t$ and
exact sign-flip $p$-values are separately Holm-adjusted within each panel.
Multiply effects and interval limits by the scale shown in the panel
heading to recover their original units; token ECE and accuracy use
proportions rather than percentages.}
\label{tab:c4_secondary}
\centering\small
\begin{tabularx}{\textwidth}{@{}Yrrrrrr@{}}
\toprule
Condition minus baseline & Difference & 95\% CI &
Raw $t$ $p$ & Holm $t$ $p$ & Raw exact $p$ & Holm exact $p$ \\
\midrule
\multicolumn{7}{@{}l}{\emph{Perplexity: differences in $10^{-2}$}} \\
Fixed spectral DCR & $-0.86260$ & $[-0.96757,-0.75763]$ & $4.4061\times10^{-6}$ & $8.8122\times10^{-6}$ & 0.03125 & 0.09375 \\
Spectral MACC-Lite & $+5.74906$ & $[+5.54778,+5.95034]$ & $8.8781\times10^{-9}$ & $2.6634\times10^{-8}$ & 0.03125 & 0.09375 \\
Spectral Full MACC & $+0.69705$ & $[-0.55390,+1.94800]$ & 0.21147 & 0.21147 & 0.21875 & 0.21875 \\
\midrule
\multicolumn{7}{@{}l}{\emph{Pooled token ECE: differences in $10^{-4}$}} \\
Fixed spectral DCR & $-1.07333$ & $[-4.14827,+2.00161]$ & 0.41069 & 0.78043 & 0.37500 & 0.68750 \\
Spectral MACC-Lite & $+8.07857$ & $[+6.41836,+9.73877]$ & $5.7942\times10^{-5}$ & 0.00017 & 0.03125 & 0.09375 \\
Spectral Full MACC & $+1.14058$ & $[-1.97737,+4.25853]$ & 0.39021 & 0.78043 & 0.34375 & 0.68750 \\
\midrule
\multicolumn{7}{@{}l}{\emph{Token accuracy: differences in $10^{-4}$}} \\
Fixed spectral DCR & $-0.38336$ & $[-1.21617,+0.44945]$ & 0.28989 & 0.28989 & 0.28125 & 0.28125 \\
Spectral MACC-Lite & $-1.47509$ & $[-1.89933,-1.05086]$ & 0.00029 & 0.00088 & 0.03125 & 0.09375 \\
Spectral Full MACC & $-0.26752$ & $[-0.63924,+0.10420]$ & 0.12355 & 0.2471 & 0.09375 & 0.18750 \\

\bottomrule
\end{tabularx}
\end{table*}

\section{Data provenance and reproducibility}
\label{app:provenance}

Reproducibility depends on the data sources, accepted model states, and
records available for verification. This appendix documents those
boundaries for the protocols in Section~\ref{sec:experiments} and the
materials released in Section~\ref{sec:reproducibility}.

\subsection{Data sources and evaluation partitions}
\label{app:data_sources}

ImageNet-100 uses the CMC class selection from the original ImageNet
images, preserved without reencoding. The selected inventory contains
126,689 training and 5,000 validation images. Content-hash overlaps and
label conflicts are described in Section~\ref{sec:duplicate_sensitivity}.
The recorded checksums identify the copies used and support integrity
checks; they do not independently certify upstream data provenance.

C4 training uses \path{en/c4-train.00000-of-01024.json.gz}, and controller
feedback and development use
\path{en/c4-validation.00000-of-00008.json.gz}. Final unused-shard
evaluation uses \path{en/c4-validation.00002-of-00008.json.gz}.
The latter shard was not used in local training or development; absence
of GPT-2 pretraining overlap is not established.

The CIFAR-10 diffusion study partitions the original training images
for fine-tuning and internal assessment, while FID uses the complete
original training distribution as its reference. The partition and
official-test denoising assessment are specified in
Section~\ref{sec:diffusion_setup}; the internal holdout is not claimed
to be unseen during model pretraining. AudioCaps uses original
annotations matched to the available \path{OpenSound/AudioCaps}
recordings. The coverage, exclusions, waveform checks, and possible
AST pretraining overlap are documented in Section~\ref{sec:audio_setup}.
These data-specific conditions continue to apply when reproducing the
reported comparisons.

\subsection{Verification of saved results and execution records}
\label{app:record_verification}

Verification of the initial classification study reconstructs
metrics from saved validation, feedback, and attack arrays. Checks of
the exported numerical records also reproduce paired effects and tests
across classification, C4, diffusion, and audio, including the
entropy-treatment budget curves and layer-geometry summaries. The
server-recorded audio verification additionally recomputes caption metrics
from saved predictions and original references. These checks establish
the arithmetic of the retained records; they do not independently
regenerate predictions, adversarial examples, generated images, or
captions from raw inputs and checkpoints.

For the control study, an independent NumPy/SciPy reconstruction checks
all 104 final holdout arrays and 80 final official-validation arrays,
including labels, logits, top-1/top-5 accuracy, cross-entropy, ECE, and
class-scatter sufficient statistics. It also checks 543 training traces
containing 4,155,880 recorded optimizer steps, 520 acquisition selections,
and 40,872 control windows. Source and payload digests, split
reconstruction, target calibration, learning-rate schedules, regularizer
arithmetic, moving residuals, and applied coefficients are consistent
with the recorded protocol.

This reconstruction does not rerun checkpoint inference, recover
individual feature vectors, or independently recompute every earlier-round
prediction and acquisition-candidate score. Earlier-round learning
curves use hash-verified summaries. The server exporter records
verification of 543 checkpoint hashes; the checkpoint bytes were not
included in the independent reconstruction. This distinction separates
verification of saved evidence from a new execution of the experiment.

\subsection{Audio model initialization}
\label{app:audio_initialization}

Audio initialization required a bounded retry after a pre-training
mismatch in the encoder positional parameter. A construction was
accepted only when its whole-model digest matched the original
seed-specific record. No training outcome or evaluation score entered
that acceptance rule. The completed outputs, seeds, and acceptance
records were preserved.

The acceptance procedure enforces a match to the recorded model state,
but does not resolve the underlying initialization nondeterminism.
Accepted-state hashes alone cannot reconstruct their tensors, and the
availability of all corresponding initial-state binaries is not
established. Exact regeneration from seeds alone is therefore not
claimed. This limitation concerns reconstruction of the initial state;
it does not change the paired comparisons computed from the retained
completed runs.

\subsection{Released materials and access boundaries}
\label{app:artifact_availability}

The repository identifies dataset revisions, source-file digests,
experiment configurations, and exported per-seed measurements, together
with the numerical reconstruction workflows. Raw licensed images,
corpus text, audio recordings, and large model checkpoints are not
redistributed. Original data-access conditions and the audio subset's
coverage and provenance limitations continue to apply. The release
supports inspection and statistical reconstruction within the evidence
boundaries described above; independent training or inference also
requires the relevant data and model assets.


\FloatBarrier


\begingroup
\small


\begin{thebibliography}{99}

\bibitem{settles2009active}
B. Settles,
\emph{Active Learning Literature Survey},
Computer Sciences Technical Report 1648,
University of Wisconsin--Madison, 2009; updated 26 January 2010.
\url{https://burrsettles.com/pub/settles.activelearning.pdf}

\bibitem{sener2018active}
O. Sener, S. Savarese,
Active learning for convolutional neural networks:
A core-set approach,
in: \emph{International Conference on Learning Representations}, 2018.
\url{https://arxiv.org/abs/1708.00489}

\bibitem{ash2020badge}
J. T. Ash, C. Zhang, A. Krishnamurthy, J. Langford, A. Agarwal,
Deep batch active learning by diverse, uncertain gradient lower bounds,
in: \emph{International Conference on Learning Representations}, 2020.
\url{https://openreview.net/forum?id=ryghZJBKPS}

\bibitem{papyan2020prevalence}
V. Papyan, X. Y. Han, D. L. Donoho,
Prevalence of neural collapse during the terminal phase of
deep learning training,
\emph{Proceedings of the National Academy of Sciences}
117(40) (2020) 24652--24663.
\doi{10.1073/pnas.2015509117}

\bibitem{chen2018gradnorm}
Z. Chen, V. Badrinarayanan, C.-Y. Lee, A. Rabinovich,
GradNorm: Gradient normalization for adaptive loss balancing in deep multitask networks,
in: \emph{Proceedings of the 35th International Conference
on Machine Learning}, PMLR 80, 2018, pp. 794--803.
\url{https://proceedings.mlr.press/v80/chen18a.html}

\bibitem{zhu2021geometric}
Z. Zhu, T. Ding, J. Zhou, X. Li, C. You, J. Sulam, Q. Qu,
A geometric analysis of neural collapse with unconstrained features,
in: \emph{Advances in Neural Information Processing Systems}
34, 2021, pp. 29820--29834.
\url{https://arxiv.org/abs/2105.02375}

\bibitem{sukenik2024lowrank}
P. S\'{u}ken\'{i}k, C. Lampert, M. Mondelli,
Neural collapse vs.\ low-rank bias: Is deep neural collapse really optimal?,
in: \emph{Advances in Neural Information Processing Systems}
37, 2024, pp. 138250--138288.
\doi{10.52202/079017-4388}

\bibitem{sukenik2025resnets}
P. S\'{u}ken\'{i}k, C. H. Lampert, M. Mondelli,
Neural collapse is globally optimal in deep regularized ResNets and transformers,
in: \emph{Advances in Neural Information Processing Systems}
38, 2025, pp. 43573--43604.
\doi{10.52202/085713-1450}
\href{https://proceedings.neurips.cc/paper_files/paper/2025/hash/3e18f7152b5e1696e73722dc5121b6f2-Abstract-Conference.html}{NeurIPS 2025 proceedings}.

\bibitem{hong2026shallow}
W. Hong, S. Ling,
Beyond unconstrained features: Neural collapse for shallow neural networks
with general data,
\emph{Journal of Machine Learning Research} 27(82) (2026) 1--46.
\url{https://jmlr.org/papers/v27/24-1429.html}

\bibitem{wang2026alignment}
J. Wang, Z. Gao, Z. Xie,
Space alignment matters: The missing piece for inducing
neural collapse in long-tailed learning,
\emph{Proceedings of the AAAI Conference on Artificial Intelligence}
40(31) (2026) 26302--26309.
\doi{10.1609/aaai.v40i31.39835}

\bibitem{alcala2026orthoplex}
J. Alcala, R. Andreeva, V. A. Kobzar, D. G. Mixon,
S. Na, S. Sule, Y. Xie,
Neural collapse in the orthoplex regime,
arXiv preprint arXiv:2603.20587, 2026.
\url{https://arxiv.org/abs/2603.20587}

\bibitem{fisher1936}
R. A. Fisher,
The use of multiple measurements in taxonomic problems,
\emph{Annals of Eugenics} 7(2) (1936) 179--188.
\doi{10.1111/j.1469-1809.1936.tb02137.x}

\bibitem{dorfer2016deeplda}
M. Dorfer, R. Kelz, G. Widmer,
Deep linear discriminant analysis,
in: \emph{International Conference on Learning Representations}, 2016.
\url{https://arxiv.org/abs/1511.04707}

\bibitem{wen2016center}
Y. Wen, K. Zhang, Z. Li, Y. Qiao,
A discriminative feature learning approach for deep face recognition,
in: \emph{European Conference on Computer Vision}, LNCS 9911, 2016, pp. 499--515.
\doi{10.1007/978-3-319-46478-7_31}

\bibitem{khosla2020supervised}
P. Khosla, P. Teterwak, C. Wang, A. Sarna, Y. Tian,
P. Isola, A. Maschinot, C. Liu, D. Krishnan,
Supervised contrastive learning,
in: \emph{Advances in Neural Information Processing Systems}
33, 2020, pp. 18661--18673.
\href{https://proceedings.neurips.cc/paper/2020/hash/d89a66c7c80a29b1bdbab0f2a1a94af8-Abstract.html}
{NeurIPS 2020 proceedings}.

\bibitem{bardes2022vicreg}
A. Bardes, J. Ponce, Y. LeCun,
VICReg: Variance-invariance-covariance regularization for self-supervised learning,
in: \emph{International Conference on Learning Representations}, 2022.
\url{https://arxiv.org/abs/2105.04906}

\bibitem{gal2017bayesian}
Y. Gal, R. Islam, Z. Ghahramani,
Deep Bayesian active learning with image data,
in: \emph{Proceedings of the 34th International Conference
on Machine Learning}, PMLR 70, 2017, pp. 1183--1192.
\url{https://proceedings.mlr.press/v70/gal17a.html}

\bibitem{li2024noise}
X. Li, P. Yang, Y. Gu, X. Zhan, T. Wang, M. Xu, C. Xu,
Deep active learning with noise stability,
\emph{Proceedings of the AAAI Conference on Artificial Intelligence}
38(12) (2024) 13655--13663.
\doi{10.1609/aaai.v38i12.29270}

\bibitem{bae2025uncertainty}
W. Bae, D. J. Sutherland, G. L. Oliveira,
Uncertainty herding: One active learning method for all label budgets,
in: \emph{International Conference on Learning Representations}, 2025, pp. 788--805.
\href{https://proceedings.iclr.cc/paper_files/paper/2025/hash/0302fb83c62991efbccf0a003e4f5a92-Abstract-Conference.html}
{ICLR 2025 proceedings}.

\bibitem{jung2026aggregation}
C. Jung, S. Salehi, A. Schmeink,
Active learning using aggregated acquisition functions:
Accuracy and sustainability analysis,
\emph{IEEE Journal of Selected Topics in Signal Processing}, 2026, pp. 1--15.
\doi{10.1109/JSTSP.2026.3673105}
Author version: \url{https://arxiv.org/abs/2602.07440}

\bibitem{cui2026trustset}
G. Cui, Y. Liu, P. Wang, H. Hsu, X. Sun, X. Hao, Z. Liu,
Labeled TrustSet guided: Batch active learning with reinforcement learning,
in: \emph{International Joint Conference on Neural Networks}, 2026,
author manuscript.
\url{https://arxiv.org/abs/2604.12303}

\bibitem{kendall2018uncertainty}
A. Kendall, Y. Gal, R. Cipolla,
Multi-task learning using uncertainty to weigh losses for scene geometry and semantics,
in: \emph{IEEE Conference on Computer Vision and Pattern Recognition},
2018, pp. 7482--7491.
\doi{10.1109/CVPR.2018.00781}

\bibitem{williams1992}
R. J. Williams,
Simple statistical gradient-following algorithms
for connectionist reinforcement learning,
\emph{Machine Learning} 8(3--4) (1992) 229--256.
\doi{10.1007/BF00992696}

\bibitem{su2024robustcollapse}
J. Su, Y. S. Zhang, N. Tsilivis, J. Kempe,
On the robustness of neural collapse and the neural collapse of robustness,
\emph{Transactions on Machine Learning Research}, 2024.
\url{https://arxiv.org/abs/2311.07444}

\bibitem{croce2020autoattack}
F. Croce, M. Hein,
Reliable evaluation of adversarial robustness with an ensemble
of diverse parameter-free attacks,
in: \emph{Proceedings of the 37th International Conference
on Machine Learning}, PMLR 119, 2020, pp. 2206--2216.
\url{https://proceedings.mlr.press/v119/croce20b.html}

\bibitem{guo2017calibration}
C. Guo, G. Pleiss, Y. Sun, K. Q. Weinberger,
On calibration of modern neural networks,
in: \emph{Proceedings of the 34th International Conference
on Machine Learning}, PMLR 70, 2017, pp. 1321--1330.
\url{https://proceedings.mlr.press/v70/guo17a.html}

\bibitem{tian2020cmc}
Y. Tian, D. Krishnan, P. Isola,
Contrastive multiview coding,
in: \emph{European Conference on Computer Vision},
LNCS 12356, 2020, pp. 776--794.
\doi{10.1007/978-3-030-58621-8_45}

\bibitem{deng2009imagenet}
J. Deng, W. Dong, R. Socher, L.-J. Li, K. Li, L. Fei-Fei,
ImageNet: A large-scale hierarchical image database,
in: \emph{IEEE Conference on Computer Vision and Pattern Recognition},
2009, pp. 248--255.
\doi{10.1109/CVPR.2009.5206848}

\bibitem{he2016deep}
K. He, X. Zhang, S. Ren, J. Sun,
Deep residual learning for image recognition,
in: \emph{IEEE Conference on Computer Vision and Pattern Recognition},
2016, pp. 770--778.
\doi{10.1109/CVPR.2016.90}

\bibitem{holm1979}
S. Holm,
A simple sequentially rejective multiple test procedure,
\emph{Scandinavian Journal of Statistics}
6(2) (1979) 65--70.
\url{https://www.jstor.org/stable/4615733}

\bibitem{radford2019gpt2}
A. Radford, J. Wu, R. Child, D. Luan, D. Amodei, I. Sutskever,
\emph{Language Models are Unsupervised Multitask Learners},
OpenAI technical report, 2019.
\href{https://cdn.openai.com/better-language-models/language_models_are_unsupervised_multitask_learners.pdf}
{Technical report}.

\bibitem{raffel2020}
C. Raffel, N. Shazeer, A. Roberts, K. Lee, S. Narang,
M. Matena, Y. Zhou, W. Li, P. J. Liu,
Exploring the limits of transfer learning with a
unified text-to-text transformer,
\emph{Journal of Machine Learning Research}
21(140) (2020) 1--67.
\url{https://jmlr.org/papers/v21/20-074.html}

\bibitem{ho2020denoising}
J. Ho, A. Jain, P. Abbeel,
Denoising diffusion probabilistic models,
in: \emph{Advances in Neural Information Processing Systems}
33, 2020, pp. 6840--6851.
\url{https://arxiv.org/abs/2006.11239}

\bibitem{krizhevsky2009cifar}
A. Krizhevsky,
\emph{Learning Multiple Layers of Features from Tiny Images},
technical report, University of Toronto, 2009.
\url{https://www.cs.toronto.edu/~kriz/learning-features-2009-TR.pdf}

\bibitem{song2021ddim}
J. Song, C. Meng, S. Ermon,
Denoising diffusion implicit models,
in: \emph{International Conference on Learning Representations}, 2021.
\url{https://arxiv.org/abs/2010.02502}

\bibitem{heusel2017fid}
M. Heusel, H. Ramsauer, T. Unterthiner, B. Nessler, S. Hochreiter,
GANs trained by a two time-scale update rule converge
to a local Nash equilibrium,
in: \emph{Advances in Neural Information Processing Systems}
30, 2017, pp. 6626--6637.
\url{https://arxiv.org/abs/1706.08500}

\bibitem{binkowski2018mmd}
M. Bi\'nkowski, D. J. Sutherland, M. Arbel, A. Gretton,
Demystifying MMD GANs,
in: \emph{International Conference on Learning Representations}, 2018.
\url{https://arxiv.org/abs/1801.01401}

\bibitem{salimans2016gan}
T. Salimans, I. Goodfellow, W. Zaremba, V. Cheung,
A. Radford, X. Chen,
Improved techniques for training GANs,
in: \emph{Advances in Neural Information Processing Systems}
29, 2016, pp. 2234--2242.
\url{https://arxiv.org/abs/1606.03498}

\bibitem{kim2019audiocaps}
C. D. Kim, B. Kim, H. Lee, G. Kim,
AudioCaps: Generating captions for audios in the wild,
in: \emph{Proceedings of the 2019 Conference of the North American Chapter of the Association for Computational Linguistics: Human Language Technologies}, Volume 1 (Long and Short Papers), 2019, pp. 119--132.
\doi{10.18653/v1/N19-1011}

\bibitem{gong2021ast}
Y. Gong, Y.-A. Chung, J. Glass,
AST: Audio Spectrogram Transformer,
in: \emph{Interspeech}, 2021, pp. 571--575.
\doi{10.21437/Interspeech.2021-698}
\url{https://www.isca-archive.org/interspeech_2021/gong21b_interspeech.html}

\bibitem{gemmeke2017audioset}
J. F. Gemmeke, D. P. W. Ellis, D. Freedman, A. Jansen,
W. Lawrence, R. C. Moore, M. Plakal, M. Ritter,
Audio Set: An ontology and human-labeled dataset for audio events,
in: \emph{IEEE International Conference on Acoustics,
Speech and Signal Processing}, 2017, pp. 776--780.
\doi{10.1109/ICASSP.2017.7952261}
\href{https://research.google/pubs/audio-set-an-ontology-and-human-labeled-dataset-for-audio-events/}
{Author publication record}.

\bibitem{vedantam2015cider}
R. Vedantam, C. L. Zitnick, D. Parikh,
CIDEr: Consensus-based image description evaluation,
in: \emph{IEEE Conference on Computer Vision and Pattern Recognition},
2015, pp. 4566--4575.
\doi{10.1109/CVPR.2015.7299087}
\href{https://openaccess.thecvf.com/content_cvpr_2015/html/Vedantam_CIDEr_Consensus-Based_Image_2015_CVPR_paper.html}
{CVPR open-access paper}.

\bibitem{papineni2002bleu}
K. Papineni, S. Roukos, T. Ward, W.-J. Zhu,
Bleu: A method for automatic evaluation of machine translation,
in: \emph{Proceedings of the 40th Annual Meeting of the
Association for Computational Linguistics}, 2002, pp. 311--318.
\doi{10.3115/1073083.1073135}

\bibitem{lin2004rouge}
C.-Y. Lin,
ROUGE: A package for automatic evaluation of summaries,
in: \emph{Text Summarization Branches Out}, 2004, pp. 74--81.
\url{https://aclanthology.org/W04-1013/}

\bibitem{denkowski2014meteor}
M. Denkowski, A. Lavie,
Meteor Universal: Language specific translation evaluation
for any target language,
in: \emph{Proceedings of the Ninth Workshop on Statistical
Machine Translation}, 2014, pp. 376--380.
\doi{10.3115/v1/W14-3348}

\bibitem{anderson2016spice}
P. Anderson, B. Fernando, M. Johnson, S. Gould,
SPICE: Semantic propositional image caption evaluation,
in: \emph{European Conference on Computer Vision}, LNCS 9909, 2016, pp. 382--398.
\doi{10.1007/978-3-319-46454-1_24}

\bibitem{liu2017spider}
S. Liu, Z. Zhu, N. Ye, S. Guadarrama, K. Murphy,
Improved image captioning via policy gradient optimization of SPIDEr,
in: \emph{IEEE International Conference on Computer Vision},
2017, pp. 873--881.
\doi{10.1109/ICCV.2017.100}
\href{https://openaccess.thecvf.com/content_iccv_2017/html/Liu_Improved_Image_Captioning_ICCV_2017_paper.html}
{ICCV open-access paper}.


\bibitem{hein2017formal}
M. Hein, M. Andriushchenko,
Formal guarantees on the robustness of a classifier against
adversarial manipulation,
in: \emph{Advances in Neural Information Processing Systems},
30, 2017, pp. 2266--2276.
\href{https://proceedings.neurips.cc/paper/2017/hash/e077e1a544eec4f0307cf5c3c721d944-Abstract.html}
{Conference publication record}.

\end{thebibliography}

\endgroup
\end{document}